\pgfplotsset{compat=1.7}
\newcommand{\nosemic}{\renewcommand{\@endalgocfline}{\relax}}% Drop semi-colon ;
\newcommand{\dosemic}{\renewcommand{\@endalgocfline}{\algocf@endline}}% Reinstate semi-colon ;
\let\oldnl\nl% Store \nl in \oldnl
\newcommand{\nonl}{\renewcommand{\nl}{\let\nl\oldnl}}% Remove line number for one line
\DeclareMathOperator*{\argmax}{argmax}
\DeclareMathOperator*{\argmin}{argmin}
\definecolor{darkgreen}{RGB}{204,102,0}
\newcommand{\cA}{\mathcal{A}}
\newcommand{\cL}{\mathcal{L}}
\newcommand{\cM}{\mathcal{M}} \newcommand{\cN}{\mathcal{N}}
 \newcommand{\cY}{\mathcal{Y}}
 \newcommand{\cX}{\mathcal{X}}
\newcommand{\EE}{\mathbb{E}} \newcommand{\RR}{\mathbb{R}}
\newcommand{\pr}[1]{\operatorname{Pr}\left(#1\right)}
\newcommand{\tv}[2]{\operatorname{d_{TV}}(#1,#2)}
\newcommand{\distdiff}{\eta}
\newcommand{\bound}{B}
\definecolor{maincolor}{HTML}{032F99}
\definecolor{blue}{RGB}{31,64,122}
\definecolor{red}{HTML}{e05a87} 
\newcommand{\oset}[3][0ex]{%
  \mathrel{\mathop{#3}\limits^{
    \vbox to#1{\kern-2\ex@
    \hbox{$\scriptstyle#2$}\vss}}}}
\newcommand{\optimal}[1]{\oset{\scalebox{.5}{$\star$}}{#1}}
\LetLtxMacro\orgvdots\vdots
\LetLtxMacro\orgddots\ddots
\DeclareRobustCommand\vdots{%
  \mathpalette\@vdots{}%
}
\newcommand*{\@vdots}[2]{%
  % #1: math style
  % #2: unused
  \sbox0{$#1\cdotp\cdotp\cdotp\m@th$}%
  \sbox2{$#1.\m@th$}%
  \vbox{%
    \dimen@=\wd0 %
    \advance\dimen@ -3\ht2 %
    \kern.5\dimen@
    % remove side bearings
    \dimen@=\wd2 %
    \advance\dimen@ -\ht2 %
    \dimen2=\wd0 %
    \advance\dimen2 -\dimen@
    \vbox to \dimen2{%
      \offinterlineskip
      \copy2 \vfill\copy2 \vfill\copy2 %
    }%
  }%
}
\DeclareRobustCommand\ddots{%
  \mathinner{%
    \mathpalette\@ddots{}%
    \mkern\thinmuskip
  }%
}
\newcommand*{\@ddots}[2]{%
  % #1: math style
  % #2: unused
  \sbox0{$#1\cdotp\cdotp\cdotp\m@th$}%
  \sbox2{$#1.\m@th$}%
  \vbox{%
    \dimen@=\wd0 %
    \advance\dimen@ -3\ht2 %
    \kern.5\dimen@
    % remove side bearings
    \dimen@=\wd2 %
    \advance\dimen@ -\ht2 %
    \dimen2=\wd0 %
    \advance\dimen2 -\dimen@
    \vbox to \dimen2{%
      \offinterlineskip
      \hbox{$#1\mathpunct{.}\m@th$}%
      \vfill
      \hbox{$#1\mathpunct{\kern\wd2}\mathpunct{.}\m@th$}%
      \vfill
      \hbox{$#1\mathpunct{\kern\wd2}\mathpunct{\kern\wd2}\mathpunct{.}\https://www.overleaf.com/project/5efc88f13fedba0001692ea8m@th$}%
    }%
  }%
}
\pgfplotsset{
    mark repeat*/.style={
        scatter,
        scatter src=x,
        scatter/@pre marker code/.code={
            \pgfmathtruncatemacro\usemark{
                or(mod(\coordindex,#1)==0, (\coordindex==(\numcoords-1))
            }
            \ifnum\usemark=0
                \pgfplotsset{mark=none}
            \fi
        },
        scatter/@post marker code/.code={}
    }
}
\newcommand*{\rom}[1]{\expandafter\@slowromancap\romannumeral #1@}
\begin{document}
\title{SF-PATE: Scalable, Fair, and Private Aggregation of Teacher Ensembles}
%
%\titlerunning{Abbreviated paper title}
% If the paper title is too long for the running head, you can set
% an abbreviated paper title here
%
\author{Cuong Tran\inst{1} \and
Keyu Zhu\inst{2} \and
Ferdinando Fioretto\inst{1} \and
Pascal Van Hentenryck\inst{2}}
%
% \authorrunning{F. Author et al.}
% First names are abbreviated in the running head.
% If there are more than two authors, 'et al.' is used.
%
\institute{Syracuse University \\ 
\email{cutran@syr.edu, ffiorett@syr.edu} 
\and
Georgia Institute of Technology \\
\email{kzhu67@gatech.edu, pvh@isye.gatech.edu}
}
\maketitle\sloppy\allowdisplaybreaks              % typeset the header of the contribution
\begin{abstract}
A critical concern in data-driven processes is to build models whose outcomes do not discriminate against some demographic groups, including gender, ethnicity, or age. To ensure non-discrimination in learning tasks, knowledge of the group attributes is essential. However, in practice, these attributes may not be available due to legal and ethical requirements. To address this challenge, this paper studies a model that protects the privacy of the individuals’ sensitive information while also allowing it to learn non-discriminatory predictors.
A key characteristic of the proposed model is to  enable the adoption of off-the-selves and non-private fair models to create a privacy-preserving and fair model. 
The paper analyzes the relation between accuracy, privacy, and fairness, and the experimental evaluation illustrates the benefits of the proposed models on several prediction tasks. In particular, this proposal is the first to allow both scalable and accurate training of private and fair models for very large neural networks.
% \keywords{Differential privacy \and Fairness \and Scalability.}
\end{abstract}
%
%
%
% \renewcommand{\thefootnote}{\fnsymbol{footnote}}
% \footnote[2]{Equal Contribution}

\section{Introduction}

A number of decision processes, such as criminal assessment, landing, and hiring, are increasingly being aided by machine learning systems.
A critical concern is that the learned models are prone to report outcomes that are discriminatory against some demographic group, including gender, ethnicity, or age. 
These concerns have spurred the recent development of fairness definitions and algorithms for decision-making, focusing attention on the tradeoff between the model accuracy and fairness. 
%\cite{zhao2019conditional, edwards2015censoring, beutel2017data, 10.5555/3327144.3327203, woodworth2017learning, NIPS2017_6670, DBLP:conf/icdm/CaldersKKAZ13, 10.1145/3306618.3314255, Feldman2015ComputationalFP, NIPS2016_6374, DBLP:journals/corr/abs-1812-06135, 10.1145/3306618.3314287}.

To ensure non-discrimination in learning tasks, knowledge of the \emph{sensitive} attributes is essential. At the same time, legal and ethical requirements often prevent the use of this sensitive data. For example, U.S.~law prevents using racial identifiers in the development of models for consumer lending or credit scoring. 
Other requirements may be even more stringent, and prevent the collection of protected user attributes, such as for the case of racial attributes in the E.U.~General Data Protection Regulation (GDPR), or require protection of the consumer data privacy. 
In this scenario, an important tension arise between (1) the demand for models to be non-discriminatory, (2) the requirement for such model to use the protected attribute during training, as adopted by common fairness models, and (3) the restriction on the data or protected attributes that can be used. 
There is thus a need to provide learning models that can both guarantee non discriminatory decisions and protect the privacy of the individuals' group attributes.

% \nando{Mention that several challenges arise when trying to achiving this goal: difficulty to develop privacy-preserving schemes that satisfy fairness (requiring ad-hoc analysis), scalability (e.g., for methods based on DP-SGD), and retaining good privacy/accuracy/fairness tradeoff.}

To this end, this paper introduces a novel differential privacy framework to train deep learning models that satisfy several group fairness notions, including \emph{equalized odds}, \emph{accuracy parity}, and \emph{demographic parity} \cite{zafar:17,hardt:16,agarwal:18}, while providing privacy of the protected attributes.
The proposed framework, called \emph{Scalable, Fair, and Private Aggregation of Teacher Ensemble (SF-PATE)} is inspired by the success obtained by teachers ensemble frameworks in private learning tasks \cite{papernot:16}. These frameworks transfer the classification knowledge learned from a pretrained ensemble of models (called teachers) to a target model (called student) via a privacy-preserving aggregation process. 
This paper exploit this key idea, but rather than transferring the classification capability of the models, it seek to answer a novel question: {\em can fairness properties of a model ensemble be transferred to a target model}? 

In addition to providing an affirmative answer to the question above, the key contributions of this paper can be summarized as follows: 
({\bf 1}) It proposes two flavors of SF-PATE. The first, called $\text{SF}_T$-PATE allows transferring fairness properties of the ensemble 
while protecting the demographic group attributes; the second, called 
$\text{SF}_S$-PATE, in addition, also protects the target labels. 
%or transferring the ensemble knowledge of the group label.
({\bf 2}) In addition to analyze the models privacy, the paper provides an analysis on the fairness properties of SF-PATE and shows that unfairness can be bounded in many practical settings. 
({\bf 3}) Importantly, SF-PATE allows the adoption of black-box (non-private) fair models during the knowledge transfer process. This feature is unique to the proposed framework and simplifies the development and adoption of new fairness metrics in privacy-preserving ML.
({\bf 4}) Evaluation on both tabular and image datasets show not only that SF-PATE achieves better accuracy, privacy, and fairness tradeoffs with respect to the current state of the art, but it is also significantly faster. These properties render SF-PATE amenable to train large,  overparameterized, models, that ensure privacy, accuracy, and fairness simultaneously. 

To the best of the authors knowledge, the proposed framework is the first to enable this important combination, rendering SF-PATE a promising tool for privacy-preserving and fair decision making.

\section{Related work}
Privacy and fairness have mostly been studied in isolation. A few exceptions are 
represented by the work of Dwork et al.~\cite{dwork:12}, which is one of 
the earliest contribution linking fairness and differential privacy, 
showing that individual fairness is a generalization of differential privacy. 
Cummings et al.~\cite{cummings:19} study the tradeoff between differential privacy and equal opportunity, a notion of fairness that restricts a classifier to produce equal true positive rates across different groups. The work shows that there is no classifier that achieves $(\epsilon,0)$-differential privacy, satisfies equal opportunity, and has accuracy better than a constant classifier.  
A recent line of work has also observed that private models may have a negative impact towards fairness~\cite{bagdasaryan:19,pujol:20}.
Building from these observations, Ekstrand et al.~\cite{ekstrand:18} raise questions about the tradeoff between privacy and fairness and, Jagielski et al.~\cite{jagielski2019differentially} and Mozannar et al.~\cite{mozannar2020fair} propose two simple, yet effective algorithms that satisfy {$(\epsilon, \delta)$ and $\epsilon$- differential privacy, respectively, for equalized odds}. Xu et al.~\cite{xu2019achieving} proposes a private and fair logistic regression model making use of the functional mechanism \cite{JMLR:v12:chaudhuri11a}. 
Finally, Tran \cite{tran2021differentially} proposed \emph{PF-LD}, which trains a classifier under differential privacy stochastic gradient descent \cite{abadi:16} and imposes fairness constraints using a privacy-preserving extension of the Dual Lagrangian framework proposed in \cite{fioretto2020lagrangian}. %The method follows standard DP-SGD in both primal and dual step and use moment accountant for privacy computation. 
Although this method was shown having substantial improvements on the privacy and fairness trade-offs over existing baselines, it has a high computational cost, due to the operations required to retain both privacy and fairness during training. 

In contrast, this work introduces a semi-supervised framework that relies on transferring privacy and fairness from a model ensemble to construct high-quality private and fair classifiers, while also being scalable.

\section{Problem Settings and Goals}
\label{sec:problem_setting}
The paper considers datasets $D$ consisting of $n$ individual data points $(X_i, A_i, Y_i)$, with $i \in [n]$ drawn i.i.d.~from an unknown distribution $\Pi$. Therein, $X_i \in \mathcal{X}$ is a \emph{non-sensitive} feature vector, $A_i \in \mathcal{A}$, with $\mathcal{A} = [m]$ (for some finite $m$) is a demographic group attribute, and $Y_i \in \mathcal{Y}$ is a class label.
The goal is to learn a classifier $\cM_\theta : \mathcal{X} \to \mathcal{Y}$, where $\theta$ is a vector of real-valued parameters, 
that ensures a specified non-discriminatory notion with respect to $A$ while guaranteeing the \emph{privacy} of the group attribute $A$. 
The model quality is measured in terms of a non-negative, and assumed differentiable, \emph{loss function} $\mathcal{L}: \mathcal{Y} \times \mathcal{Y} \to \mathbb{R}_+$, and the problem is that of minimizing the empirical risk function (ERM):
\begin{equation}
\label{eq:erm}
   \optimal{\theta} \, =  \argmin_\theta J(\cM_\theta, D) = \underset{\theta}{\arg\min}~
   \frac{1}{n} \sum_{i=1}^n 
    \mathcal{L}(\cM_\theta(X_i), Y_i).
    \tag{P}
\end{equation}
\setcounter{equation}{0}
The paper focuses on learning general classifiers, such as neural networks, that satisfy group fairness (as defined next) and protect the disclosure of the group attributes using the notion of differential privacy.
Importantly, the paper assumes that the attribute $A$ is not part of the model input during inference. This is crucial in the application of interest to this work as the protected attributes cannot be disclosed. 

%%%%%%%%%%%%%%%%%%%%%%%%%%%%%%%%%%%%%%%%%

\subsection{Fairness}
\label{sec:fairness}

This work constrains a classifier $\cM$ to satisfy a given group fairness notion under a distribution over $(X, A, Y)$ for the protected attribute $A$. %and focuses on three fairness notions, which can be captured by fairness functions of different forms:
The class of fairness constraints considered is specified by the following expression
    \begin{equation}\label{eq:main_obj}
        \left| \mathbb{E}_{X,Y|A=a } [ h(\cM_{\theta}(X),Y)]  -  \mathbb{E}_{X,Y} [ h(\cM_{\theta}(X),Y)] \right|  \leq \alpha\,,  \qquad \forall~a \in [m].
    \end{equation}
\noindent
The above compares a property for a group of individuals with respect to the whole population and quantifies its difference. 
Therein, $h(\cM_{\theta}(X),Y) \in \RR^{c}$, referred here as \emph{fairness function}, defines a target group fairness notion as depicted in more details below. Parameter $\alpha \in \RR_{+}^c$ represents the \emph{fairness violation} and leads to the following fairness metric.
\begin{definition}[$\alpha$-fairness]\label{def:a-fair}
A model $\cM_{\theta}$ is $\alpha$-fair w.r.t.~a joint distribution $(X,A,Y)$ and fairness function $h(\cdot)$ iff:
\begin{equation}
\label{eq:alpha_fairness}
    \xi(D, \theta) = 
    \max_{a \in [m]}~\left\vert\mathbb{E}_{X,Y|A=a} [ h(\cM_{\theta}(X),Y)]  -  \mathbb{E}_{X,Y} [ h(\cM_{\theta}(X),Y)] 
    \right\vert \leq \alpha,
\end{equation}
wehre $\xi(D, \theta)$ is referred to as fairness violation. 
\end{definition}
When a classifier $\cM_{\theta}$ is $0$-fair, we also say that it is \emph{perfectly fair}.
While the above capture an ample class of group fairness notions, the paper focuses on the following three popular notions. 
% 	{\bf (1) \emph{Demographic Parity}}: 
% 	$\cM$'s predictions are statistically independent of the protected attribute $A$. That is, for all  $a \in \cA$, $\hat{y} \in \cY$,
% 	\begin{equation*} 
% 	    \pr{\cM(X) = \hat{y} \mid A = a}= \pr{\cM(X) = \hat{y}},
% 	\end{equation*}
% 	which is expressed by setting $h(\cM_{\theta}(X), Y)$ as $\mathbbm{1}\{\cM_{\theta}(X) =1\}$.
% %%%%%%%%%%%%	
%     {\bf (2) \emph{Equalized odds}}: $\cM$'s predictions are conditionally independent of the protected attribute $A$ given the label $Y$. %~\cite{hardt:16}
% 	That is, for all $a \in \mathcal{A}$, $\hat{y} \in \mathcal{Y}$, and $y \in \mathcal{Y}$,
% 	\begin{equation*}
%     \pr{\cM(X) = \hat{y} \mid A=a, Y=y} = 
%     \pr{\cM(X) = \hat{y} \mid Y=y},
% 	\end{equation*}
% 	which is expressed by:
% \begin{equation*}
%     h(\cM_{\theta}(X),Y) = \begin{bmatrix}
%   \mathbbm{1}\{\cM_{\theta}(X) =1 ~ \& ~ Y=0 \}    \\ 
%   \mathbbm{1}\{\cM_{\theta}(X) =1  ~\&~  Y=1\} 
% \end{bmatrix}.
% \end{equation*}
% %%%%%%%%%%%%%%%%%
% 	{\bf (3)  \emph{Accuracy parity}}:
% 	$\cM$'s mis-classification rate is conditionally independent of the protected attribute.
% 	That is, for any $a\in \cA$,
% 	\begin{equation*}
% 		\pr{\cM(X) \neq Y \mid A=a} = \pr{\cM(X) \neq Y},
% 	\end{equation*}
% 	which is captured with $h(\cM_{\theta}(X),Y)=\mathbbm{1}\{\cM_{\theta}(X) \neq Y \}$.
\begin{itemize}[leftmargin=*, parsep=0pt, itemsep=0pt, topsep=0pt]
	\item[$\bullet$] \emph{Demographic Parity}: 
	$\cM$'s predictions are statistically independent of the protected attribute $A$. That is, for all  $a \in \cA$, $\hat{y} \in \cY$,
	\begin{equation*} 
	    \pr{\cM(X) = \hat{y} \mid A = a}= \pr{\cM(X) = \hat{y}},
	\end{equation*}
	which is expressed by setting $h(\cM_{\theta}(X), Y)$ as $\mathbbm{1}\{\cM_{\theta}(X) =1\}$.
    \item[$\bullet$] \emph{Equalized odds}: $\cM$'s predictions are conditionally independent of the protected attribute $A$ given the label $Y$. %~\cite{hardt:16}
	That is, for all $a \in \mathcal{A}$, $\hat{y} \in \mathcal{Y}$, and $y \in \mathcal{Y}$,
	\begin{equation*}
    \pr{\cM(X) = \hat{y} \mid A=a, Y=y} = 
    \pr{\cM(X) = \hat{y} \mid Y=y},
	\end{equation*}
	which is expressed by:
\begin{equation*}
    h(\cM_{\theta}(X),Y) = \begin{bmatrix}
  \mathbbm{1}\{\cM_{\theta}(X) =1 ~ \& ~ Y=0 \}    \\ 
  \mathbbm{1}\{\cM_{\theta}(X) =1  ~\&~  Y=1\} 
\end{bmatrix}.
\end{equation*}
	\item[$\bullet$] \textit{Accuracy parity}:
	$\cM$'s mis-classification rate is conditionally independent of the protected attribute.
	That is, for any $a\in \cA$,
	\begin{equation*}
		\pr{\cM(X) \neq Y \mid A=a} = \pr{\cM(X) \neq Y},
	\end{equation*}
	which is captured with $h(\cM_{\theta}(X),Y)=\mathbbm{1}\{\cM_{\theta}(X) \neq Y \}$.
\end{itemize}
Notice that the fairness function $h(\cdot)$ may not be differentiable w.r.t.~$\cM_{\theta}$. Thus, practically,  one considers differentiable surrogates; for example, for the case of demographic parity $h(\cdot) \approx \cM_{\theta}(X)$, and for accuracy parity, $h(\cdot) \approx \cL(\cM_{\theta}(X), Y)$.

\subsection{Differential Privacy} 
\label{sec:dp}
Differential privacy (DP) \cite{dwork:06} is a strong privacy notion used to quantify and bound the privacy loss of an individual participation to a computation. 
%Informally, it  states that the probability of any output does not change much when a single individual data is added or removed to the dataset, limiting the amount of information that the output reveals about any individual.  
%While traditional DP protects the participation of an individual to a dataset used in a computation, 
Similarly to \cite{jagielski2019differentially,tran2021differentially}, this work focuses on the instance where the protection is restricted to the group attributes only. A dataset $D \in \mathcal{D} = (\mathcal{X} \times \mathcal{A} \times \mathcal{Y})$ of size $n$ can be described as a pair $(D_P, D_S)$ where 
$D_P \in (\mathcal{X} \times \mathcal{Y})^n$ describes the \emph{public} attributes and $D_S \in \mathcal{A}^n$ describes the group attributes. 
\emph{The privacy goal is to guarantee that the output of the learning model does not differ much when a single individual group attribute is changed}.

The action of changing a single attribute from a dataset $D_S$, resulting in a new dataset $D_S'$, defines the notion of \emph{dataset adjacency}. Two dataset $D_S$ and $D_S' \in \mathcal{A}^n$ are said adjacent, denoted $D_S \sim D_S'$, if they differ in at most a single entry (e.g., in one individual's group membership).

\begin{definition}[Differential Privacy]
	A randomized mechanism $\mathcal{M} : \mathcal{D} \to \mathcal{R}$ with domain $\mathcal{D}$ and range $\mathcal{R}$ is $(\epsilon, \delta)$-differentially private w.r.t.~attribute $A$, if, for any dataset  $D_P \in (\mathcal{X} \times \mathcal{Y})^n$, any two adjacent inputs $D_S, D_S' \in \mathcal{A}^n$, and any subset of output responses $R \subseteq \mathcal{R}$:
	\[
	    \pr{\cM(D_P, D_S) \in R }\leq  \exp\left(\epsilon\right)\cdot
	    \pr{\cM(D_P, D_S') \in R } + \delta.
	\]
\end{definition}
\noindent 
When $\delta=0$ the algorithm is said to satisfy $\epsilon$-DP. 
Parameter $\epsilon > 0$ describes the \emph{privacy loss} of the algorithm, 
with values close to $0$ denoting strong privacy, while parameter 
$\delta \in [0,1]$ captures the probability of failure of the algorithm to 
satisfy $\epsilon$-DP. The global sensitivity $\Delta_f$ of a real-valued 
function $f: \mathcal{D} \to \mathbb{R}^k$ is defined as the maximum amount 
by which $f$ changes  in two adjacent inputs $D$ and $D'$:
\(
	\Delta_f = \max_{D \sim D'} \| f(D) - f(D') \|.
\)
In particular, the Gaussian mechanism, defined by
\(
    \mathcal{M}(D) = f(D) + \mathcal{N}(0, \Delta_f^2 \, \sigma^2), 
\)
where $\mathcal{N}(0, \Delta_f\, \sigma^2)$ is 
the Gaussian distribution with $0$ mean and standard deviation 
$\Delta_f\, \sigma^2$, satisfies $(\epsilon, \delta)$-DP for 
$\delta > \frac{4}{5} \exp(-(\sigma\epsilon)^2 / 2)$ \cite{dwork:06}. 

\section{Learning Private and Fair Models: Challenges}
\label{sec:limitation}
When interpreted as constraints of the form \eqref{eq:alpha_fairness}, fairness properties can be explicitly imposed to problem \eqref{eq:erm}, resulting in a constrained empirical risk problem. The resolution of this problem can be attacked using primal-dual solutions, such as the Lagrangian Dual framework proposed by Tran et al.~\cite{tran2021differentially}. However, the privacy analysis becomes convoluted, often requiring the use of approximate solutions to simplifying it (see \cite{tran2021differentially}). More importantly, calibrating the appropriate amount of noise to enforcing DP requires the use of specialized solutions (e.g., to assess the sensitivity associated with the fairness constraints) and thus prevents the adoption of this framework when new fairness notions arise, as the paper shows in the evaluation, ultimately increasing the adoption barrier.  
Finally, these methods are adaptations of celebrated DP-SGD \cite{abadi:16} algorithm, whose private computations rely on sequential clipping operations, and, when used in combination with fairness constraints, produce a very slow training process. Unfortunately, the adoption of fairness constraints inhibits the effectiveness of frameworks like JAX \cite{jax2018github} and Opacus \cite{opacus}, that use vectorization to speed up operations in DP-SGD. 

Alternatively, and as proposed in \cite{mozannar2020fair}, one can adopt a pre-processing step to perturb the group attributes and then use a non-private, fair ML model to post-process the private data. This scheme, however has been shown to introduce an unnecessarily large amount of noise, especially for multigroup ($m > 2$) tasks \cite{tran2021differentially}. 

The approach proposed in this paper avoids these difficulties by providing a teachers ensemble model that {\bf (1)} provides state-of-the-art accuracy, privacy, and fairness tradeoffs, {\bf (2)} adds negligible computational cost to standard (non-private) training, and {\bf (3)} directly exploits the existence of fair (non-private) algorithms rendering it practical even in the adoption of new fairness notions or when the fairness to be enforced would complicate the privacy analysis.

\section{Scalable, Fair, and Private Aggregation of Teacher Ensembles}
\label{sec:pate}

This section discusses two algorithms to transfer fairness constraints during the private learning process. Both algorithms rely on the presence of an ensemble of \emph{teacher} models $\bm{T} = \{\cM^{(i)}\}_{i=1}^K$, with each $\cM^{(i)}$  trained on a non-overlapping portion $D_i$ of the dataset $D$. This ensemble is used to transfer knowledge to a student model $\bar{\cM}_\theta : \cX \to \cY$. The student model $\bar{\cM}$ is trained using a dataset $\bar{D} \subseteq D_P$ whose samples are drawn i.i.d.~from
the same distribution $\Pi$ considered in Section \ref{sec:problem_setting} but whose protected group attributes are unrevealed. 
The framework, and the two variants introduced next, are depicted schematically in Figure \ref{fig:framework}.

% \rev{
% Our first model called $\mbox{SF}_S$-PATE generates private sensitive labels for student, while the second model named $\mbox{SF}_T$-PATE instead provides \emph{fair} labels for students. To protect the privacy of $D_S$, both models utilize the private Gaussian Noisy Max (GNMAX) when generating the labels. Missing proofs in this section can be found in the appendices.}

\begin{figure}[t]
\centering
\includegraphics[width=0.8\linewidth]{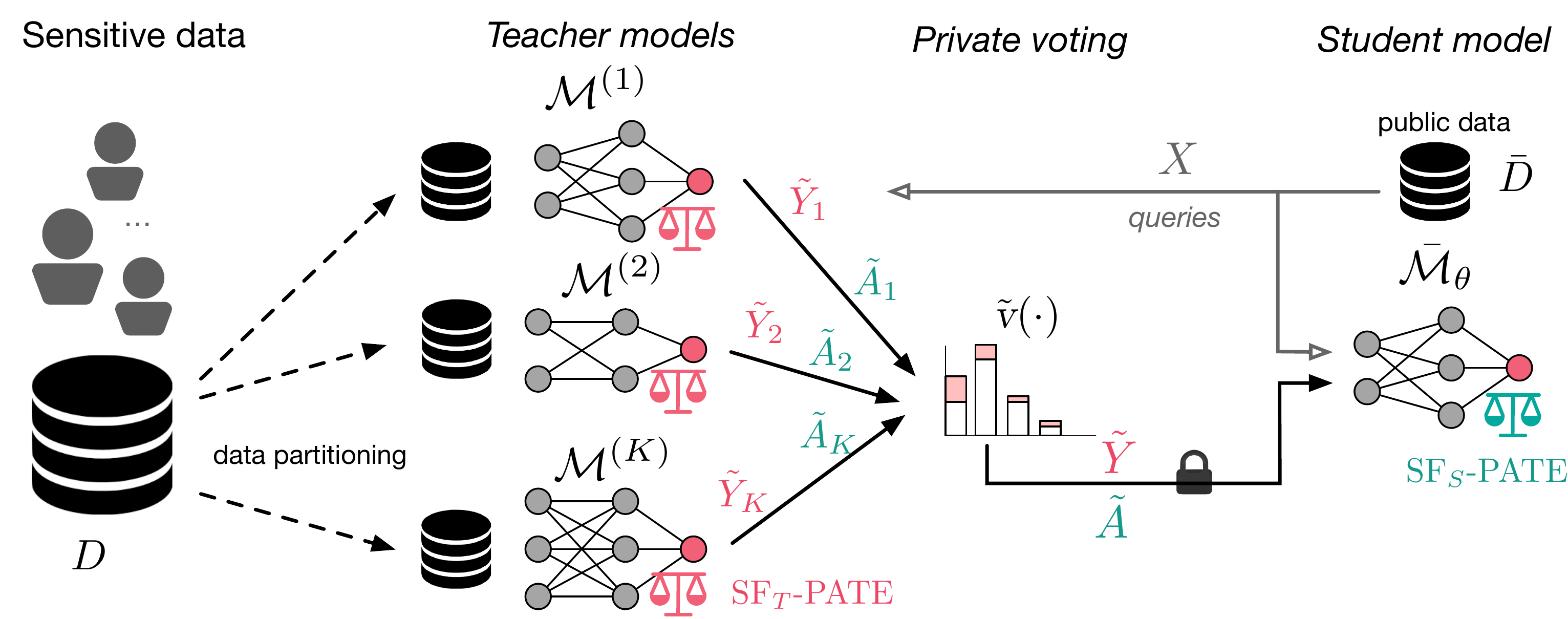}
\caption{Illustration of the SF-PATE framework. Green (red) colored text and labels depict the SF$_S$ (SF$_T$) version, in which the student (teachers ensemble) is trained under fairness constraints.}
\label{fig:framework}
\end{figure}

\subsection{$\text{SF}_S$-PATE: Semi-supervised Transfer Learning with Fair Student Model}
\label{sec:first_pate}

The first algorithm presented, called $\mbox{SF}_S$-PATE, trains a student model with privacy-preserving group attributes chosen by an ensemble of teachers. Subscript $_S$ in the algorithm's name stands for ``student'' to emphasize that is the student to enforce the fairness constraints during training, as the paper illustrates next. 

The teachers ensemble $\bm{T} = \{\cM^{(i)}\}_{i=1}^K$ is comprised of models $\cM^{(i)}\!:\!\cX \to \cA$, whose goal is to predict the group attributes (and not the labels, as classically done in classification tasks) given a sample's features.
Note that, importantly, the teacher models are standard classifiers (neither private nor fair). 
Their role is to transfer the information of the group attributes associated with the samples $(X_i, Y_i) \in \bar{D}$ provided by the student. We will see next how this information can be shared in a privacy-preserving manner. 

The student model $\bar{\cM}_\theta: \cX \to \cY$ is learned by minimizing the following regularized and constrained empirical risk function:
\begin{subequations}
\label{eq:sf_s}
    \begin{align}
    \label{eq:sf_s_A}
    \hat{\theta}   = \argmin_\theta & 
    \sum_{(X_i, Y_i) \in \bar{D}} \cL(\cM_{\theta}(X_i), Y_i) +\lambda \| \theta - \theta^* \|^2_2 \\
    \label{eq:sf_S_B}
    \text{s.t.}\quad
	  &
	  \xi \left( \{X_i, \tilde{v}_A\left( \bm{T}(X_i) \right), Y_i \}_{(X_i, Y_i) \in \bar{D}}, \theta \right) \leq \alpha,
	\end{align}
\end{subequations}
\noindent
where $\tilde{v}_A: \cA^K \to \cA$ is a privacy-preserving voting scheme returning the group attribute $\tilde{A}$ chosen by the teachers ensemble:
 \begin{equation}
  \tilde{A} = \tilde{v}_A(\bm{T}(X)) = \argmax_{a \in \cA } \{ \#_a(\bm{T}(X)) 
  + \mathcal{N}(0, \sigma^2) \},
  \label{eq:sf_s_pate_gnmax}
\end{equation}
which perturbs the reported counts $\#_a(\bm{T}(X))\!=\! \left\vert\{k \in [K] \mid \cM^{(k)}_{\theta}(X) = a\}\right\vert$ associated to group $a \in \cA$ with additive Gaussian noise of zero mean and standard deviation $\sigma$.

Problem \eqref{eq:sf_s} minimizes the standard empirical risk function (first component of \eqref{eq:sf_s_A}), while encouraging
the student model parameters $\theta$ to be close to the optimal, not fair, parameters $\theta^*$ (second component of \eqref{eq:sf_s_A}). 
Note that $\theta^*$ is obtained by simply solving problem \eqref{eq:erm} using dataset $\bar{D}$. The regularization parameter  $\lambda>0$ controls the trade-off between accuracy (large $\lambda$ values) and fairness (small $\lambda$ values). 
The fairness constraints expressed in Equation \eqref{eq:sf_S_B} can be enforced through the adoption of off-the-shelf techniques. In particular, this paper relies on the use of the Lagrangian Dual deep learning framework of Fioretto et al.~\cite{fioretto2020lagrangian} to enforce such constraints. 

\begin{wrapfigure}[10]{r}{0.43\textwidth}
\vspace{-20pt}
\includegraphics[width=\linewidth]{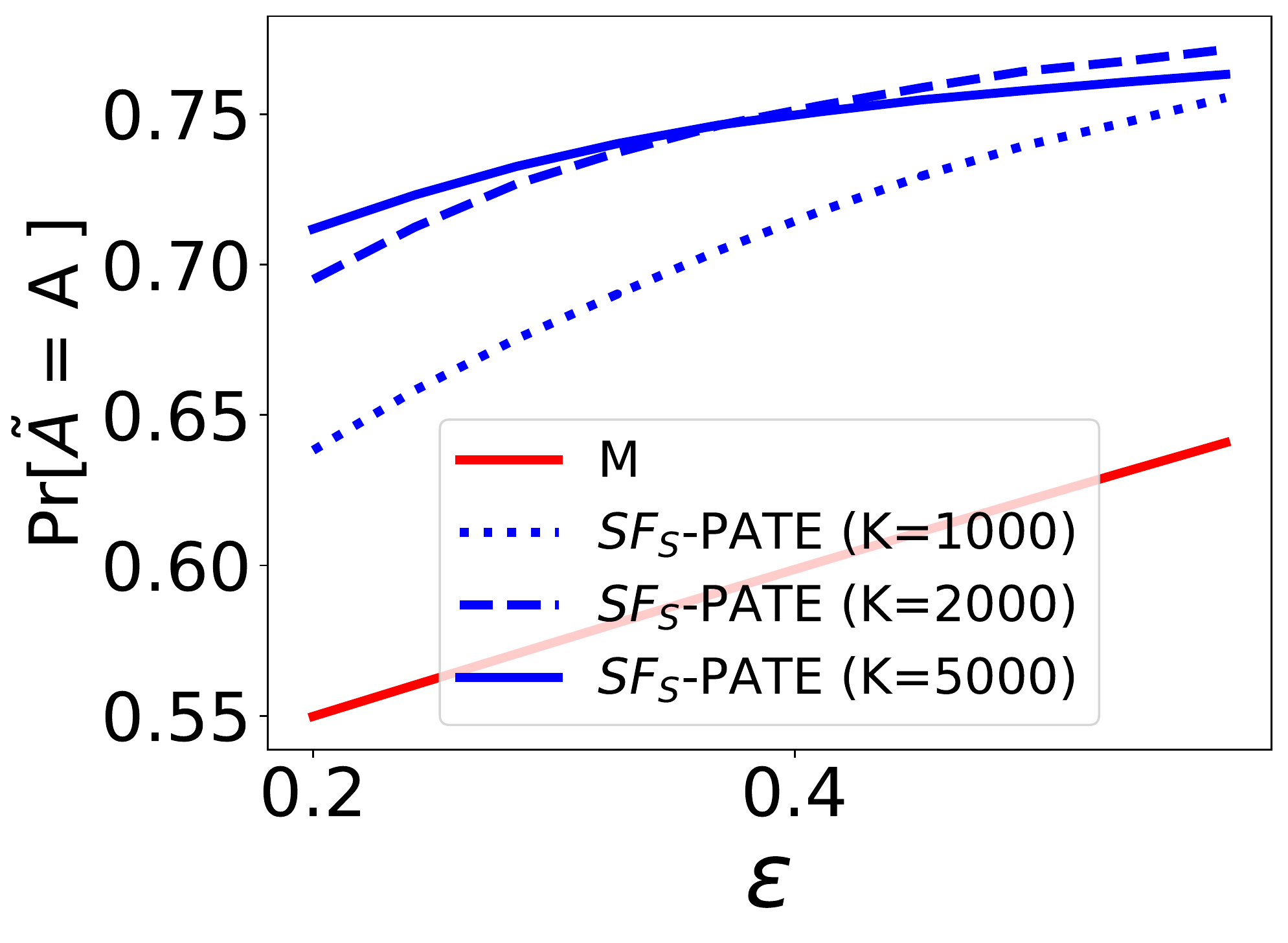}
\vspace{-25pt}
\caption{{\small Private group attributes accuracy.}}
\label{fig:accuracy_noisy_z}
\end{wrapfigure}
%     \caption{Income dataset:  Accuracy of private noisy labels $\tilde{A}$ of our proposed $\mbox{SF}_S$-PATE compared to the baseline \emph{M} model under various privacy constraints $\epsilon$.}
Figure \ref{fig:accuracy_noisy_z} illustrates the ability of the teachers ensemble to generate high-quality privacy-preserving group attributes: $\tilde{A} = \tilde{v}_A(\bm{T}(X))$, especially under tight privacy regimes. The figure compares the expected accuracy of the private group attributes $\pr{\tilde{A}_i = A_i}$ for different sizes $K$ of the ensemble on the Income dataset (See Appendix). It additionally illustrates the results obtained by a baseline model (\emph{M}) which adds calibrated noise to the group attributes using randomized response, which is the foundation of a state of the art method to train private and fair classifiers \cite{mozannar2020fair}. 

%For a given privacy budget $\epsilon$, the expected accuracy of private  group label $\tilde{A}_i$  generated by model $\cM$ is constant over all samples: $\pr{\tilde{A}_i = A_i} = \frac{\exp(\epsilon)}{ \exp(\epsilon)+(|\cA| -1) }$. We can see that, empirically, our proposed $\mbox{SF}_S$-PATE under different choice of ensemble size $K$ generates more accurate private groups labels than the baseline model \emph{M}.

\smallskip
The rest of this section provides theoretical arguments to justify this surprising performance of $\mbox{SF}_S$-PATE, as well as shed lights on the fairness guarantees it can provide. 
The proofs of all theorems are reported in the Appendix. 
%and its relation with private noise $\sigma$.
%%%%%%%%
Note that $\mbox{SF}_S$-PATE relies on a (non-private) fair classifier to train the student model. While the performance of many such fair classifiers has been established in the literature, a natural question arises when adopting the proposed framework regarding the impact of the privacy-preserving voting scheme to fairness. 
The next theorem bounds the fairness violation of the student $\bar{\cM}_{\theta}$ w.r.t.~$(X,A,Y)$, when the group attribute $A$ and its privacy-preserving counterpart $\tilde{A}$ are close enough to each other statistically.

\begin{theorem}\label{thm:fair-bound-approach-1}
Let $\bar{\cM}_{\theta}$ be $\alpha$-fair w.r.t.~$(X,\tilde{A},Y)$ and $h(\cdot)$, and $\tilde{A}$ shares the same support $[m]$ as $A$ where, for $a\in [m], x\in \mathcal{X},
    y\in\mathcal{Y}$, the difference between the two conditional distributions $A$ and $\tilde{A}$ given
an event $(X=x, Y=y)$ is upper bounded by $\distdiff > 0$, i.e.,
%\begin{equation}\label{eq:t5-assumption-1}
$\left\vert \pr{\tilde{A} = a\mid X=x, Y=y}-\pr{A = a\mid X=x, Y=y}\right\vert \leq \distdiff$.
%\end{equation}
Then, $\bar{\cM}_{\theta}$ is $\alpha'$-fair w.r.t.~$(X,A,Y) $ and $h(\cdot)$ with:

\begin{equation}
    \alpha' = \frac{\distdiff\cdot \bound}{\underset{a\in[m]}{\min} \min\left\{\pr{\tilde{A}=a},\pr{A=a}\right\}}+\alpha\,.
\end{equation}
Furthermore, if the probability of $A$ belonging to any class $a\in [m]$ is strictly greater than $\distdiff$, i.e.,
\(
    \pr{A = a}>\distdiff,\,\forall~a\in[m],
\)
the fairness bound $\alpha'$ is given by
\begin{equation*}
    \alpha' = \frac{\distdiff\cdot \bound}{\underset{a\in[m]}{\min} \pr{A=a}-\distdiff}+\alpha\,.
\end{equation*}
\end{theorem}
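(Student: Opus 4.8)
The plan is to exploit a structural simplification: the fairness violation $\xi$ contains the ``population'' term $\EE_{X,Y}[h(\cM_\theta(X),Y)]$, and this term is \emph{identical} whether we condition on $A$ or on $\tilde A$. Indeed, the perturbation acts only on the group attribute conditionally on $(X,Y)$, so the marginal law of $(X,Y)$ is unchanged and the population term cancels out of any comparison. Writing $g(X,Y)=h(\cM_\theta(X),Y)$ with $\|g\|\le\bound$, the triangle inequality therefore reduces the entire problem to controlling, for each fixed $a\in[m]$, the per-group discrepancy $\big\|\EE_{X,Y\mid A=a}[g]-\EE_{X,Y\mid\tilde A=a}[g]\big\|$; adding the assumed $\alpha$-fairness w.r.t.\ $\tilde A$ then produces an $\alpha'$ of the advertised form.

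Next I would rewrite each conditional expectation via Bayes' rule against the common $(X,Y)$ marginal. With $p_a=\pr{A=a}$, $\tilde p_a=\pr{\tilde A=a}$, $q_a(x,y)=\pr{A=a\mid X=x,Y=y}$ and $\tilde q_a(x,y)=\pr{\tilde A=a\mid X=x,Y=y}$, one has $\EE_{X,Y\mid A=a}[g]=\tfrac1{p_a}\EE_{X,Y}[g\,q_a]$ and the analogous identity for $\tilde A$. The hypothesis supplies $|q_a-\tilde q_a|\le\distdiff$ pointwise; marginalising this against $(X,Y)$ yields the companion estimate $|p_a-\tilde p_a|\le\distdiff$ on the normalisers, which is what lets the condition $p_a>\distdiff$ control $\tilde p_a$ as well.

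I would then bound the difference of the two ratios by adding and subtracting the intermediate quotient $\EE_{X,Y}[g\,\tilde q_a]/p_a$. One piece is $\tfrac1{p_a}\big\|\EE_{X,Y}[g(q_a-\tilde q_a)]\big\|\le\distdiff\bound/p_a$, and the other is $\big\|\EE_{X,Y}[g\,\tilde q_a]\big\|\,|p_a-\tilde p_a|/(p_a\tilde p_a)\le\distdiff\bound/p_a$, using $\|\EE[g\,\tilde q_a]\|\le\bound\,\tilde p_a$. Taking the maximum over $a$ and bounding the normaliser below by $\min\{p_a,\tilde p_a\}$ then yields the stated shape $\alpha'=\distdiff\bound/\min_a\min\{p_a,\tilde p_a\}+\alpha$. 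For the refined bound I would observe that $|p_a-\tilde p_a|\le\distdiff$ forces $\min\{p_a,\tilde p_a\}\ge p_a-\distdiff$, so whenever $p_a>\distdiff$ for all $a$ one may replace the denominator by $\min_a p_a-\distdiff$, giving the second expression directly.

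The main obstacle is the normaliser mismatch: the two conditional expectations are divided by the different constants $p_a$ and $\tilde p_a$, and one must combine the perturbation of the integrand with the perturbation of the normaliser without the constant inflating. A naive independent sum of the two error terms above produces the constant $2\distdiff\bound$ rather than $\distdiff\bound$, so recovering the sharper leading constant in the statement is the delicate step. I would first try centring $g$ by its population mean before splitting (which annihilates the contribution of constant shifts), and then verify this step carefully, since it is precisely where an extra factor can silently creep in.
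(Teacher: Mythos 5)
Your proposal follows the same skeleton as the paper's proof: the triangle inequality around the intermediate term $\EE_{X,Y\mid \tilde A=a}[h]$ (valid because the $(X,Y)$-marginal, hence the population term, is unchanged), Bayes' rule against that common marginal, the marginalised estimate $\bigl|\pr{A=a}-\pr{\tilde A=a}\bigr|\le\distdiff$, and $\min\{\pr{A=a},\pr{\tilde A=a}\}\ge\pr{A=a}-\distdiff$ for the refined bound. The gap is exactly the step you flag as delicate, and the fix you propose does not close it. Write $p_a=\pr{A=a}$, $\tilde p_a=\pr{\tilde A=a}$, $g(x,y)=h(\bar{\cM}_{\theta}(x),y)\in[0,\bound]$ and $\bar g=\EE_{X,Y}[g]$. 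Your uncentred split gives $2\distdiff\bound/p_a$, as you concede. After centring ($g'=g-\bar g$), the cross term does \emph{not} vanish: it equals $\tilde p_a\bigl|\EE_{X,Y\mid\tilde A=a}[g]-\bar g\bigr|\cdot|p_a-\tilde p_a|/(p_a\tilde p_a)$, which the assumed $\alpha$-fairness w.r.t.\ $\tilde A$ bounds by $\alpha\distdiff/p_a$, while the first term is only bounded by $\distdiff\sup|g'|/p_a\le\distdiff\bound/p_a$ (centring does not shrink $\sup|g'|$ below $\bound$ in general). So the centred route yields $\alpha'=\distdiff(\bound+\alpha)/\bigl(\min_{a}\min\{p_a,\tilde p_a\}\bigr)+\alpha$, strictly weaker than the stated $\alpha'=\distdiff\bound/\bigl(\min_{a}\min\{p_a,\tilde p_a\}\bigr)+\alpha$ whenever $\alpha>0$. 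As written, the proposal proves the theorem only with a degraded constant.

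The sharp constant requires arguing at the level of events rather than one global add-and-subtract. Let $P_1,P_2$ be the laws of $(X,Y)$ given $A=a$ and $\tilde A=a$, write $\mu(S)=\pr{(X,Y)\in S}$, and put $D(S)=\pr{(X,Y)\in S,\,A=a}-\pr{(X,Y)\in S,\,\tilde A=a}$, so $|D(S)|\le\distdiff\,\mu(S)$ by the hypothesis. Substituting $\pr{(X,Y)\in S,A=a}=\pr{(X,Y)\in S,\tilde A=a}+D(S)$ and $p_a=\tilde p_a+D(S)+D(S^{c})$ gives the identity and bound
\begin{equation*}
P_1(S)-P_2(S)=\frac{D(S)\bigl(1-P_2(S)\bigr)-P_2(S)\,D(S^{c})}{p_a}\le\frac{\distdiff\bigl[\mu(S)\bigl(1-P_2(S)\bigr)+P_2(S)\,\mu(S^{c})\bigr]}{p_a}\le\frac{\distdiff}{p_a}\,,
\end{equation*}
since $x(1-y)+y(1-x)\le 1$ on $[0,1]^2$; symmetrically $P_2(S)-P_1(S)\le\distdiff/\tilde p_a$. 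The layer-cake formula for $0\le g\le\bound$ converts these one-sided event bounds into $\EE_{P_1}[g]-\EE_{P_2}[g]\le\distdiff\bound/p_a$ and $\EE_{P_2}[g]-\EE_{P_1}[g]\le\distdiff\bound/\tilde p_a$, hence $|\EE_{P_1}[g]-\EE_{P_2}[g]|\le\distdiff\bound/\min\{p_a,\tilde p_a\}$, exactly what the theorem needs. The cancellation between $S$ and $S^{c}$ is what your centring heuristic is reaching for, but it must be exploited per event. Incidentally, your worry that a constant can silently creep in at this step is well-founded even against the paper: its proof asserts the \emph{pointwise} bound $\bigl|q_a(x,y)/p_a-\tilde q_a(x,y)/\tilde p_a\bigr|\le\distdiff/\min\{p_a,\tilde p_a\}$ for $q_a(x,y)=\pr{A=a\mid X=x,Y=y}$ and $\tilde q_a(x,y)=\pr{\tilde A=a\mid X=x,Y=y}$, which is false in general (two atoms of mass $0.1$ and $0.9$ with $q_a=(1,0)$, $\tilde q_a=(0.9,0.1)$, $\distdiff=0.1$ give a ratio gap of $5$ at the first atom against a claimed bound of $1$); the integrated inequality is nonetheless true, but only via the event-level argument above.
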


% By Theorem \ref{thm:fair-bound-approach-1}, 
% this observation implies that our proposed Approach 1 with $K= 600$ teachers is likely to 
% provide better fairness guarantees than the baseline model \emph{M}.

The next result is associated with private group attribute $\tilde{A}$ generated by the aforementioned randomized response mechanism 
as a privacy-preserving step of the baseline model \emph{M} \cite{jagielski2019differentially} and also 
bounds fairness guarantees for the baseline model \emph{M}.
\begin{corollary}\label{cor:rr_mechanism}
    Suppose that $\bar{\cM}_{\theta}$ is $\alpha$-fair w.r.t. $(X,\tilde{A}, Y)$, along with $h(\cdot)$,
and $\tilde{A}$ is derived by the randomized response mechanism of $A$,
where, for any $a, a'\in[m]$,
\begin{equation*}
    \pr{\tilde{A}=a'\mid A=a}=\begin{cases}
                \frac{\exp(\epsilon)}{\exp(\epsilon)+(m-1)}\,,&\text{if } a'=a\,,\\
                \frac{1}{\exp(\epsilon)+(m-1)}\,,& \text{otherwise}\,.
        \end{cases}
\end{equation*}
% It is also assumed that, for any group class $a\in[m]$,
% \begin{equation*}
%     \pr{A=a}>\frac{1}{\exp(\epsilon)+(m-1)}\,,
% \end{equation*}
Then, $\bar{\cM}_{\theta}$ is $\alpha'$-fair w.r.t. $(X,  A, Y) $ and $h(\cdot)$, where

\begin{equation*}
    \alpha' = \frac{\distdiff\cdot \bound}{\underset{a\in[m]}{\min} \min\{\pr{\tilde{A}=a},\pr{A=a}\}}+\alpha\,,\qquad\text{with }
    \distdiff=\frac{m-1}{\exp(\epsilon)+(m-1)}\,.
\end{equation*}
\end{corollary}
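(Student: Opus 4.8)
The plan is to read this corollary as a specialization of Theorem~\ref{thm:fair-bound-approach-1}: its conclusion, including the exact expression for $\alpha'$ with the denominator $\min_{a\in[m]}\min\{\pr{\tilde{A}=a},\pr{A=a}\}$, is precisely the first bound of that theorem. Consequently the entire task reduces to a single verification, namely that the randomized response mechanism meets the theorem's hypothesis with the claimed value $\distdiff=\frac{m-1}{\exp(\epsilon)+(m-1)}$. In other words, I only need to show that for every $a\in[m]$, $x\in\mathcal{X}$, $y\in\mathcal{Y}$,
\[
    \left\vert \pr{\tilde{A}=a\mid X=x,Y=y}-\pr{A=a\mid X=x,Y=y}\right\vert \leq \distdiff,
\]
and that $\tilde{A}$ retains the full support $[m]$. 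The latter is immediate, since every transition probability $\pr{\tilde{A}=a'\mid A=a}$ is strictly positive.

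The key structural observation is that randomized response perturbs $A$ using fresh randomness that does not inspect $(X,Y)$; hence $\tilde{A}$ is conditionally independent of $(X,Y)$ given $A$, i.e.\ $\pr{\tilde{A}=a\mid A=a',X=x,Y=y}=\pr{\tilde{A}=a\mid A=a'}$. First I would use this to marginalize over $A$ via the law of total probability, writing $p_a:=\pr{A=a\mid X=x,Y=y}$ and obtaining
\[
    \pr{\tilde{A}=a\mid X=x,Y=y}
    =\sum_{a'\in[m]}\pr{\tilde{A}=a\mid A=a'}\,p_{a'}.
\]
Substituting the two-valued transition kernel of the statement and using $\sum_{a'\neq a}p_{a'}=1-p_a$, this collapses to $\frac{(\exp(\epsilon)-1)p_a+1}{\exp(\epsilon)+(m-1)}$.

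Next I would subtract $p_a$ and simplify; the cross terms cancel and the gap reduces cleanly to
\[
    \pr{\tilde{A}=a\mid X=x,Y=y}-p_a=\frac{1-m\,p_a}{\exp(\epsilon)+(m-1)}.
\]
It then remains to take absolute values and maximize $\lvert 1-m\,p_a\rvert$ over the admissible range $p_a\in[0,1]$; since $m\geq2$, this worst case is $m-1$, attained at $p_a=1$, which yields exactly $\distdiff=\frac{m-1}{\exp(\epsilon)+(m-1)}$. Feeding this $\distdiff$ into Theorem~\ref{thm:fair-bound-approach-1} delivers the stated $\alpha'$ and completes the argument. There is no genuine obstacle here: the only point demanding care is the final maximization, where one must confirm that the extremal value of $\lvert 1-m\,p_a\rvert$ on $[0,1]$ is $m-1$ rather than $1$, so that the bound on $\distdiff$ is tight and matches the closed form in the statement.
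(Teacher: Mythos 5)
Your proposal is correct and follows essentially the same route as the paper's own proof: both reduce the corollary to verifying the hypothesis of Theorem~\ref{thm:fair-bound-approach-1}, expand $\pr{\tilde{A}=a\mid X=x,Y=y}$ via the law of total probability using the randomized-response kernel, simplify the gap to $\frac{1-m\,\pr{A=a\mid X=x,Y=y}}{\exp(\epsilon)+(m-1)}$, and bound its absolute value by $\frac{m-1}{\exp(\epsilon)+(m-1)}$. Your explicit remarks on the conditional independence of $\tilde{A}$ from $(X,Y)$ given $A$ and on why the maximum of $\lvert 1-m\,p_a\rvert$ over $[0,1]$ is $m-1$ (for $m\geq 2$) are details the paper leaves implicit, but they do not change the argument.
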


Corollary \ref{cor:rr_mechanism} provides a fairness guarantee of the student model
$\bar{\cM}_{\theta}$ w.r.t. the original group attribute $A$, when this model
is trained for achieving a given fairness violation. This is compared with the privacy-preserving counterpart $\tilde{A}$, which is
an output of the randomized response mechanism employed by the baseline model \emph{M}. 
This result sheds light on how fairness guarantees 
w.r.t. private group attributes can be transferred to those w.r.t. original group attributes.
Furthermore, when compared with the baseline model \emph{M}, $\mbox{SF}_S$-PATE makes the privacy-preserving group attributes $\tilde{A}$ much closer to their original counterparts $A$ under the same privacy budgets, as can be seen in Figure \ref{fig:accuracy_noisy_z}.
Finally, by Theorem \ref{thm:fair-bound-approach-1}, it can be inferred that SF-PATE is likely to achieve better fairness guarantees than those provided by models using randomized response, such as state-of-the-art model \emph{M}. %by Corollary \ref{cor:rr_mechanism}.

\subsection{$\mbox{SF}_T$-PATE: Semi-supervised Transfer Learning with Fair Teachers Models}
\label{sec:second_pate}

While the previous algorithm focused on transferring the privacy-preserving group attributes from an ensemble of teachers to a fair student model, this section introduces an SF-PATE variant to transfer fairness {directly}, through the semi-supervised learning scheme. 
The proposed algorithm, called $\mbox{SF}_T$-PATE, trains a student model from an ensemble of fair teacher models. Subscript $_T$ in the algorithm's name stands for ``teachers'' to emphasize that the fairness constraints are enforced by the teachers and transferred to the student during training. 

The teachers ensemble  $\bm{T} = \{\cM^{(i)}\}_{i=1}^K$ is composed of pre-trained classifiers $\cM^{(i)}: \cX \to \cY$ that are non-private but fair over their training data $D_i$. Each SF$_T$-PATE teacher solves an empirical risk minimization problem subject to fairness constraint: 
\[
    \theta_i = \argmin_\theta \sum_{(X, Y) \in D_i} \cL(\cM_\theta(X_i), Y_i) \qquad \text{s.t.} \;\; 
    \xi(D_i, \theta) \leq \alpha.
\]
The implementation uses the Lagrangian dual method of \cite{fioretto2020lagrangian} to achieve this goal. This is again an important aspect of the SF-PATE framework since, by relying on black-box fair (and not private) algorithms, it decouples the dependency of developing joint private and fair analysis. 
The role of the teachers is to transfer fairness via model prediction to the student. The student model $\bar{\cM}_\theta : \cX \to \cY$ is learned by minimizing a standard empirical risk function:
\begin{equation}
    \min_{\theta} \sum_{X \in \bar{D} } 
    \cL\left( \cM_{\theta}(X), \tilde{v}_Y(\bm{T}(X)\right) 
    + \lambda \| \theta - \theta^*\|^2_2,
    \label{eq:main_sf_t_pate}
\end{equation}
where $\theta^*$ represents the student model parameters obtained solving the standard classification task (without fairness considerations) \eqref{eq:erm} on $\bar{D}$ and the private voting scheme $\tilde{v}_Y: \cY^K \to \cY$ selects the most voted target label $\tilde{Y}$ by the teachers ensemble: 
\begin{equation}
  \tilde{Y} = \tilde{v}(\bm{T}(X))  = \argmax_{y \in \cY} \{\#_y(\bm{T}(X) + \mathcal{N}(0, \sigma^2)\}.
  \label{eq:sf_t_pate_gnmax}
\end{equation}
Notice that, in addition to protecting the privacy of the sensitive group information $A$,  a biproduct of the above scheme is to also protect the labels $Y$ {when $\lambda\!=\!0$}. Thus, SF$_T$-PATE can be adopted in contexts where both the protected group and the labels are sensitive information. 
Finally, note that the voting scheme above, emulates the GNMAX framework proposed in \cite{papernot:16}, but the two have fundamentally different goals: GNMAX aims at protecting the participation of individuals into the training data, while SF$_T$-PATE aims at privately transferring the fairness properties of the teachers ensemble to a student model. 

\medskip 
Next, the paper sheds light on to which extent the fairness properties of the ensemble can be transferred to a student model, and provides theoretical insights to bound the fairness violations induced by the voting mechanism $\tilde{v}_Y$, adopted by SF$_T$-PATE. 
For notational convenience, let $B\coloneqq \sup_{y\in \mathcal{Y}, y'\in \mathcal{Y}}h(y, y')$ denote the supremum of the fairness function $h(\cdot)$.\footnote{One can consider $\bound=1$  for the fairness functions considered in Section \ref{sec:fairness}.}
Let $Z$ be the random vector $(\cM_{\theta}^{(1)}(X),\dots,\cM_{\theta}^{(K)}(X),Y)$ while $Z_a$ the conditional random
vector given the group label $a\in[m]$, i.e., 
$Z_a\coloneqq (\cM_{\theta}^{(1)}(X),\dots,\cM_{\theta}^{(K)}(X),Y)\mid A=a$.

\begin{definition}[Total variation distance]
    Let $Z, Z'$ be two probability distributions over a shared probability
space $\Omega$. Then, the total variation distance between them, denoted $\tv{Z}{Z'}$, is given by
\begin{align*}
    \tv{Z}{Z'} &\coloneqq \sup_{S\subseteqq \Omega}~\left\vert \pr{Z\in S}-\pr{Z'\in S}\right\vert=\sup_{S\subseteqq \Omega}~\left\vert\int_{S} \left(g_Z(\bm{z})-g_{Z'}(\bm{z})\right)d\bm{z}\right\vert\,,
\end{align*}
where $g_Z(\cdot)$ and $g_{Z'}(\cdot)$ are the probability density functions associated with $Z$ and $Z'$.% respectively.
\end{definition}

The following results assume that, for each group attribute $a\in [m]$, the total variation distance between $Z$ and $Z_a$ is bounded from the above by $\distdiff$, i.e., $\tv{Z}{Z_a} \leq \distdiff$. %\,,\qquad\forall~a\in [m]\,.

The following theorem bounds the fairness violation of the voting mechanism $\tilde{v}_Y(\bm{T})$ when the joint distributions of the teachers ensemble and labels $Y$ are roughly similar across the different group classes. 
It shed light on the conditions required for the ensemble votes to transfer fairness knowledge accurately. 

\begin{theorem}
\label{thm:gen_ensemble}
The voting mechanism $\tilde{v}_Y(\bm{T})$ is  $\alpha'$ fair w.r.t. $(X,A,Y)$  and $h(\cdot)$ with 
\begin{equation*}
    \alpha' = \distdiff\cdot \bound\,.
\end{equation*}
\end{theorem}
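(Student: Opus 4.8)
The plan is to treat the randomized voting rule $\bar{\cM}(\cdot)\coloneqq\tilde{v}_Y(\bm{T}(\cdot))$ as the predictor in Definition~\ref{def:a-fair}, reduce its fairness violation to the difference of expectations of a single bounded test function evaluated under the laws of $Z$ and $Z_a$, and then control that difference directly by the total variation distance $\tv{Z}{Z_a}$.

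First I would observe that $h(\tilde{v}_Y(\bm{T}(X)),Y)$ depends on the data $(X,A,Y)$ only through the teacher outputs and label, i.e.\ only through $Z=(\cM^{(1)}_{\theta}(X),\dots,\cM^{(K)}_{\theta}(X),Y)$, together with the additive Gaussian noise of \eqref{eq:sf_t_pate_gnmax}, which is drawn independently of $(X,A,Y)$. This motivates defining the noise-averaged map
\[
    \phi(z)=\EE_{\xi}\bigl[\,h\bigl(\tilde{v}_Y(t_1,\dots,t_K;\xi),\,y\bigr)\bigr],\qquad z=(t_1,\dots,t_K,y),
\]
where $\xi$ denotes the injected noise (ties in the $\argmax$ occur with probability zero, so $\tilde{v}_Y$ is well defined a.s.). By the law of total expectation and the independence of $\xi$ from $(X,A,Y)$, the population and group-$a$ terms of Definition~\ref{def:a-fair} collapse to $\EE_{Z}[\phi(Z)]$ and $\EE_{Z_a}[\phi(Z_a)]$ respectively, so the fairness violation equals $\max_{a\in[m]}\bigl\vert\EE_{Z}[\phi(Z)]-\EE_{Z_a}[\phi(Z_a)]\bigr\vert$.

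Next I would bound this gap through the variational form of total variation. Since $h$ takes values in $[0,\bound]$ (its supremum being $\bound$ by definition, and the fairness functions of Section~\ref{sec:fairness} being nonnegative), averaging over $\xi$ preserves the range, so $\phi$ also lies in $[0,\bound]$ (componentwise when $h$ is vector-valued). Writing the gap as $\int \phi(z)\bigl(g_Z(z)-g_{Z_a}(z)\bigr)\,dz$ and splitting along $P=\{z:g_Z(z)\geq g_{Z_a}(z)\}$, the contribution from $P$ is at most $\bound\int_P(g_Z-g_{Z_a})=\bound\cdot\tv{Z}{Z_a}$ and the contribution from $P^c$ is at least $-\bound\cdot\tv{Z}{Z_a}$, where the identity $\int_P(g_Z-g_{Z_a})=\tv{Z}{Z_a}$ is precisely the sup-over-sets form of the total variation distance stated in the definition above (the set $P$ being its maximizer). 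Hence $\bigl\vert\EE_{Z}[\phi(Z)]-\EE_{Z_a}[\phi(Z_a)]\bigr\vert\leq \bound\cdot\tv{Z}{Z_a}$, and invoking the standing assumption $\tv{Z}{Z_a}\leq\distdiff$ for every $a\in[m]$ and taking the maximum yields $\alpha'=\distdiff\cdot\bound$.

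The main point requiring care, rather than any deep obstacle, is bookkeeping on the constant: using the $L_1$ form of total variation would introduce a spurious factor of two, so the argument must be phrased against the sup-over-sets definition given in the excerpt, for which $P$ realizes the distance exactly and delivers the clean factor $\bound$. A secondary detail is to justify that conditioning on $A=a$ leaves the law of the injected Gaussian noise unchanged, which is exactly what allows the single noise-averaged function $\phi$ to be shared between the conditional and unconditional expectations.
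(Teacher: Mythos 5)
Your proof is correct and follows essentially the same route as the paper's: both average out the injected Gaussian noise to obtain a bounded function of $\bm{z}$ alone, split the resulting integral along the set where one density dominates the other, and bound each piece by $\bound$ times the sup-over-sets form of $\tv{Z}{Z_a}\leq\distdiff$. The only cosmetic difference is that the paper phrases the argument for an arbitrary aggregation function $q$ of the teacher outputs and noise and then specializes to the $\argmax$ voting rule, whereas you work with $\tilde{v}_Y$ directly; your explicit remarks on tie-breaking and on the noise law being unaffected by conditioning on $A=a$ are details the paper leaves implicit.
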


\noindent
The next corollary is a direct consequence of Theorem \ref{thm:gen_ensemble} and provides a sufficient
condition for perfect fairness of the voting mechanism $\tilde{v}_Y(\bm{T})$.
\begin{corollary}
\label{thm:fair_ensemble}
Suppose that the random vector $Z$ is independent of $A$, i.e., 
$\{Z_a\}_{a\in[m]}$ and $Z$ are identically distributed.
Equivalently, the random vector $Z=(\cM_{\theta}^{(1)}(X),\dots,\cM_{\theta}^{(K)}(X),Y)$
is independent of $A$.
Then, the voting mechanism $\tilde{v}_Y(\bm{T})$ is perfectly fair w.r.t. $(X,A,Y)$  and $h(\cdot)$.
\end{corollary}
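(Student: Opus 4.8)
The plan is to derive the corollary directly from Theorem~\ref{thm:gen_ensemble} by showing that the independence hypothesis forces the total variation bound $\distdiff$ to vanish. First I would restate the assumption in distributional terms: saying that the random vector $Z=(\cM_{\theta}^{(1)}(X),\dots,\cM_{\theta}^{(K)}(X),Y)$ is independent of $A$ is exactly the claim that, for every $a\in[m]$, the conditional law of $Z$ given $A=a$ coincides with the unconditional law of $Z$; in the notation of the theorem this reads $Z_a$ and $Z$ are identically distributed.

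Second, I would invoke the elementary fact that identically distributed random vectors have total variation distance zero. Since the density functions satisfy $g_{Z_a}=g_Z$, the defining supremum gives $\tv{Z}{Z_a}=0$ for every $a\in[m]$. Consequently the standing assumption of Theorem~\ref{thm:gen_ensemble}, namely $\tv{Z}{Z_a}\leq\distdiff$, holds with the admissible choice $\distdiff=0$.

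Third, I would apply Theorem~\ref{thm:gen_ensemble} with this value $\distdiff=0$. The theorem then guarantees that $\tilde{v}_Y(\bm{T})$ is $\alpha'$-fair with $\alpha'=\distdiff\cdot\bound=0\cdot\bound=0$, which is precisely the definition of perfect fairness; this closes the argument.

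The corollary is essentially immediate, so there is no genuine obstacle; the only point that warrants a line of care is confirming that the quantity whose fairness is being measured depends on $(X,A,Y)$ only through $Z$ together with the independent Gaussian perturbation in \eqref{eq:sf_t_pate_gnmax}. Indeed $h(\tilde{v}_Y(\bm{T}(X)),Y)$ is a randomized function of the teacher outputs and the label $Y$, i.e.\ of $Z$, with noise drawn independently of $(X,A,Y)$; hence $\mathbb{E}[h(\tilde{v}_Y(\bm{T}(X)),Y)\mid A=a]$ is governed by the law of $Z_a$ and the unconditional expectation by the law of $Z$. Once $Z_a$ and $Z$ are shown to be identically distributed, this shared dependence makes the two expectations coincide for every $a$, so the fairness violation $\xi$ is identically zero; the remaining bookkeeping is exactly what Theorem~\ref{thm:gen_ensemble} already packages.
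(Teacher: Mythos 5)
Your proposal is correct and follows exactly the paper's own argument: independence means $Z_a$ and $Z$ are identically distributed, hence $\tv{Z}{Z_a}=0$ for every $a\in[m]$, and Theorem~\ref{thm:gen_ensemble} with $\distdiff=0$ yields $0$-fairness, i.e.\ perfect fairness. Your closing remark justifying that the fairness violation depends on $(X,A,Y)$ only through the law of $Z$ and the independent Gaussian noise is a sound (and slightly more careful) articulation of what the paper's Theorem~\ref{thm:gen_ensemble} already encapsulates.
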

\noindent 

In summary, the results above provide sufficient conditions for the voting scheme of the ensemble to achieve a required fairness guarantees, which will be transferred to the subsequential student model.

% \noindent \textbf{Implementation details}
% Similarly as with Section \ref{sec:first_pate} fair teachers here do not need to provide fair labels for all samples $\{ X_i \}^n_{i=1} $ but to a random subset denote $D_U = \{X_i, \tilde{Y}_i \}^{n_U}_{i=1}$ to save privacy budget. Hence the student  at the moment is equipped with $D_U $ and the original public data $ D $ which we can still utilize to improve accuracy.

% The student will learn the following unfair models:

% \begin{align}
%           \hat{\theta} &  = \argmin_{\theta} ~\frac{1}{n_U} \sum^{n_U}_{i=1} \cL(\cM_{\theta}(X_i), \tilde{Y}_i)  + \lambda \| \theta - \theta^*\|^2_2,
%          \label{eq:sf_t_pate_main}
%     \end{align}

% where $\lambda > 0$ is the fairness parameter that controls the trade-off between accuracy (the second term) and fairness (the first term).

% {\small%%
% \begin{algorithm}[t]
%   \caption{$\mbox{SF}_T$-PATE  }
%   \label{alg:alg1}
%   \setcounter{AlgoLine}{0}
%   \SetKwInOut{Input}{input}

%   \Input{$D=(X_i, A_i, Y_i)_{i=1}^n:$ Training data; \\
%       $D_U =\{X_i, Y_i\}^{n_U}_{i=1} \sim D$: A random subset of $D$  .\\
%      $K, \alpha$: Hyper-parameters of model}
 
%  Partition $D$ into K disjoint subsets $D_i$\\
%  \ForEach{$i \in [K]$}{
%     Train a fair classifier $\cM^{(i)}_{\theta}: \mathcal{X}  \to \cY$ to predict class label  using $D_i$
%     }
%  Provide private noisy class labels $\tilde{Y}_i$ for $D_U$ (see Equation \eqref{eq:sf_t_pate_gnmax})
 
% Train a standard student classifier on $D_U =\{X_i, \tilde{Y}_i\}^{n _U}_{i=1} $ (see Equation \eqref{eq:sf_t_pate_main})
 
% \end{algorithm}
% }

\section{Privacy analysis}
\label{sec:privacy_computation}

The theoretical considerations above relate with the ability of the proposed models to preserve fairness. It remains to be shown that models are also private. 
The section first derives the sensitivity of the voting schemes adopted by $\mbox{SF}_S$- and $\mbox{SF}_T$-PATE. 
Given sensitive dataset $D_S$, define 
$\#_A(\bm{T}, D_S) = \langle \#_a(\bm{T}(X_i) \,|\, X_i \in D_S, a \in \cA \rangle$ and 
$\#_Y(\bm{T}, D_S) = \langle \#_y(\bm{T}(X_i) \,|\, X_i \in D_S, y \in \cY \rangle$, as the vectors of vote counts collected by the ensemble in $\mbox{SF}_S$- and $\mbox{SF}_T$-PATE, respectively.
% We provide the privacy guarantees of our proposed $\mbox{SF}_S$-PATE and  $\mbox{SF}_T$-PATE in this section. 
% It turns out that, with the noise of the same magnitude injected in voting schemes, 
% $\mbox{SF}_S$-PATE and $\mbox{SF}_T$-PATE have the same privacy guarantees. 
% We start by deriving the sensitivity of voting schemes in $\mbox{SF}_S$-PATE and $\mbox{SF}_T$-PATE separately here. 
% First, for a given private data $D_S$, let $\#_{A}(\bm{T}(X), D_S)  = \big( \#_a(\bm{T}(X)) \big)_{a=1}^m $
% denote a  vector of voting counts  in $\mbox{SF}_S$-PATE while $\#_{Y}(\bm{T}(X), D_S)  =  \big( \#_y(\bm{T}(X) \big)_{y=1}^{|\cY|}$ a vector of voting counts in $\mbox{SF}_T$-PATE.
\begin{theorem}
\label{thm:sensitivty_sf_s-pate}
 For any dataset $D_S$,
\(
\displaystyle \max_{D_S \sim D_S'}\| \#_{A}(\bm{T}, D_S)   - \#_{A}(\bm{T}, D_S') \|_2 = \sqrt{2}. 
\)

\end{theorem}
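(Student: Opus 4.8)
The plan is to leverage the disjoint-partition structure of the ensemble, reducing the global sensitivity to the effect of relabeling a single group attribute on one teacher's vote. I would first recall that each teacher $\cM^{(k)}$ is trained on a non-overlapping partition $D_k$ of the data; since $D_S \sim D_S'$ differ in exactly one group attribute, that attribute lies in a single partition $D_j$, so only teacher $\cM^{(j)}$ is affected and, for every $k \neq j$, the prediction $\cM^{(k)}(X)$ is identical under $D_S$ and $D_S'$.

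Next I would trace the perturbation through the counts. For a queried point $X$, each entry of the vote vector is $\#_a(\bm{T}(X)) = |\{k \in [K] : \cM^{(k)}(X) = a\}|$, a sum of indicators in which each teacher contributes to exactly one coordinate. As only $\cM^{(j)}$ can differ and it casts a single vote, its prediction can shift from some $a \in \cA$ to some $a' \neq a$ (or stay fixed). In the former case $\#_a$ drops by one and $\#_{a'}$ rises by one while all other coordinates are unchanged, so the difference vector has exactly two nonzero entries, $-1$ and $+1$, with Euclidean norm $\sqrt{(-1)^2 + (+1)^2} = \sqrt{2}$; if the relabeling leaves $\cM^{(j)}(X)$ fixed the difference is $0$.

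This yields the upper bound $\| \#_A(\bm{T}, D_S) - \#_A(\bm{T}, D_S') \|_2 \leq \sqrt{2}$ over all adjacent pairs, and $\sqrt{2}$ is attained whenever the lone relabeling flips $\cM^{(j)}$'s vote on $X$, so the maximum equals $\sqrt{2}$. The argument is essentially bookkeeping once the disjoint-partition reduction is in place; the only genuine subtlety is upgrading the inequality to an equality, which requires exhibiting a single relabeling that actually flips a teacher's vote on some query so that the bound is attained rather than merely not exceeded.
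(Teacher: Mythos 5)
Your proof follows essentially the same route as the paper's: the disjoint partitioning of the teachers' training data means an adjacent pair $D_S \sim D_S'$ can affect only one teacher, whose single vote shifting from $a$ to $a'$ changes exactly two coordinates of the count vector by $\pm 1$, giving $\ell_2$ norm $\sqrt{2}$. If anything, you are slightly more careful than the paper, which stops at the ``at most two entries'' bookkeeping and does not remark that attaining the stated equality requires a relabeling that actually flips the affected teacher's vote.
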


% \begin{proof}
% Since $D_S$ differs with $D_S'$ by at most one single sample and the dataset $D_S$ is partitioned into disjoint
% parts, that  sample can only belong to a single part of the partition, say $D_i$. As a consequence,
% the teacher classifier associated with $D_i$, which is $\cM^{(i)}(X)$, 
% can change its prediction from $a \in \cA$ to $a' \in \cA$.  
% Hence, $ \#_a(\bm{T}(X))$ decreases by 1 while $ \#_{a'}(\bm{T}(X))$ increases by 1  when switching from $D_S$ to $D_S'$.  To sum up, the two  vectors of voting counts $ \#_A(\bm{T}(X), D_S) $ and  $\#_A(\bm{T}(X), D_S') $ differ by at most two entries, which indicates that the $\ell_2$ sensitivity of reporting voting counts is $\sqrt{2}$.
% \end{proof}

With  similar arguments, it can be shown that the sensitivity in reporting voting counts  in $\mbox{SF}_T$-PATE is also $\sqrt{2}$. 

\begin{theorem}
 For any dataset $D_S$, 
\(
\displaystyle
\max_{D_S \sim D_S'}\| \#_{Y}(\bm{T}, D_S)   - \#_{Y}(\bm{T}, D_S') \|_2 = \sqrt{2}. 
\)
\end{theorem}

% \begin{proof}
% When we change the group attributes of one sample $X \in D$, this can affect up to one fair teacher, say
% $\cM^{(i)}$. In a similar analysis as in proof of Theorem \ref{thm:sensitivty_sf_s-pate}, in the worst case,
% the fair classifier  $\cM^{(i)}$ can change its prediction from $y \in \cY$ to $y' \in \cY$. 
% Hence, the vectors of voting counts of $\#_{Y}(\bm{T}(X), D_S) $ and $\#_{Y}(\bm{T}(X), D_S') $ differ by at most two entries. Therefore, the $\ell_2$ sensitivity of reporting counts in $\mbox{SF}_T$-PATE is $\sqrt{2}$. 
% \end{proof}

%%%%%%%%%%%%%%
Both proofs (detailed in the Appendix) rely on bounding the maximal difference of the count vectors $\#_A$ and $\#_Y$ arising across neighboring datasets.

\begin{proposition} 
\label{prop:3}
The voting scheme with private Gaussian noise $\mathcal{N}(0, \sigma^2)$  and sensitivity of $\sqrt{2}$ satisfies $(\gamma,\nicefrac{\gamma}{\sigma^2}) $-RDP for all $\gamma \geq 1$.
\end{proposition}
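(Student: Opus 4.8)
The plan is to treat the voting scheme as a Gaussian mechanism followed by a deterministic post-processing step, and to invoke the standard R\'enyi divergence computation for isotropic Gaussians. Recall that a mechanism $\cM$ is $(\gamma,\tau)$-RDP if $D_\gamma(\cM(D_S)\,\|\,\cM(D_S')) \le \tau$ for every pair of adjacent datasets $D_S \sim D_S'$, where $D_\gamma(P\|Q) = \frac{1}{\gamma-1}\log\int p(z)^\gamma q(z)^{1-\gamma}\,dz$ is the R\'enyi divergence of order $\gamma$. The mechanism at hand releases $\tilde A = \argmax_{a}\{\#_a(\bm{T}(X)) + \mathcal{N}(0,\sigma^2)\}$, which is the composition of (i) the Gaussian mechanism that perturbs the integer count vector $\#_A(\bm{T}, D_S)$ by independent noise $\mathcal{N}(0,\sigma^2)$ in each coordinate, and (ii) the deterministic $\argmax$ map. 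First I would bound the R\'enyi divergence of the noisy count vectors across adjacent datasets, and then discharge the $\argmax$ via post-processing.

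For the first step, fix adjacent $D_S \sim D_S'$ and write $\mu = \#_A(\bm{T}, D_S)$ and $\mu' = \#_A(\bm{T}, D_S')$. Because the noise is added independently per coordinate, the noisy counts are distributed as the isotropic Gaussians $\mathcal{N}(\mu,\sigma^2 I)$ and $\mathcal{N}(\mu',\sigma^2 I)$. I would use the closed form of the R\'enyi divergence between two Gaussians sharing a common covariance $\Sigma = \sigma^2 I$, namely
\begin{equation*}
    D_\gamma\!\left(\mathcal{N}(\mu,\sigma^2 I)\,\|\,\mathcal{N}(\mu',\sigma^2 I)\right) = \frac{\gamma}{2}(\mu-\mu')^{\!\top}\Sigma^{-1}(\mu-\mu') = \frac{\gamma\,\|\mu-\mu'\|_2^2}{2\sigma^2}\,,
\end{equation*}
which follows by completing the square inside the Gaussian integral: the integrand $p^\gamma q^{1-\gamma}$ is again a Gaussian kernel whose normalization leaves a residual exponent $\gamma(1-\gamma)\|\mu-\mu'\|_2^2/(2\sigma^2)$, and dividing by $\gamma-1$ produces the stated quadratic form (finite for every $\gamma \ge 1$, recovering the KL divergence as $\gamma \to 1$). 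Invoking Theorem \ref{thm:sensitivty_sf_s-pate}, which gives $\|\mu-\mu'\|_2 \le \sqrt{2}$, this yields $D_\gamma \le \frac{\gamma\cdot 2}{2\sigma^2} = \frac{\gamma}{\sigma^2}$ for all $\gamma \ge 1$.

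It then remains to handle the $\argmax$. Since $\tilde A = \tilde v_A(\bm{T}(X))$ is obtained by a fixed deterministic map applied to the noisy count vector, a map that does not depend on which of $D_S, D_S'$ produced the vector, the data-processing inequality for R\'enyi divergence (equivalently, the post-processing property of RDP) gives $D_\gamma(\tilde v_A(\cdot, D_S)\,\|\,\tilde v_A(\cdot, D_S')) \le \gamma/\sigma^2$. Taking the supremum over adjacent $D_S \sim D_S'$ establishes $(\gamma, \gamma/\sigma^2)$-RDP, and the identical argument applied to $\#_Y$ (whose sensitivity is also $\sqrt{2}$) covers the $\mbox{SF}_T$-PATE voting scheme.

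The only substantive ingredient is the Gaussian R\'enyi divergence identity; everything else is bookkeeping. The main point to verify carefully is that the per-coordinate independence of the noise makes the covariance isotropic, so that the quadratic form collapses exactly to $\|\mu-\mu'\|_2^2/\sigma^2$ and the sensitivity bound of $\sqrt{2}$ enters only through the squared $\ell_2$ norm. This is precisely what keeps the final bound at $\gamma/\sigma^2$, independent of the ambient dimension $m$ (or $|\cY|$).
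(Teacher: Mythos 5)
Your proof is correct and takes essentially the same approach as the paper: the paper's entire proof is the citation ``follows directly from Mironov (2017),'' and your argument---the closed-form R\'enyi divergence $\frac{\gamma\|\mu-\mu'\|_2^2}{2\sigma^2}$ between isotropic Gaussians with common covariance, the sensitivity bound $\|\mu-\mu'\|_2\le\sqrt{2}$ from Theorem \ref{thm:sensitivty_sf_s-pate}, and the discharge of the $\argmax$ via the data-processing inequality---is precisely the content of that cited result. You have simply made self-contained what the paper delegates to the reference.
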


The proof of this proposition follows directly from \cite{mironov2017renyi}. Thus, for fixed standard deviation $\sigma$ and any $X \in \bar{D}$, the private voting schemes $\tilde{v}_{A}(\bm{T}(X))$ and $\tilde{v}_{Y}(\bm{T}(X))$ both satisfy $(\gamma,\nicefrac{\gamma}{\sigma^2}) $-RDP.
Note that the training uses $|\bar{D}| = s$ samples. The following results report composition bounds used to derive the final privacy cost.

\begin{theorem}[Composition for RDP]
\label{app:rdp_comp}
Let $\cM$ be a mechanism consisting of a sequence of adaptive mechanisms $\cM_1, \cM_2, \ldots, \cM_k$, such that for any $i \in [k]$, $\cM_i$ guarantees $(\gamma, \epsilon_i)$-RDP. Then $\cM$ guarantees $(\gamma, \sum_{i=1}^k \epsilon_i)$-RDP.
\end{theorem}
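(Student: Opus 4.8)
The plan is to work directly from the Rényi-divergence characterization of RDP: for a fixed order $\gamma>1$, a mechanism $\cM$ is $(\gamma,\epsilon)$-RDP iff for every adjacent pair $D\sim D'$ the order-$\gamma$ Rényi divergence between the output laws satisfies $\frac{1}{\gamma-1}\log \EE_{y\sim \cM(D')}\left[\left(p(y)/q(y)\right)^{\gamma}\right]\le \epsilon$, where $p,q$ are the densities of $\cM(D)$ and $\cM(D')$. It is cleanest to track the exponentiated quantity $Q_\gamma(p\|q)\coloneqq \EE_{y\sim q}\left[\left(p(y)/q(y)\right)^{\gamma}\right]$, so that the RDP condition reads $Q_\gamma(p\|q)\le \exp\left((\gamma-1)\epsilon\right)$ and the target is to prove $Q_\gamma\le \exp\left((\gamma-1)\sum_{i=1}^k \epsilon_i\right)$ for the composed mechanism (the boundary case $\gamma=1$, i.e. KL, then follows by continuity).

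First I would fix an adjacent pair $D\sim D'$ and describe the adaptive composition explicitly: the joint output is $(Y_1,\dots,Y_k)$ with $Y_i=\cM_i(D; Y_1,\dots,Y_{i-1})$ permitted to depend on the earlier outputs. Writing $p(y_{1:k})$ and $q(y_{1:k})$ for the joint densities under $D$ and $D'$, the chain rule factorizes each into conditionals, $p(y_{1:k})=\prod_{i=1}^k p_i(y_i\mid y_{1:i-1})$ and likewise for $q$. Substituting these into $Q_\gamma$ expresses the quantity to bound as a nested integral, against $q$, of the product $\prod_{i=1}^k \left(p_i(y_i\mid y_{1:i-1})/q_i(y_i\mid y_{1:i-1})\right)^{\gamma}$.

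The core step is a peeling argument from the innermost coordinate outward. Conditioned on any fixed history $y_{1:k-1}$, the mechanism $\cM_k$ is $(\gamma,\epsilon_k)$-RDP, so integrating out $y_k$ first contributes a factor at most $\exp\left((\gamma-1)\epsilon_k\right)$; since this bound is uniform in the conditioning value, it factors out of the remaining integral. Iterating this (equivalently, an induction on $k$) removes one factor $\exp\left((\gamma-1)\epsilon_i\right)$ per stage and leaves $Q_\gamma\le \prod_{i=1}^k \exp\left((\gamma-1)\epsilon_i\right)=\exp\left((\gamma-1)\sum_i \epsilon_i\right)$. Applying $\frac{1}{\gamma-1}\log$ and taking the supremum over adjacent pairs $D\sim D'$ gives the claimed $(\gamma,\sum_i \epsilon_i)$-RDP guarantee.

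The main obstacle is handling adaptivity correctly: because $\cM_i$ may depend on the realized earlier outputs, its RDP guarantee must be invoked as a statement holding for every fixed history $y_{1:i-1}$, not merely in expectation. The peeling works precisely because $\exp\left((\gamma-1)\epsilon_i\right)$ is a history-independent constant that can be pulled out of the outer integral; making this uniformity explicit, and justifying the Tonelli interchange that lets the coordinates be integrated one at a time, is where the care is needed. This is exactly the argument of \cite{mironov2017renyi}, of which the stated theorem is a restatement.
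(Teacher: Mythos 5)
Your proof is correct, and there is nothing in the paper to diverge from: the paper states this composition theorem without proof, importing it as a known result from the R\'enyi differential privacy literature \cite{mironov2017renyi}. Your peeling argument---factorizing the joint density via the chain rule, bounding the innermost conditional integral by $\exp\left((\gamma-1)\epsilon_k\right)$ uniformly over histories, and pulling that constant out before iterating---is precisely the standard proof of adaptive RDP composition in that reference, including the key observation that adaptivity is handled by invoking each $\cM_i$'s guarantee pointwise in the realized history rather than in expectation.
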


\begin{theorem}[From RDP to DP]
\label{thm:rdp_2_dp}
Let $\cM$ be an $(\gamma, \epsilon)$-RDP mechanism, then $\cM$ is also $(\epsilon +  \frac{\log \nicefrac{1}{\delta}}{\gamma-1}, \delta)$-DP for any $\delta \in (0,1)$.
\end{theorem}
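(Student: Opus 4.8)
The plan is to follow the standard RDP-to-DP conversion argument (as in Mironov~\cite{mironov2017renyi}), unpacking the R\'enyi-divergence bound into a moment condition on the likelihood ratio and then applying Markov's inequality. Fix any two adjacent inputs and let $P$ and $Q$ denote the resulting output distributions of $\cM$; the $(\gamma,\epsilon)$-RDP hypothesis states that the R\'enyi divergence of order $\gamma$ satisfies $D_\gamma(P\|Q)\le\epsilon$, which is equivalent to the moment bound $\EE_{x\sim P}\!\left[(P(x)/Q(x))^{\gamma-1}\right]\le\exp((\gamma-1)\epsilon)$. The goal is to establish $P(S)\le\exp(\epsilon')\,Q(S)+\delta$ for every measurable event $S$, with $\epsilon'=\epsilon+\nicefrac{\log(1/\delta)}{\gamma-1}$, which is precisely the desired $(\epsilon',\delta)$-DP guarantee.

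First I would split the sample space into a ``good'' region $G=\{x:P(x)/Q(x)\le\exp(\epsilon')\}$ and its complement, the ``bad'' region $B$. On $G\cap S$ the likelihood ratio is bounded by $\exp(\epsilon')$ by construction, giving $P(G\cap S)\le\exp(\epsilon')\,Q(S)$ directly. On $B\cap S$ I would simply use $P(B\cap S)\le P(B)$, so that $P(S)\le\exp(\epsilon')\,Q(S)+P(B)$ and the problem reduces to showing that $P$ places mass at most $\delta$ on the set where the likelihood ratio is large.

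Next I would control $P(B)$ via Markov's inequality applied to the nonnegative random variable $(P(x)/Q(x))^{\gamma-1}$ under $P$. Since $B=\{x:(P(x)/Q(x))^{\gamma-1}>\exp((\gamma-1)\epsilon')\}$, Markov's inequality yields $P(B)\le\exp(-(\gamma-1)\epsilon')\,\EE_{x\sim P}[(P(x)/Q(x))^{\gamma-1}]\le\exp((\gamma-1)(\epsilon-\epsilon'))$, where the final step invokes the moment bound above. Substituting the chosen value of $\epsilon'$ makes the exponent equal to $\log\delta$, so $P(B)\le\delta$. Combining the two regions gives $P(S)\le\exp(\epsilon')\,Q(S)+\delta$; since the adjacent pair and the event $S$ were arbitrary, the theorem follows.

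All of these computations are routine. The only place requiring care is the bookkeeping between the two measures---namely, that the $\gamma$-th moment of the ratio taken under $Q$ equals the $(\gamma-1)$-th moment taken under $P$, so that the R\'enyi-divergence hypothesis feeds cleanly into the Markov step. The main (minor) obstacle is to reverse-engineer the threshold $\epsilon'$ so that the Markov tail bound lands exactly at $\delta$; once that calibration is fixed, the decomposition of $S$ closes the argument immediately.
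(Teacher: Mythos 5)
Your proof is correct, but note that the paper itself offers no proof of this statement: it is imported verbatim from the R\'enyi differential privacy framework of Mironov \cite{mironov2017renyi} (the appendix only proves the sensitivity theorems of Section \ref{sec:privacy_computation}), so there is no in-paper argument to compare against. Your route --- splitting an arbitrary event $S$ against the bad set $B=\{x: P(x)/Q(x)>\exp(\epsilon')\}$ and killing $P(B)$ by Markov's inequality applied to $(P/Q)^{\gamma-1}$ under $P$ --- is the standard ``tail bound'' argument (the same one underlying the moments-accountant conversion of Abadi et al.~\cite{abadi:16}), and every step checks out: the change-of-measure identity $\EE_{Q}[(P/Q)^{\gamma}]=\EE_{P}[(P/Q)^{\gamma-1}]$ correctly translates the R\'enyi bound into the moment bound you use, monotonicity of $t\mapsto t^{\gamma-1}$ (valid since $\gamma>1$, which the statement implicitly requires) justifies rewriting $B$ as a moment-threshold event, and your calibration of $\epsilon'$ makes the tail land exactly at $\delta$. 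Mironov's original proof is different in mechanics: it applies H\"older's inequality to get the probability-preservation bound $P(S)\le\bigl(e^{\epsilon}Q(S)\bigr)^{(\gamma-1)/\gamma}$ and then does a case analysis on whether $Q(S)$ is small or large. The two arguments yield the same constants; yours has the advantage of generalizing immediately to composition-style accounting (bounding moments of the privacy loss random variable), while H\"older's gives the sharper intermediate inequality $P(S)\le(e^{\epsilon}Q(S))^{(\gamma-1)/\gamma}$, which is occasionally useful on its own.
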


\noindent
From Theorem \ref{app:rdp_comp}, Proposition \ref{prop:3} and Theorem \ref{thm:rdp_2_dp}, and for fixed noise parameter $\sigma$ used in the voting scheme, the SF-PATE algorithms satisfy $( \nicefrac{s\gamma}{\sigma^2} +\frac{\log \nicefrac{1}{\delta}}{\gamma -1}, \delta)$-DP.

%%%%%%%%%%%%%%%%%%%%%%%%%%%%%%%%%%%%%%%%%%%%%%%%%%%%%%

\section{Experiments}
\label{sec:experiments}
This section evaluates the performance of the proposed SF-PATE algorithms against the prior approaches of \cite{tran2021differentially} and \cite{mozannar2020fair}, named {\em PF-LD} and {\em M}, respectively, and representing the current state-of-the-art for learning private and fair classifiers in the context studied in this paper.  
In addition to assess the competitiveness of the proposed solutions, the evaluation focuses on two key aspects: (1) It shows that SF-PATE 
can naturally handle new fairness notions, even when no viable privacy analysis exists about these notions. (2) It shows that SF-PATE has a low computational overhead 
compared to classical (non-private, non-fair) classifiers, rendering it a practical choice for the training of very large models. 
These two properties are unique to SF-PATE, enabling it to be applicable to a broad class of challenging decision tasks.

\smallskip\noindent{\textbf{Datasets and Settings}.} 
The evaluation is conducted using four UCI tabular datasets: Bank, Parkinson, Income and Credit Card \cite{UCIdatasets,article}, and UTKFace \cite{DBLP:conf/bmvc/HwangPKDB20}, a vision dataset. 
The latter is used to demonstrate the scalability of SF-PATE when trained on a ResNet 50 classifier with over 23 million parameters. All experiments are repeated 100 times using different random seeds.  

\smallskip\noindent{\textbf{Models and Hyperparameters}.}
To ensure a fair comparison, the experimental analysis uses the same architectures and parameters for all models (including the baselines models \emph{PF-LD} \cite{tran2021differentially} and \emph{M} \cite{jagielski2019differentially}). 
For tabular datasets, the underlying classifier is a feedforward neural network with two hidden layers and nonlinear ReLU activation. 
The fair, non-private, classifiers adopted by the two SF-PATE variants implement a Lagrangian Dual scheme \cite{fioretto2020lagrangian}, which is also the underlying scheme adopted by baseline models. 
For vision tasks on the UTK-Face dataset, the evaluation uses a Resnet 50 classifier. 

A more detailed description of these approaches, the settings adopted, and the datasets is deferred to the Appendix.

\vspace{-8pt}
\subsection{Accuracy, Privacy, and Fairness trade-off}

This section compares the accuracy, fairness, and privacy tradeoffs of the proposed models variants SF$_S$- and SF$_T$-PATE against the baseline models \emph{PF-LD} and \emph{M} on the tabular datasets. Figure \ref{fig:credit_card_compare} illustrates the accuracy (top subplots) and fairness violations $\xi(\theta, \bar{D})$ (bottom subplots) at varying of the privacy loss $\epsilon$ (x-axis). 
The figures clearly illustrate that both SF-PATE variants achieve better accuracy/fairness tradeoffs for various privacy parameters. 
The property of retaining high accuracy while low fairness violation is especially relevant in the tight privacy regime adopted ($\epsilon < 1$).
Additional results on other datasets are deferred to the Appendix, and show similar trends.

\begin{figure*}[tb]
\centering
\begin{subfigure}[b]{0.32\textwidth}
\includegraphics[width = 1.0\linewidth]{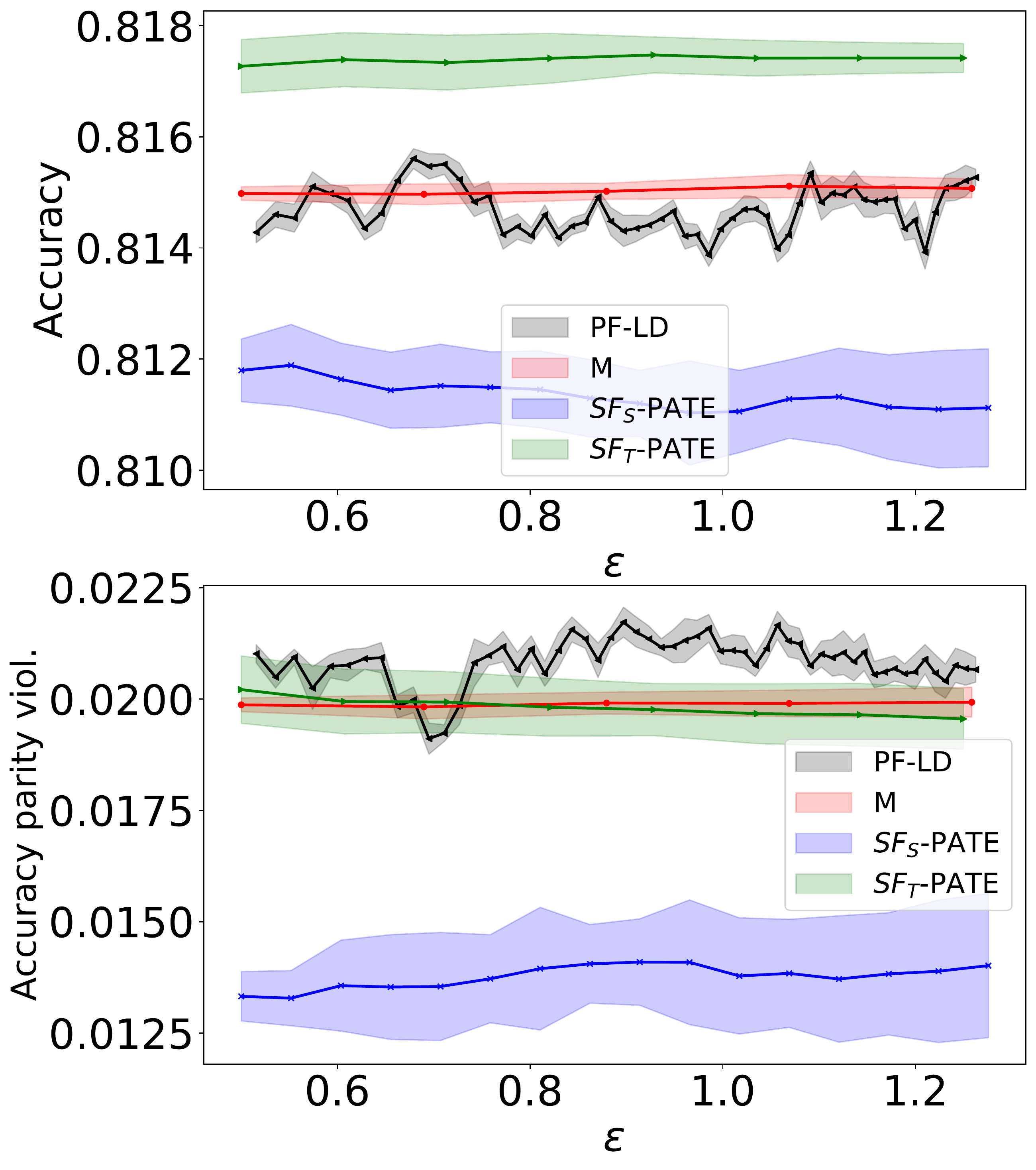}
\caption{Accuracy parity fairness}
\end{subfigure}
\begin{subfigure}[b]{0.32\textwidth}
\includegraphics[width = 1.0\linewidth]{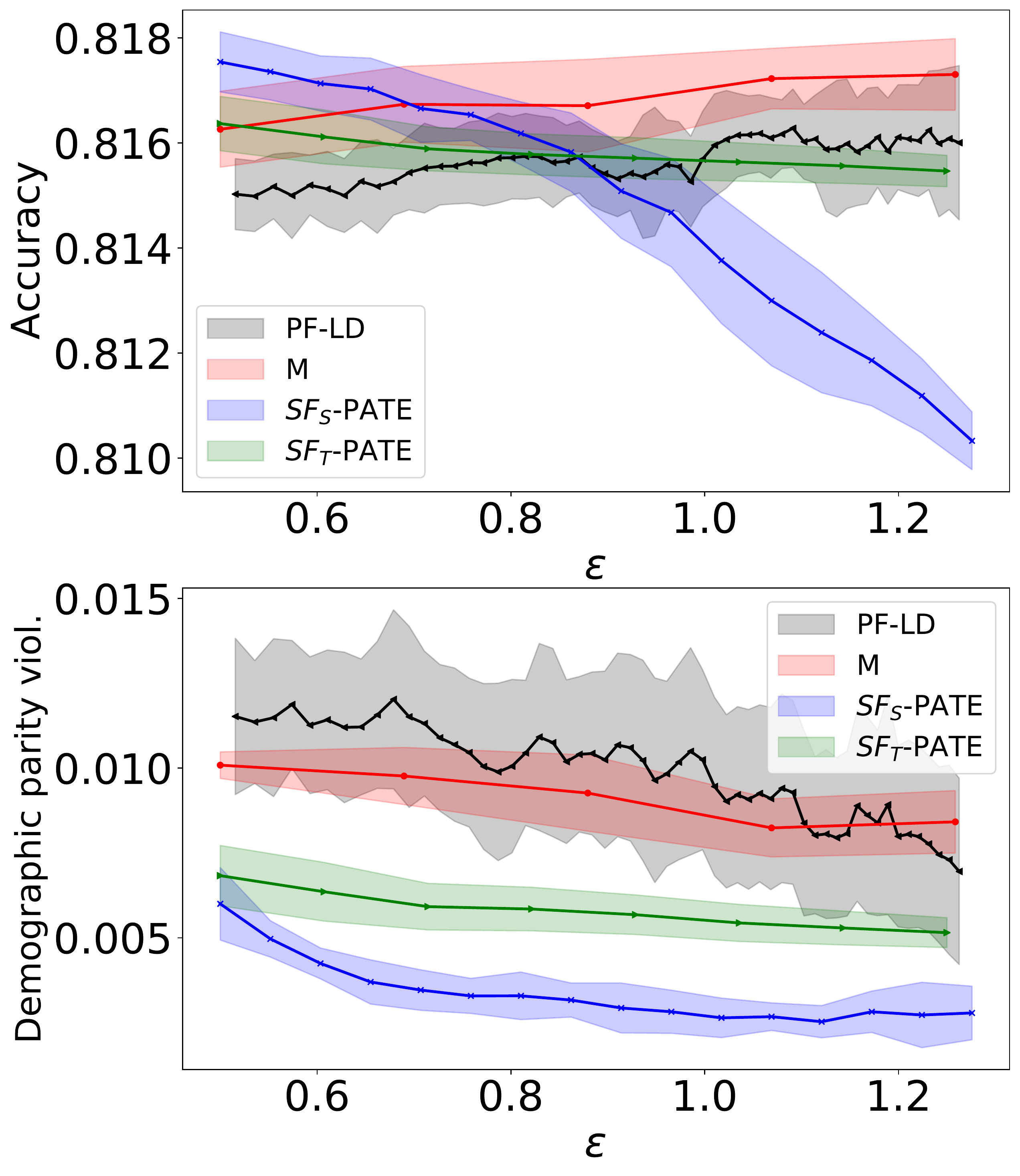}
\caption{Demographic parity fairness}
\end{subfigure}
\begin{subfigure}[b]{0.32\textwidth}
\includegraphics[width = 1.0\linewidth]{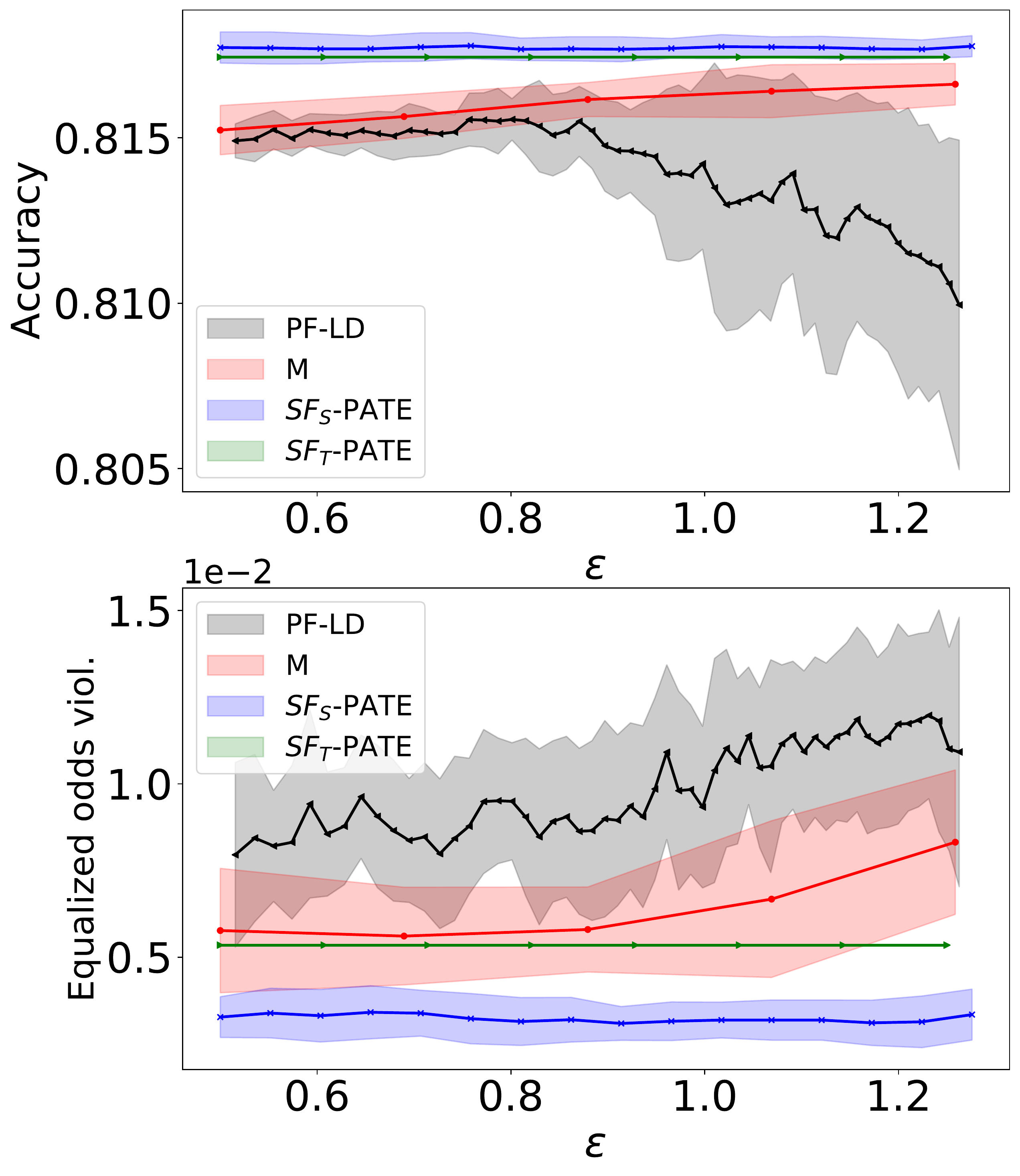}
\caption{Equalized odds fairness}
\end{subfigure}
\caption{Credit card data}
\label{fig:credit_card_compare}
\end{figure*}

First, notice that, consistently with the privacy-preserving ML literature showing that teachers ensemble models can outperform, in terms of accuracy, DP-SGD based models \cite{papernot:16,uniyal2021dp}, the SF-PATE models typically outperform the DP-SGD based \emph{FP-LD} model. Remarkably, both SF-PATE variants also report lower fairness violations, for the fairness notions analyzed, indicating the strength of this approach.
Additionally, recall that both $\mbox{SF}_S$-PATE and \emph{M} generate privacy-preserving group features and feed them to a fair model. However, in contrast to \emph{M}, the model ensemble used in $\mbox{SF}_S$-PATE exploits the relationship between the sample features $X$ and its associated group information $A$ to derive more accurate private group information $\tilde{A}$ (as already shown in Figure \ref{fig:accuracy_noisy_z}). This results in student models which are often more accurate than the baseline \emph{M}, while also being fairer. The second SF-PATE variant, $\mbox{SF}_T$-PATE, which operates by privately transferring the fairness knowledge from a teachers ensemble to a student, is found to always outperform both \emph{M} and \emph{PF-LD} on tabular datasets.
Finally, our analysis also shows that the average accuracy of the SF-PATE models is within 2\% of their non-private counterpart.  

%  For two proposed approaches, we set  the number of teachers $K= 300$ consistently for all datasets. We can see that our two proposed methods are comparable with PF-LD, and outperforms the \emph{M} in term of fairness
 
%  \cuong{Probably we might need to say the reason why we only focus on the tight privacy range $\epsilon \in [0.5, 1.2]$ since if $\epsilon >5.0$, the private noisy group labels are almost the same with true group labels. This can be verified by checking the formula of randomized response, the accuracy is $\frac{\exp(\epsilon)}{1 + \exp(\epsilon) } \approx 1$ when $\epsilon >5$ }

\vspace{-8pt} 
\subsection{Handling new group fairness notions}

Next, the evaluation focuses on the ability of the SF-PATE framework to handle arbitrary fairness notions, even if a privacy analysis is missing, as long as a fair model can be derived. This is due to the framework property to rely on the use of black-box (non-private but fair) models. 
This is in sharp contrast with state-of-the-art model \emph{PF-LD}, that requires the development of a privacy analysis for each fairness property to be enforced, in order to calibrate the amount of noise to apply in both primal and dual step. 

To demonstrate this benefit, this section first introduces a new fairness notion, which generalizes demographic parity. 
\begin{definition}[Generalized demographic parity] 
It requires the distribution over the predictions of $\cM$ to be statistically independent of the protected group attribute A. That is, for all $a \in \cA$, and $\eta \in [0, 1]$
\(
    \Pr( \cM(X) \geq \eta | A = a ) = \Pr( \cM(X) \geq \eta ).
\)
\end{definition}
Note that the above is a general version of demographic parity, which states that $\pr{\cM_{\theta}(X) \geq 0.5 \mid A= a}  = \pr{\cM_{\theta}(X) \geq 0.5}$. 
Generalized demographic parity is useful in settings where the decision threshold (e.g, 0.5 above) might not be available at the training time. Matching the distribution of score functions (e.g., credit or income scores) among different groups attributes guarantees demographic parity fairness regardless of the decision threshold adopted. Such fairness constraint can be implemented by equalizing different order statistics of the score functions between group and population level:
 $$
    \EE[\cM_{\theta}(X)^h | A=a]  = \EE[\cM_{\theta}(X)^h] \quad \forall \ a \in [m],  h = 1, 2, \ldots, H 
$$
Practically, we let H = 2, and use the Lagrangian Dual method of \cite{fioretto2020lagrangian} to enforce these constraints during training. 
{\em Notice that it is highly non-trivial to derive a privacy analysis for such fairness notion}--the PF-LD model only does so for $H=1$, and, thus, not viable in this setting.
% However, we found empirically matching up to second order statistics of the score functions is satisfactory. In other words, we implicitly assume the score functions of each group or population follows a univariate Gaussian distribution. Our overall fairness constraint will be the following: 
%   $$ \mathbf{E}[\cM_{\theta}(X) | A=a] =   \mathbf{E}[\cM_{\theta}(X)]  \  ; \mathbf{E}[\cM_{\theta}(X)^2 | A=a] =   \mathbf{E}[\cM_{\theta}(X)^2] $$
%We can employ the Dual Lagrangian method from \cite{fioretto2020lagrangian} to enforce these constraints to the classifier. 

\begin{figure*}[tb]
    \centering
    \begin{subfigure}[b]{0.46\textwidth}
    \includegraphics[width = 1.0\linewidth]{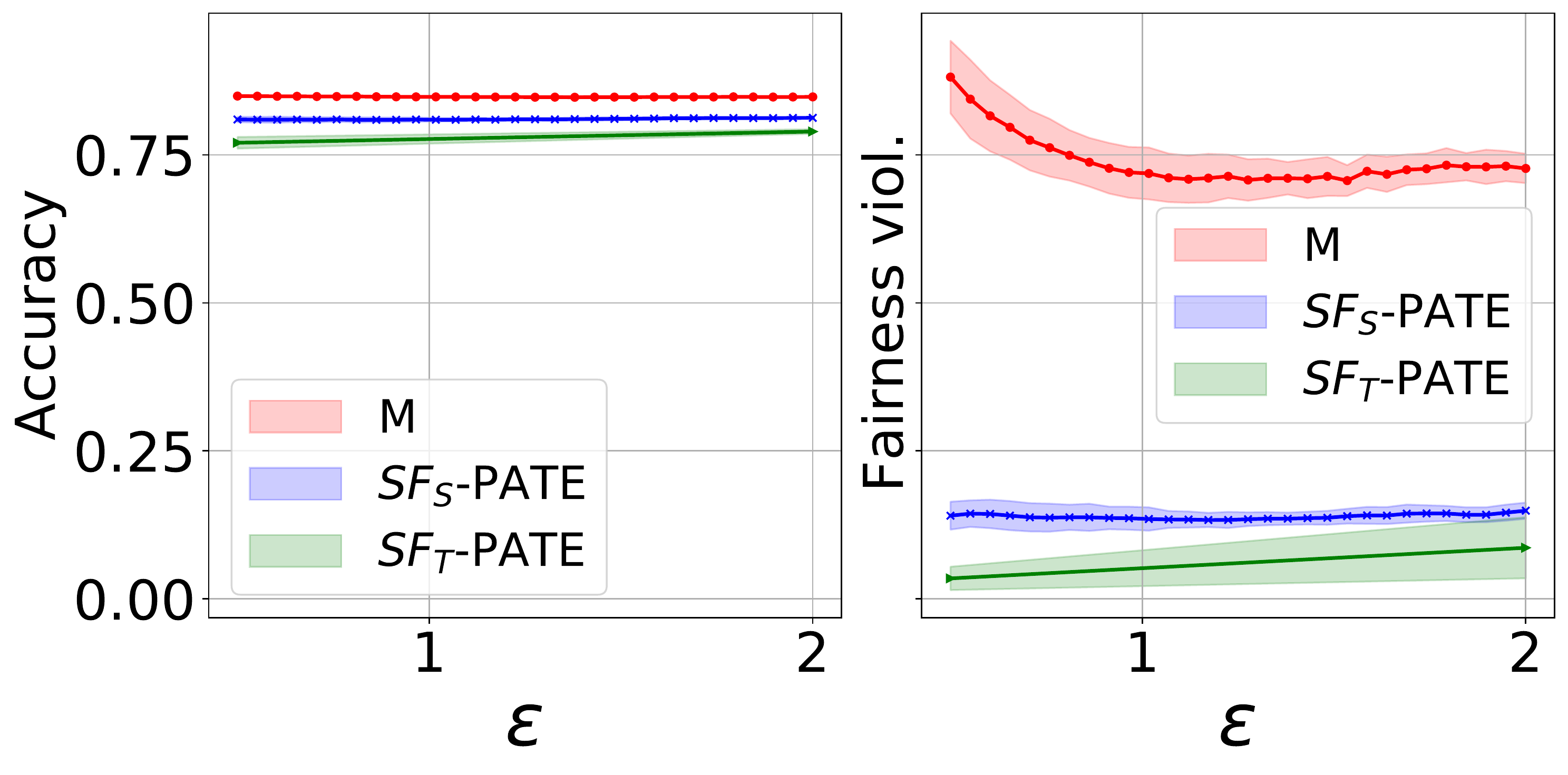}
    \caption{Income dataset}
    \end{subfigure}
    \begin{subfigure}[b]{0.46\textwidth}
    \includegraphics[width = 1.0\linewidth]{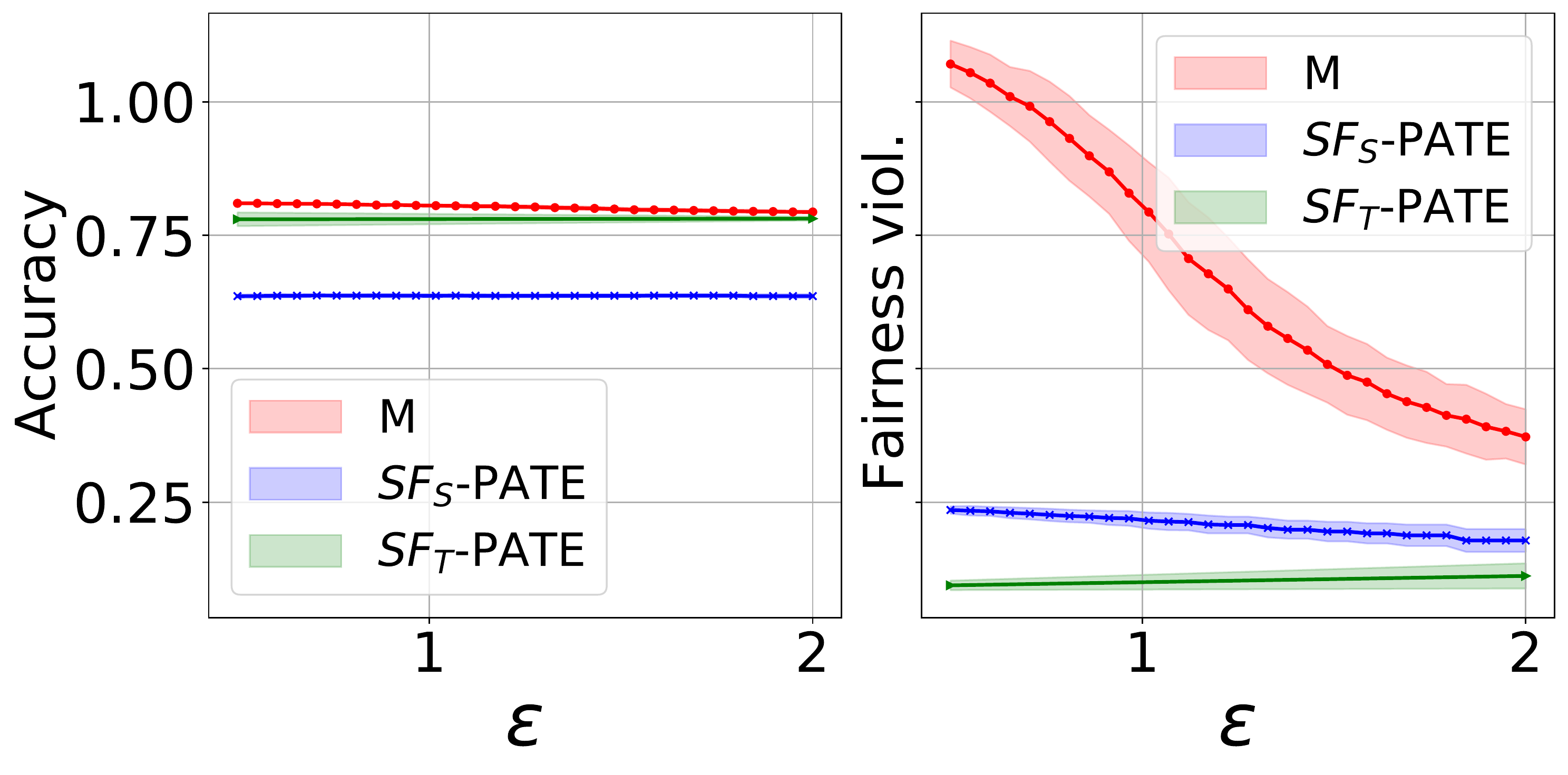}
    \caption{Bank dataset}
    \end{subfigure}
    \begin{subfigure}[b]{0.46\textwidth}
    \includegraphics[width = 1.0\linewidth]{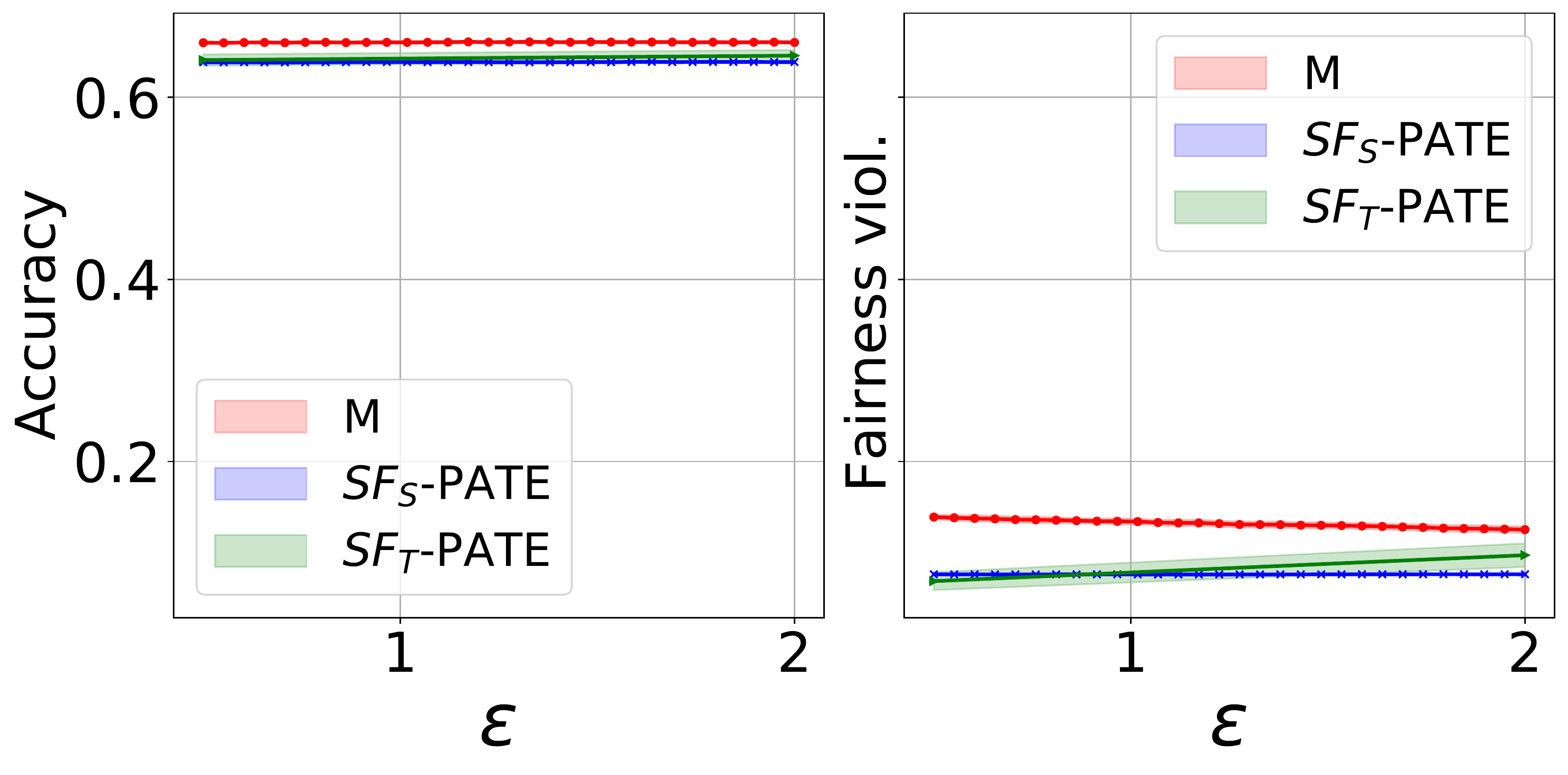}
    \caption{Parkinson dataset}
    \end{subfigure}
    \begin{subfigure}[b]{0.46\textwidth}
    \includegraphics[width = 1.0\linewidth]{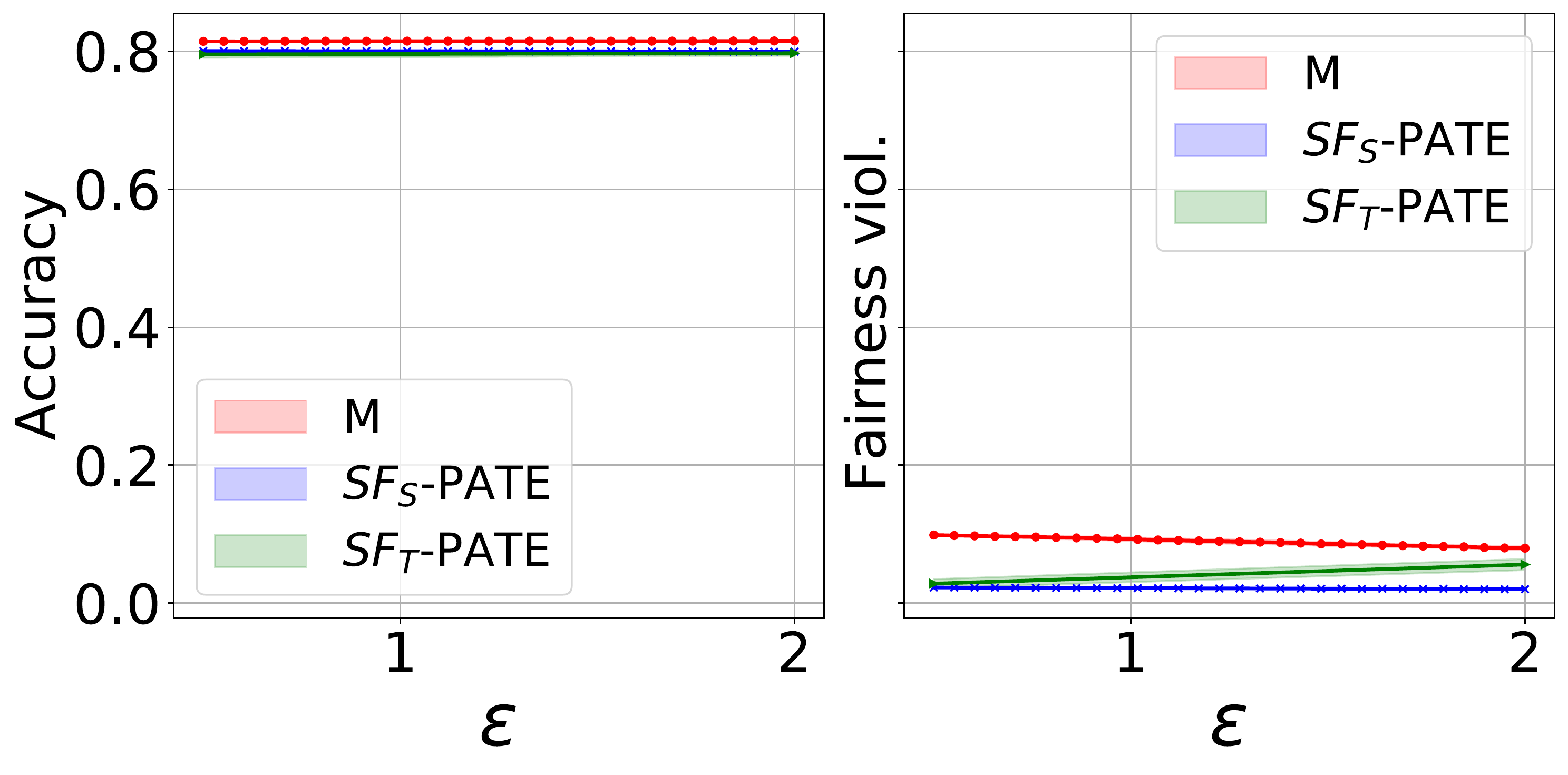}
    \caption{Credit card dataset}
    \end{subfigure}
    \caption{Comparison between our proposed methods with the baseline model M for generalized demographic parity fairness}
    \label{fig:generalized_dp}
\end{figure*}

Figure \ref{fig:generalized_dp} reports the accuracy and fairness violations obtained by the SF-PATE models and the baseline model \emph{M} which adopt, as a post-processing step, a non-private but fair classifier. 
Fairness is evaluated in terms of the Wasserstein distance between the score functions of different groups. The smaller the distance the lower the fairness violation.  The plots clearly illustrates the advantages of SF-PATE in terms of both accuracy and fairness when compared to model \emph{M}. Remarkably, the fairness violations reported by SF-PATE are often significantly lower than those reported by model \emph{M} for various privacy loss parameters $\epsilon$. 

{\em This is significant: The development of private and fair analysis is a non trivial task and framework like the proposed SF-PATE immediately lower the accessibility barrier for non-privacy expert users to develop both private and fair ML models.}

\subsection{Computational Time and Scalability}

\begin{figure}[t]
\centering
\begin{subfigure}[b]{0.8\textwidth}
\includegraphics[width = 1.0\linewidth]{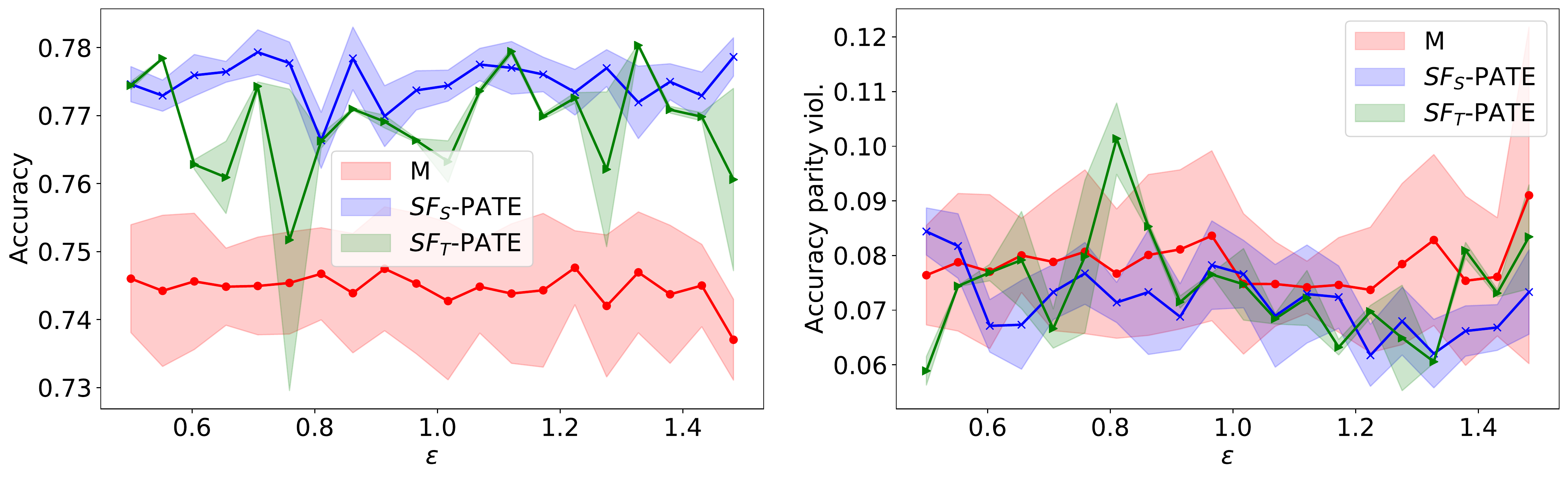}
\end{subfigure}
\caption{Accuracy/fairness of our models on UTKFace in which the base classifier is ResNet 50}
\label{fig:utk_resnet}
\end{figure}

This section demonstrates another key benefits of SF-PATE models: their ability to scale to large data and perform well on very deep networks. The experiments use a ResNet 50 ($>$ 23M parameters) to illustrate. In these experiments, PF-LD was unable to train even a single epoch ({in 1h}) due to the use of computational expensive per-sample gradient clipping performed during training. This renders such model unusable for many realistic settings. The comparison thus focuses on SF-PATE and \emph{M}. 
 
\begin{wrapfigure}[10]{r}{0.45\textwidth}
\vspace{-0pt}
\centering
\parbox{\linewidth}
{
\resizebox{\linewidth}{!}
{
    \begin{tabular}{r | rrrr} \toprule
        {\bf Dataset} & $ \mbox{SF}_S$-PATE &  $\mbox{SF}_T$-PATE
     & \emph{M} & PF-LD \\ \midrule
        Bank & 14 & 13 & 31 & 116  \\
    Parkinson & 8 & 8  & 17 & 31 \\
    Income & 55 & 56  & 129 & 1234 \\
    Credit Card & 30 & 31 & 76 & 575  \\
    UTK dataset & 1669  & 1662   &  3248  & N/A \\
    \bottomrule 
\end{tabular}
}
}
\vspace{-6pt}
\makeatletter\def\@captype{table}\makeatother% "Change float to table"
\caption{\small Runtime (in sec.) of different models to achieve $\epsilon= 1.0$ across different datasets. The ensemble size for $\mbox{SF}_S$-PATE and $\mbox{SF}_T$-PATE is $300$.} 
\label{tab:compare_time}
\end{wrapfigure}
Figure \ref{fig:utk_resnet} illustrates the accuracy and fairness tradeoff at varying of the privacy loss $\epsilon$. The models enforce accuracy parity (similar results can be obtained on other fairness notions). The figure illustrates how both SF-PATE variants bring dramatic accuracy improvements when compared to \emph{M}, at comparable or even lower fairness violations. 
% Note that we do not have smooth curves due to the mis-match between accuracy and loss. 

Next, Table \ref{tab:compare_time} reports the training time required by the  algorithms compared to achieve a private model with parameter $\epsilon = 1.0$ on the benchmarks set analyzed. 
Notice the significant training time differences across the models with $\mbox{SF}_S$-PATE being up to three times faster than \emph{M} 
and up to two order of magnitude faster than \emph{PF-LD}. 
%The time difference in $\mbox{SF}_S$-PATE and $\mbox{SF}_T$-PATE are due to the latter requires to run a fair classifier for each teacher, while the former only requires to run a fair classifier for the student. In turn, these fair classifiers are more computationally intensive than their standard (non-fair) counterparts.

{\em These results illustrate the benefits of the proposed models, rendering it a practical tool for private and fair ML especially for large, overparametetrized models and under stringent privacy regimes.}

%%%%%%%%%%%%
\vspace{-8pt}
\subsection{Discussion and Limitations}
While the previous section has shown the advantages of the proposed framework with respect to the state-of-the-art models, this section discusses some key aspects of SF$_S$- and SF$_T$-PATE and provides intuitions and guidelines for when one implementation may be preferred over the other. While both algorithms rely on the same framework, they differ in a fundamental aspect: the fairness enforcing mechanism. SF$_T$-PATE delegates fairness enforcement to its teachers while SF$_S$-PATE to its student. 
Notice that, by training an ensemble of fair teachers, SF$_T$-PATE is able to transfer fairness knowledge \emph{directly}, which, as observed in the experimental results, often results in smaller fairness violations than those attained by SF$_S$-PATE. This is notable, especially in the case of ``complex'' fairness constraints; i.e., when multiple concurrent constraints are to be enforced, as in the case of generalized demographic parity which imposes multiple order moments matching. 

A shortcoming of this algorithm, however, is that it has to enforce fairness in each teacher model. Thus, one has to ensure that enough data (with large enough representation from all the protected groups) is assigned to each teacher. This may be problematic in very scarce data regimes. On the other hand, by training a \emph{single} fair model (the student) SF$_S$-PATE avoids such potential issue. 

Finally, it is worth noting that a limitation of all ensemble models, including those proposed in this paper, is the need to store a model for each of the $K$ teachers. This can be problematic when the models become very large and additional considerations should be taken in such context. This, however, also represents an opportunity for future research to develop effective model storage and model pruning techniques that minimize the loss in accuracy and fairness while retaining privacy.

\section{Conclusions}

This paper was motivated by the need of building models whose outcomes do not discriminate against some demographic groups while also requiring privacy protection of these sensitive attributes.  The paper proposed a framework to train deep learning models that satisfy several notions of group fairness while ensuring that the model satisfies differential privacy for the protected attributes. 
The proposed framework, called \emph{Scalable, Fair, and Private Aggregation of Teacher Enseble} (SF-PATE) transfer fairness knowledge learned from a pretrained ensemble of models to a target model via a privacy-preserving voting process. 
The paper introduced two flavors of SF-PATE. The first, called SF$_T$-PATE, allows transferring properties of the ensemble while protecting the demographic group attribute, and the second, called SF$_S$-PATE, in addition, also protects the target labels. 
An important properties of SF-PATE is to allow the adoption of black-box (non-private) fair models during the knowledge transfer process, which ultimately, we believe, will simplify the development and boosts the adoption of new fairness metrics in privacy-preserving ML. 

The paper also analyzes the fairness properties of SF-PATE and shows that unfairness can be bounded in many practical settings.
%An extensive experimental evaluation illustrates the benefits of the proposed framework showing significant improvements with respect to state-of-the-art in the fairness, accuracy, and privacy tradeoffs. 
Finally, evaluation on both tabular and image datasets shows not only that SF-PATE achieves better accuracy, privacy, and fairness tradeoffs with respect to the current state-of-the-art, but it is also significantly faster. These properties render SF-PATE amenable to train large, overparameterized, models, that ensure privacy, accuracy, and fairness simultaneously, showing that it may become a practical tool for privacy-preserving and fair decision making.

% For future work, we would like to investigate different strategies to transfer fairness or group label information to student model. One example is via using soft-labels with temperature parameter as used in common knowledge distillation technique \cite{gou2021knowledge}. Another direction is to optimize  the hyper-parameters  such as ensemble size $K$, trade-off parameter $\lambda$ in a fully differentially private ways.

\section*{Acknowledgement}
Cuong Tran and Keyu Zhu are to be considered both first authors.  
This research is partially supported by NSF grant 2133169 and a Google Research Scholar award. Its views and conclusions are those of the authors only.

\bibliographystyle{splncs04}
\bibliography{ijcai22}

\begin{thebibliography}{10}
\providecommand{\url}[1]{\texttt{#1}}
\providecommand{\urlprefix}{URL }
\providecommand{\doi}[1]{https://doi.org/#1}

\bibitem{abadi:16}
Abadi, M., Chu, A., Goodfellow, I., McMahan, H.B., Mironov, I., Talwar, K.,
  Zhang, L.: Deep learning with differential privacy. In: Proceedings of the
  2016 ACM SIGSAC Conference on Computer and Communications Security (2016)

\bibitem{agarwal:18}
Agarwal, A., Beygelzimer, A., Dudik, M., Langford, J., Wallach, H.: A
  reductions approach to fair classification. In: ICML (2018)

\bibitem{bagdasaryan:19}
Bagdasaryan, E., Poursaeed, O., Shmatikov, V.: Differential privacy has
  disparate impact on model accuracy. In: Proceedings of Advances in Neural
  Information Processing Systems (NIPS) (2019)

\bibitem{UCIdatasets}
Blake, C., Merz, C.: Uci repository of machine learning databases (1988),
  \url{https://archive.ics.uci.edu/ml/datasets.php}

\bibitem{jax2018github}
Bradbury, J., Frostig, R., Hawkins, P., Johnson, M.J., Leary, C., Maclaurin,
  D., Necula, G., Paszke, A., Vander{P}las, J., Wanderman-{M}ilne, S., Zhang,
  Q.: {JAX}: composable transformations of {P}ython+{N}um{P}y programs (2018),
  \url{http://github.com/google/jax}

\bibitem{creditdataset}
Carcillo, F., Le~Borgne, Y.A., Caelen, O., Kessaci, Y., Oblé, F., Bontempi,
  G.: Combining unsupervised and supervised learning in credit card fraud
  detection (05 2019). \doi{10.1016/j.ins.2019.05.042}

\bibitem{JMLR:v12:chaudhuri11a}
Chaudhuri, K., Monteleoni, C., Sarwate, A.D.: Differentially private empirical
  risk minimization. Journal of Machine Learning Research  (2011)

\bibitem{cho2019efficacy}
Cho, J.H., Hariharan, B.: On the efficacy of knowledge distillation. In:
  Proceedings of the IEEE/CVF international conference on computer vision. pp.
  4794--4802 (2019)

\bibitem{cummings:19}
Cummings, R., Gupta, V., Kimpara, D., Morgenstern, J.: On the compatibility of
  privacy and fairness. In: Proceedings of the Adjunct Publication of the 27th
  Conference on User Modeling, Adaptation and Personalization (UMAP) (2019)

\bibitem{dwork:12}
Dwork, C., Hardt, M., Pitassi, T., Reingold, O., Zemel, R.: Fairness through
  awareness. In: Proceedings of the 3rd innovations in theoretical computer
  science conference (2012)

\bibitem{dwork:06}
Dwork, C., McSherry, F., Nissim, K., Smith, A.: Calibrating noise to
  sensitivity in private data analysis. In: Proceedings of Theory of
  Cryptography Conference. pp. 265--284 (2006)

\bibitem{ekstrand:18}
Ekstrand, M.D., Joshaghani, R., Mehrpouyan, H.: Privacy for all: Ensuring fair
  and equitable privacy protections. In: {ACM FAccT}. pp. 35--47 (2018)

\bibitem{fioretto2020lagrangian}
Fioretto, F., {Van Hentenryck}, P., Mak, T., Tran, C., Baldo, F., Lombardi, M.:
  Lagrangian duality for constrained deep learning. In: {ECML-PKDD} (2020)

\bibitem{hardt:16}
Hardt, M., Price, E., Price, E., Srebro, N.: Equality of opportunity in
  supervised learning. In: Proceedings of Advances in Neural Information
  Processing Systems (NIPS) (2016)

\bibitem{DBLP:conf/bmvc/HwangPKDB20}
Hwang, S., Park, S., Kim, D., Do, M., Byun, H.: Fairfacegan: Fairness-aware
  facial image-to-image translation. In: 31st British Machine Vision Conference
  2020, {BMVC} 2020 (2020)

\bibitem{jagielski2019differentially}
Jagielski, M., Kearns, M., Mao, J., Oprea, A., Roth, A., Sharifi-Malvajerdi,
  S., Ullman, J.: Differentially private fair learning. In: ICML. pp.
  3000--3008 (2019)

\bibitem{article}
Little, M., Mcsharry, P., Roberts, S., Costello, D., Moroz, I.: Exploiting
  nonlinear recurrence and fractal scaling properties for voice disorder
  detection. Biomedical engineering online  \textbf{6}, ~23 (02 2007).
  \doi{10.1186/1475-925X-6-23}

\bibitem{mironov2017renyi}
Mironov, I.: R{\'e}nyi differential privacy. In: 2017 IEEE 30th Computer
  Security Foundations Symposium (CSF). pp. 263--275. IEEE (2017)

\bibitem{mozannar2020fair}
Mozannar, H., Ohannessian, M., Srebro, N.: Fair learning with private
  demographic data. In: ICML. pp. 7066--7075 (2020)

\bibitem{papernot:16}
Papernot, N., Abadi, M., Úlfar Erlingsson, Goodfellow, I., Talwar, K.:
  Semi-supervised knowledge transfer for deep learning from private training
  data (2016)

\bibitem{pujol:20}
Pujol, D., McKenna, R., Kuppam, S., Hay, M., Machanavajjhala, A., Miklau, G.:
  Fair decision making using privacy-protected data. In: ACM FAccT. pp.
  189--199 (2020)

\bibitem{tran2021differentially}
Tran, C., Fioretto, F., Van~Hentenryck, P.: Differentially private and fair
  deep learning: A lagrangian dual approach. In: Proceedings of the AAAI
  Conference on Artificial Intelligence. vol.~35, pp. 9932--9939 (2021)

\bibitem{uniyal2021dp}
Uniyal, A., Naidu, R., Kotti, S., Singh, S., Kenfack, P.J., Mireshghallah, F.,
  Trask, A.: Dp-sgd vs pate: Which has less disparate impact on model accuracy?
  arXiv  \textbf{2106.12576} (2021)

\bibitem{xu2019achieving}
Xu, D., Yuan, S., Wu, X.: Achieving differential privacy and fairness in
  logistic regression. In: Proceedings of the 26th International Conference on
  World Wide Web (WWW) (2019)

\bibitem{opacus}
Yousefpour, A., Shilov, I., Sablayrolles, A., Testuggine, D., Prasad, K.,
  Malek, M., Nguyen, J., Ghosh, S., Bharadwaj, A., Zhao, J., Cormode, G.,
  Mironov, I.: Opacus: {U}ser-friendly differential privacy library in
  {PyTorch}. arXiv preprint arXiv:2109.12298  (2021)

\bibitem{zafar:17}
Zafar, M.B., Valera, I., Gomez~Rodriguez, M., Gummadi, K.P.: Fairness beyond
  disparate treatment \& disparate impact: Learning classification without
  disparate mistreatment. In: Proceedings of the 26th International Conference
  on World Wide Web (WWW) (2017)

\end{thebibliography}

\appendix
\onecolumn

\setcounter{theorem}{0}
\setcounter{lemma}{0}
\setcounter{proposition}{0}
%proposition
%%%%%%%%%%%%%%%%%%%%%%%%%%%%%%%%%%%%%%%%%%%%%%%%%%%%%%%%%%%%%%%%%%%%

% \begin{center}
% \noindent{\vspace{0.25em}\LARGE 
% \textbf{SF-PATE: Scalable, Fair, and Private Aggregation of Teacher Ensembles}\\
% \textbf{Supplemental Material}}\\
% \vspace{0.25em}
% %{\Large Paper \#6570}
% \vspace{0.5em}
% \end{center}

\section{Missing Proofs in Section \ref{sec:pate}}

\begin{theorem}
Let $\bar{\cM}_{\theta}$ be $\alpha$-fair w.r.t.~$(X,\tilde{A},Y)$ and $h(\cdot)$, and $\tilde{A}$ shares the same support $[m]$ as $A$ where, for $a\in [m], x\in \mathcal{X},
    y\in\mathcal{Y}$, the difference between the two conditional distributions $A$ and $\tilde{A}$ given
an event $(X=x, Y=x)$ is upper bounded by $\distdiff > 0$, i.e.,
\begin{equation}\label{eq:t5-assumption-1}
\left\vert \pr{\tilde{A} = a\mid X=x, Y=y}-\pr{A = a\mid X=x, Y=y}\right\vert \leq \distdiff.
\end{equation}
Then, $\bar{\cM}_{\theta}$ is $\alpha'$-fair w.r.t.~$(X,A,Y) $ and $h(\cdot)$ with:

\begin{equation}
    \alpha' = \frac{\distdiff\cdot \bound}{\underset{a\in[m]}{\min} \min\left\{\pr{\tilde{A}=a},\pr{A=a}\right\}}+\alpha\,.
\end{equation}
Furthermore, if the probability of $A$ belonging to any class $a\in [m]$ is strictly greater than $\distdiff$, i.e.,
\(
    \pr{A = a}>\distdiff,\,\forall~a\in[m],
\)
the fairness bound $\alpha'$ can then be given by
\begin{equation*}
    \alpha' = \frac{\distdiff\cdot \bound}{\underset{a\in[m]}{\min} \pr{A=a}-\distdiff}+\alpha\,.
\end{equation*}
\end{theorem}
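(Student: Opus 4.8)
The plan is to control the fairness violation with respect to $A$ by comparing it, group by group, against the fairness violation with respect to $\tilde{A}$, which is assumed bounded by $\alpha$. Write $g(x,y) \coloneqq h(\bar{\cM}_{\theta}(x), y)$ for the (bounded, $[0,\bound]$-valued) fairness function evaluated along the student, and abbreviate $q_a(x,y) \coloneqq \pr{A=a\mid X=x,Y=y}$, $\tilde q_a(x,y) \coloneqq \pr{\tilde A=a\mid X=x,Y=y}$, $P_a \coloneqq \pr{A=a}$, and $\tilde P_a \coloneqq \pr{\tilde A=a}$. The first observation is that the population term $\EE_{X,Y}[g]$ appearing in Definition~\ref{def:a-fair} is identical under both fairness notions, since $A$ and $\tilde A$ share the same $(X,Y)$ marginal; hence by the triangle inequality, for every $a\in[m]$,
\begin{equation*}
\left\vert \EE_{X,Y\mid A=a}[g] - \EE_{X,Y}[g]\right\vert \leq \underbrace{\left\vert \EE_{X,Y\mid A=a}[g] - \EE_{X,Y\mid \tilde A=a}[g]\right\vert}_{\Delta_a} + \underbrace{\left\vert \EE_{X,Y\mid \tilde A=a}[g] - \EE_{X,Y}[g]\right\vert}_{\leq\,\alpha}.
\end{equation*}
Taking the maximum over $a$ reduces the theorem to bounding the per-group discrepancy $\Delta_a$ by $\distdiff\bound/\min\{P_a,\tilde P_a\}$.

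The key step is to bound $\Delta_a$ via a centering trick. Rewriting the conditional expectation with Bayes' rule gives $\EE_{X,Y\mid A=a}[g] = \tfrac{1}{P_a}\int g\,q_a\,p\,dx\,dy$, and analogously for $\tilde A$. Set the constant $\mu_a \coloneqq \EE_{X,Y\mid\tilde A=a}[g]$, so that
\begin{equation*}
\EE_{X,Y\mid A=a}[g] - \mu_a = \frac{1}{P_a}\int (g-\mu_a)\,q_a\,p\,dx\,dy.
\end{equation*}
The definition of $\mu_a$ makes the analogous $\tilde A$-integral vanish, $\int (g-\mu_a)\,\tilde q_a\,p\,dx\,dy = \tilde P_a\mu_a - \mu_a\tilde P_a = 0$, and subtracting this zero converts $q_a$ into the \emph{difference} $q_a-\tilde q_a$:
\begin{equation*}
\Delta_a = \frac{1}{P_a}\left\vert\int (g-\mu_a)\,(q_a-\tilde q_a)\,p\,dx\,dy\right\vert \leq \frac{1}{P_a}\,\bound\,\distdiff\int p\,dx\,dy = \frac{\distdiff\,\bound}{P_a},
\end{equation*}
using $\vert g-\mu_a\vert\leq\bound$ (both lie in $[0,\bound]$), the pointwise hypothesis $\vert q_a-\tilde q_a\vert\leq\distdiff$ from \eqref{eq:t5-assumption-1}, and $\int p = 1$. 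Since $P_a\geq\min\{P_a,\tilde P_a\}$ (and the symmetric centering around $\EE_{X,Y\mid A=a}[g]$ yields the same bound with $\tilde P_a$), this gives $\Delta_a\leq \distdiff\bound/\min\{P_a,\tilde P_a\}$; combining with the triangle inequality and maximizing over $a$ produces the first claimed bound with $\min_a\min\{P_a,\tilde P_a\}$ in the denominator.

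For the refinement I would first integrate the pointwise bound to get $\vert P_a-\tilde P_a\vert = \vert\int(q_a-\tilde q_a)p\,dx\,dy\vert\leq\distdiff$, hence $\tilde P_a\geq P_a-\distdiff$. Under the extra hypothesis $P_a>\distdiff$ for all $a$, both $P_a$ and $\tilde P_a$ exceed $P_a-\distdiff>0$, so $\min\{P_a,\tilde P_a\}\geq P_a-\distdiff$; substituting into the denominator and maximizing over $a$ yields $\alpha' = \distdiff\bound/(\min_a P_a - \distdiff) + \alpha$.

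The main obstacle is the bound on $\Delta_a$: a naive expansion of $q_a/P_a - \tilde q_a/\tilde P_a$ entangles the pointwise closeness $\vert q_a-\tilde q_a\vert\leq\distdiff$ with the marginal gap $\vert P_a-\tilde P_a\vert$ and produces looser constants. The centering step — subtracting the zero-valued integral $\int(g-\mu_a)\tilde q_a\,p$ — is precisely what isolates a single factor of $\distdiff$ and a single reciprocal probability, and I expect getting this manipulation (together with the $[0,\bound]$ bound on $g-\mu_a$) right to be the crux, with the remaining steps being routine.
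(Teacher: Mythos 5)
Your proof is correct, and at the crux it takes a genuinely different route from the paper's. Both arguments share the same skeleton: the triangle inequality through $\EE_{X,Y\mid \tilde A=a}[g]$ (using that the $(X,Y)$ marginal, and hence the population term, is common to both fairness notions), the integrated bound $\vert \pr{A=a}-\pr{\tilde A=a}\vert\leq\distdiff$, and the identical treatment of the refinement $\tilde P_a\geq P_a-\distdiff$. Where you diverge is in bounding $\Delta_a$: the paper writes both conditional densities via Bayes' rule and then bounds the integrand \emph{pointwise}, claiming $\left\vert \frac{\pr{A=a\mid X=x,Y=y}}{\pr{A=a}}-\frac{\pr{\tilde A=a\mid X=x,Y=y}}{\pr{\tilde A=a}}\right\vert\leq \frac{\distdiff}{\min\{\pr{A=a},\pr{\tilde A=a}\}}$. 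That pointwise inequality does not actually follow from the hypothesis: decomposing the difference shows the marginal gap enters multiplied by $\tilde q_a(x,y)/(P_a\tilde P_a)$, and, for instance, $q_a=\tilde q_a=0.9$ at a point with $P_a=0.5$, $\tilde P_a=0.4$, $\distdiff=0.1$ gives a left side of $0.45$ against a claimed bound of $0.25$ (and such a configuration is realizable on a set of positive measure). Your centering trick—subtracting the identically zero integral $\int (g-\mu_a)\tilde q_a\,p$—is precisely what sidesteps this: the marginal gap never appears, because you keep the cancellation that is destroyed once the absolute value is pushed inside the integral. So your route is not merely different; it is the one that actually closes the argument, and it even yields the strictly stronger per-group bound $\Delta_a\leq \distdiff\bound/\max\{P_a,\tilde P_a\}$ (taking the better of the two centerings), of which the theorem's $\min$-denominator statement is a weakening.
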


\begin{proof}
Note that, for any $a\in [m]$,
\begin{align}
    &\left\vert \pr{\tilde{A}=a}-\pr{A=a}\right\vert\nonumber\\
    =~&\left\vert \int g_{X,Y}(x, y)\cdot\left[\pr{\tilde{A} = a\mid X=x, Y=y}-\pr{A = a\mid X=x, Y=y}\right]dxdy \right\vert\label{eq:t5-1}\\
    \leq~&  \int g_{X,Y}(x, y)\cdot\left\vert \pr{\tilde{A} = a|X=x, Y=y}-\pr{A = a\mid X=x, Y=y}\right\vert dxdy\nonumber\\
    \leq~&\int g_{X,Y}(x, y)\cdot \distdiff~dxdy\label{eq:t5-2}\\
    =~&\distdiff\nonumber\,,
\end{align}
where the function $g_{X,Y}(\cdot, \cdot)$ in Equation \eqref{eq:t5-1} is the probability density function associated with the joint distribution
$(X,Y)$ and Equation \eqref{eq:t5-2} is due to the assumption in Equation \eqref{eq:t5-assumption-1}. If the probability that the random vector $A$ equals $a$ is
    strictly greater than $\distdiff$ for any $a\in [m]$, it follows that
\begin{equation}\label{eq:t5-a-tilde-a}
    \pr{\tilde{A}=a}\geq \pr{A=a}-\distdiff >0\,, \qquad\forall~a\in[m]\,.
\end{equation}
    For any group attribute $a\in [m]$, 
    \begin{align}
        &\left\vert\mathbb{E}_{X,Y|A=a } [ h(\bar{\cM}_{\theta}(X),Y)]  -  \mathbb{E}_{X,Y} 
        [ h(\bar{\cM}_{\theta}(X),Y)]\right\vert\nonumber\\
        \leq~& \left\vert\mathbb{E}_{X,Y|A=a } [ h(\bar{\cM}_{\theta}(X),Y)]  -  \mathbb{E}_{X,Y|\tilde{A}=a } [ h(\bar{\cM}_{\theta}(X),Y)]\right\vert +\nonumber\\
        &\left\vert\mathbb{E}_{X,Y|\tilde{A}=a } [ h(\bar{\cM}_{\theta}(X),Y)]  -  \mathbb{E}_{X,Y} [ h(\bar{\cM}_{\theta}(X),Y)]\right\vert\nonumber\\
        \leq~&\left\vert\mathbb{E}_{X,Y|A=a } [ h(\bar{\cM}_{\theta}(X),Y)]  -  \mathbb{E}_{X,Y|\tilde{A}=a } [ h(\bar{\cM}_{\theta}(X),Y)]\right\vert+\alpha\label{eq:t5-main-1}\\
        =~&\left\vert\int h(\bar{\cM}_{\theta}(x), y)\cdot \left(g_{(X,Y)\mid A=a}(x, y)-g_{(X,Y)\mid \tilde{A}=a}(x, y)\right)dx dy\right\vert+\alpha\nonumber\\
        \leq~& \int h(\bar{\cM}_{\theta}(x), y)\cdot g_{X,Y}(x, y) \left\vert\frac{\pr{A = a\mid X=x, Y=y}}{\pr{A = a}}-
        \frac{\pr{\tilde{A} = a\mid X=x, Y=y}}{\pr{\tilde{A} = a}}\right\vert dx dy+\alpha\nonumber\\
        \leq ~& \int h(\bar{\cM}_{\theta}(x), y)\cdot g_{X,Y}(x, y) \frac{\distdiff}{\min\left\{\pr{A = a},\pr{\tilde{A} = a}\right\}}dx dy+\alpha\label{eq:t5-main-3}\\
        % \leq~&\int h(f(x), y)\cdot g_{X,Y}(x, y) \frac{\distdiff}{P(A = i)-\distdiff}dx dy+\alpha\label{eq:t5-main-4}\\
        =~&\frac{\distdiff\cdot \mathbb{E}_{X,Y} [ h(\bar{\cM}_{\theta}(X),Y)]}{\min\left\{\pr{A = a},\pr{\tilde{A} = a}\right\}}+\alpha\,,\nonumber
    \end{align}
    where Equation \eqref{eq:t5-main-1} is due to the assumption that $\cM_{\theta}$ is $\alpha$-fair w.r.t. $(X, \tilde{A},Y)$ and $h(\cdot)$. 
    Equation \eqref{eq:t5-main-3} comes from non-negativity of the functions $h(\cdot)$ and $g(\cdot)$ and the assumption in Equation
    \eqref{eq:t5-assumption-1}.
    Putting things together, we have the following.
    \begin{align*}
        &\max_{a\in [m]}~\left\vert\mathbb{E}_{X,Y|A=a } [ h(\bar{\cM}_{\theta}(X),Y)]-\mathbb{E}_{X,Y } [ h(\bar{\cM}_{\theta}(X),Y)]\right\vert\\
        \leq ~&\max_{a\in [m]}~\frac{\distdiff\cdot \mathbb{E}_{X,Y}
        [ h(\bar{\cM}_{\theta}(X),Y)]}{\min\left\{\pr{A = a},\pr{\tilde{A} = a}\right\}}+\alpha=\frac{\distdiff\cdot \mathbb{E}_{X,Y} [ h(\bar{\cM}_{\theta}(X),Y)]}{\underset{a\in[m]}{\min} \min\left\{\pr{A = a},\pr{\tilde{A} = a}\right\}}+\alpha\\
        \leq ~&\frac{\distdiff\cdot \bound}{\underset{a\in[m]}{\min} \min\left\{\pr{A = a},\pr{\tilde{A} = a}\right\}}+\alpha\,,
    \end{align*}
    where the last inequality comes from the assumption that $\bound$ is the supremum of the fairness function $h(\cdot)$. 
    Suppose that the probability that the random vector $A$ equals $i$ is
    strictly greater than $\distdiff$, for any group attribute $a\in [m]$. It follows that
\begin{align*}
    &\max_{a\in [m]}~\left\vert\mathbb{E}_{X,Y|A=a } [ h(\bar{\cM}_{\theta}(X),Y)]-\mathbb{E}_{X,Y} [ h(\bar{\cM}_{\theta}(X),Y)]\right\vert\\
    \leq ~&\frac{\distdiff\cdot \bound}{\underset{a\in[m]}{\min} \min\left\{\pr{A = a},\pr{\tilde{A} = a}\right\}}+\alpha\\
    \leq~& \frac{\distdiff\cdot \bound}{\underset{a\in[m]}{\min} \pr{A=a}-\distdiff}+\alpha\,,
\end{align*}
where the last inequality comes from Equation \eqref{eq:t5-a-tilde-a}.
\end{proof}

\begin{corollary}
    Suppose that $\bar{\cM}_{\theta}$ is $\alpha$-fair w.r.t. $(X,\tilde{A}, Y)$, along with $h(\cdot)$,
and $\tilde{A}$ is derived by the randomized response mechanism of $A$,
where, for any $a, a'\in[m]$,
\begin{equation*}
    \pr{\tilde{A}=a'\mid A=a}=\begin{cases}
                \frac{\exp(\epsilon)}{\exp(\epsilon)+(m-1)}\,,&\text{if } a'=a\,,\\
                \frac{1}{\exp(\epsilon)+(m-1)}\,,& \text{otherwise}\,.
        \end{cases}
\end{equation*}
% It is also assumed that, for any group attribute $a\in[m]$,
% \begin{equation*}
%     \pr{A=a}>\frac{1}{\exp(\epsilon)+(m-1)}\,,
% \end{equation*}
Then, $\bar{\cM}_{\theta}$ is $\alpha'$-fair w.r.t. $(X,  A, Y) $ and $h(\cdot)$, where

\begin{equation*}
    \alpha' = \frac{\distdiff\cdot \bound}{\underset{a\in[m]}{\min} \min\{\pr{\tilde{A}=a},\pr{A=a}\}}+\alpha\,,\qquad\text{with }
    \distdiff=\frac{m-1}{\exp(\epsilon)+(m-1)}\,.
\end{equation*}
\end{corollary}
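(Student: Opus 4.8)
The plan is to recognize that this corollary is an immediate application of Theorem \ref{thm:fair-bound-approach-1}: all that must be done is verify that the randomized response mechanism satisfies the closeness hypothesis \eqref{eq:t5-assumption-1} with the claimed value of $\distdiff = \frac{m-1}{\exp(\epsilon)+(m-1)}$. The crucial structural fact I would exploit first is that randomized response perturbs only the attribute $A$, so that $\tilde{A}$ is conditionally independent of $(X,Y)$ given $A$; that is, $\pr{\tilde{A}=a' \mid A=a, X=x, Y=y} = \pr{\tilde{A}=a' \mid A=a}$. This decoupling is what lets me pass from the unconditional transition probabilities to the conditional distributions appearing in the hypothesis of the main theorem.

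With that independence in hand, I would marginalize over $A$. Writing $q_{a'} \coloneqq \pr{A = a' \mid X=x, Y=y}$ and letting $c = \frac{\exp(\epsilon)}{\exp(\epsilon)+(m-1)}$, $d = \frac{1}{\exp(\epsilon)+(m-1)}$ denote the stay and flip probabilities, the law of total probability gives
\[
    \pr{\tilde{A}=a \mid X=x, Y=y} = c\, q_a + d \sum_{a' \neq a} q_{a'} = c\, q_a + d(1-q_a) = (c-d)\, q_a + d.
\]
Then I would form the difference that \eqref{eq:t5-assumption-1} controls, namely $(c-d)\,q_a + d - q_a = (c - d - 1)\, q_a + d$, and simplify the coefficient: a short computation gives $c - d - 1 = \frac{-m}{\exp(\epsilon)+(m-1)}$, so the difference collapses to $\frac{1 - m\, q_a}{\exp(\epsilon)+(m-1)}$.

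The final step is the bound. Since $q_a \in [0,1]$, the quantity $1 - m\, q_a$ ranges over $[1-m,\, 1]$, so its absolute value is at most $m-1$ (for $m \geq 2$), attained in the worst case $q_a = 1$. Hence
\[
    \left\vert \pr{\tilde{A}=a \mid X=x, Y=y} - \pr{A=a \mid X=x, Y=y} \right\vert \leq \frac{m-1}{\exp(\epsilon)+(m-1)} = \distdiff,
\]
which is exactly hypothesis \eqref{eq:t5-assumption-1}. Invoking Theorem \ref{thm:fair-bound-approach-1} with this $\distdiff$ yields the stated $\alpha'$. I do not anticipate a genuine obstacle here: the only subtle point is making the conditional-independence claim explicit and justifying that the extremal value of $|1 - m\, q_a|$ is $m-1$ rather than $1$ (which requires $m \geq 2$, consistent with the multigroup setting); everything else is routine algebraic manipulation of the randomized-response transition probabilities.
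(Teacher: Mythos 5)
Your proposal is correct and follows essentially the same route as the paper: reduce the corollary to verifying hypothesis \eqref{eq:t5-assumption-1} of Theorem \ref{thm:fair-bound-approach-1}, marginalize over $A$ via total probability, collapse the difference to $\frac{1-m\cdot\pr{A=a\mid X=x,Y=y}}{\exp(\epsilon)+(m-1)}$, and bound its absolute value by $\frac{m-1}{\exp(\epsilon)+(m-1)}$. Your two additions---making the conditional independence $\tilde{A}\indep (X,Y)\mid A$ explicit, and noting that $|1-m\,q_a|\leq m-1$ requires $m\geq 2$---are small but worthwhile clarifications of steps the paper leaves implicit.
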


\begin{proof}
    In order to derive the argument in this corollary, it suffices to show the following: for any 
    group attribute $a\in [m]$, $x\in \cX$ and $y\in \cY$,
    \begin{equation*}
        \left\vert \pr{\tilde{A} = a\mid X=x, Y=y}-\pr{A = a\mid X=x, Y=y}\right\vert 
        \leq \frac{m-1}{\exp(\epsilon)+(m-1)}\,.
    \end{equation*}
    It follows that, for any group attribute $a\in[m]$, $x\in\mathcal{X}$ and $y\in\mathcal{Y}$,
    \begin{align*}
        &\left\vert \pr{\tilde{A} = a\mid X=x, Y=y}-\pr{A = a\mid X=x, Y=y}\right\vert\\
        =~& \left\vert\sum_{a'=1}^m \pr{\tilde{A} = a\mid A=a'}\cdot \pr{A = a'\mid X=x, Y=y}-
        \pr{A = a\mid X=x, Y=y}\right\vert\\
        =~&\left\vert\frac{-(m-1)}{\exp(\epsilon)+(m-1)}\cdot \pr{A = a\mid X=x, Y=y}
        +\frac{\pr{A \neq a\mid X=x, Y=y}}{\exp(\epsilon)+(m-1)}\right\vert\\
        =~&\left\vert\frac{1-m\cdot \pr{A = a\mid X=x, Y=y}}{\exp(\epsilon)+(m-1)}\right\vert\\
        \leq ~&\frac{m-1}{\exp(\epsilon)+(m-1)}\,.
    \end{align*}
\end{proof}

\begin{theorem}
The voting mechanism $\tilde{v}_Y(\bm{T})$ is  $\alpha'$ fair w.r.t. $(X,A,Y)$  and $h(\cdot)$ with 
\begin{equation*}
    \alpha' = \distdiff\cdot \bound\,.
\end{equation*}
\end{theorem}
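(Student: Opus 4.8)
The plan is to reduce the fairness violation of the randomized voting mechanism to the difference of the expectations of a single bounded statistic evaluated under the two laws $Z$ and $Z_a$, and then to bound that difference by the assumed total variation distance.

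First I would isolate the dependence structure of the mechanism. The output $\tilde{Y}=\tilde{v}_Y(\bm{T}(X))$ is obtained by adding independent Gaussian noise to the vote counts $\#_y(\bm{T}(X))$ and taking an $\argmax$; hence it depends on $(X,Y)$ only through the teacher predictions $\bm{T}(X)=(\cM_{\theta}^{(1)}(X),\dots,\cM_{\theta}^{(K)}(X))$, together with noise that is independent of everything else. Consequently $h(\tilde{v}_Y(\bm{T}(X)),Y)$ is a function of $Z=(\cM_{\theta}^{(1)}(X),\dots,\cM_{\theta}^{(K)}(X),Y)$ and of the noise alone. I would then define, for a vote vector $t$ and a label $y$, the noise-averaged fairness function $\phi(t,y)\coloneqq\EE_N[h(\tilde{v}_Y(t),y)]$, so that $\phi(Z)$ is the conditional expectation of $h(\tilde{v}_Y(\bm{T}(X)),Y)$ given $(X,Y)$. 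Because $0\le h\le\bound$ by the definition of $\bound$, the averaged function satisfies $0\le\phi\le\bound$ as well.

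Next I would rewrite the two terms of the $\alpha'$-fairness expression, where the expectations are understood to include the internal randomness of the voting mechanism. By independence of the noise and the tower rule, $\EE_{X,Y\mid A=a}[h(\tilde{v}_Y(\bm{T}(X)),Y)]=\EE[\phi(Z_a)]$ and $\EE_{X,Y}[h(\tilde{v}_Y(\bm{T}(X)),Y)]=\EE[\phi(Z)]$, so the fairness violation for group $a$ is exactly $\vert\EE[\phi(Z_a)]-\EE[\phi(Z)]\vert$.

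Finally I would invoke the elementary inequality that, for any $\phi$ with $0\le\phi\le\bound$ and any two laws $P,Q$, $\vert\EE_P[\phi]-\EE_Q[\phi]\vert\le\bound\cdot\tv{P}{Q}$; this follows by splitting $\int\phi\,(g_P-g_Q)$ over the sets where $g_P-g_Q$ is positive and negative and bounding each contribution by $\bound$ times the mass $\tv{P}{Q}$. Applying it to the laws of $Z_a$ and $Z$ together with the hypothesis $\tv{Z}{Z_a}\le\distdiff$ gives $\vert\EE[\phi(Z_a)]-\EE[\phi(Z)]\vert\le\distdiff\cdot\bound$ for every $a\in[m]$, and taking the maximum over $a$ produces $\alpha'=\distdiff\cdot\bound$. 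The step I expect to be the main obstacle is the first one: making rigorous that, although $\tilde{v}_Y$ is a randomized $\argmax$, the fairness functional sees $(X,Y)$ only through $Z$, so that averaging out the independent noise collapses the mechanism into a single deterministic bounded statistic of $Z$. Once this reduction is in place, the total variation bound and the maximization over groups are routine.
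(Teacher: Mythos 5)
Your proposal is correct and follows essentially the same route as the paper's proof: both reduce the randomized mechanism to a noise-averaged, $[0,\bound]$-valued function of $Z$ (the paper's $r(\bm{z})=\int w_q(\bm{z},\tau)\,d\tau$ is exactly your $\phi$), and then bound $\left\vert\mathbb{E}[\phi(Z_a)]-\mathbb{E}[\phi(Z)]\right\vert$ by $\bound\cdot\tv{Z}{Z_a}\leq \bound\cdot\distdiff$ via the same positive/negative-set splitting that you cite as a standard lemma. The only cosmetic difference is that the paper proves the bound for an arbitrary noise-post-processed aggregator $q$ and then specializes to the $\argmax$ voting rule, whereas you work with $\tilde{v}_Y$ directly; the substance is identical.
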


\begin{proof}
    Consider an arbitrary function $q: \mathbb{R}^K\times \mathbb{R}^{\vert\cY\vert}\mapsto\mathbb{R}$ of the outputs of the $K$ teacher networks.
    For notational convenience, let $w_q$ denote the function associated with $q$ such that
    \begin{equation*}
        w_q(\cM_{\theta}^{(1)}(X),\dots,\cM_{\theta}^{(K)}(X),Y,\tau)\coloneqq h(q(\cM_{\theta}^{(1)}(X),\dots,\cM_{\theta}^{(K)}(X),\tau),Y)\,,
    \end{equation*}
    where $\tau$ follows a multivariate normal distribution $\cN(0, \sigma^2 I_{\vert\cY\vert})$.
    It follows that, for any group attribute $a\in [m]$,
    \begin{multline*}
        \mathbb{E}_{Z_a,\tau}[w_q(Z_a,\tau)]-\mathbb{E}_{Z,\tau}[w_q(Z,\tau)]=\\
        \mathbb{E}_{\tau,X,Y|A=a } [ h(q(\cM_{\theta}^{(1)}(X),\dots,\cM_{\theta}^{(K)}(X),\tau),Y)]  -  \mathbb{E}_{\tau,X,Y} [ h(q(\cM_{\theta}^{(1)}(X),\dots,\cM_{\theta}^{(K)}(X),\tau),Y)]\,,
    \end{multline*}
    and
    \begin{align}
        &\left\vert\mathbb{E}_{Z_a,\tau}[w_q(Z_a,\tau)]-\mathbb{E}_{Z,\tau}[w_q(Z,\tau)]\right\vert\nonumber\\
        =~&\left\vert\int w_q(\bm{z},\tau)\cdot
        \left(g_{Z_a}(\bm{z})-g_Z(\bm{z})\right) d\bm{z}d\tau\right\vert\nonumber\\
        =~&\left\vert\int \left(\int w_q(\bm{z},\tau)d\tau\right)\cdot
        \left(g_{Z_a}(\bm{z})-g_Z(\bm{z})\right) d\bm{z}\right\vert\nonumber\\
        =~& \left\vert\underbrace{\int_{S_a}  r(\bm{z})\cdot
        \left(g_{Z_a}(\bm{z})-g_Z(\bm{z})\right) d\bm{z}}_{\text{(I)}}+\underbrace{\int_{\RR^{K}\setminus S_a} r(\bm{z})\cdot
        \left(g_{Z_a}(\bm{z})-g_Z(\bm{z})\right) d\bm{z}}_{\text{(II)}}\right\vert\label{eq:t4-main-1}\\
        \leq ~& \max\left\{
         \int_{S_a}  r(\bm{z})\cdot
        \left(g_{Z_a}(\bm{z})-g_Z(\bm{z})\right) d\bm{z},
        ~-\int_{\RR^{K}\setminus S_a} r(\bm{z})\cdot
        \left(g_{Z_a}(\bm{z})-g_Z(\bm{z})\right) d\bm{z}
        \right\}\label{eq:t4-main-1_5}\\
        \leq~& \max\left\{B \int_{S_a}
        g_{Z_a}(\bm{z})-g_Z(\bm{z}) d\bm{z},~
        B\int_{\RR^{K}\setminus S_a}
        g_{Z}(\bm{z})-g_{Z_a}(\bm{z}) d\bm{z}\right\}\label{eq:t4-main-2}\\
        \leq ~& B\cdot \distdiff \label{eq:t4-main-3}\,,
    \end{align}
    where $r(\bm{z})$ is a shorthand for $\int w_q(\bm{z},\tau)d\tau$
    and $S_a$ in Equation \eqref{eq:t4-main-1} is defined as $\{\bm{z}\in\RR^{K+1}\mid g_{Z_a}(\bm{z})
    \geq g_{Z}(\bm{z})\}$. Equation \eqref{eq:t4-main-1_5} comes from the fact that the term (I) in Equation \eqref{eq:t4-main-1} 
    is non-negative while
    the term (II) is non-positive. Equation \eqref{eq:t4-main-2} comes from the fact
    that $\bound$ is the supremum of the fairness function $h$ (and equivalently $w_q$). Equation \eqref{eq:t4-main-3}
    is due to the assumption that the total variation distance between $Z_a$ and $Z$ is upper bounded by $\distdiff$.
    Therefore, 
    \begin{align*}
         &\max_{a\in[m]}~\Large\vert \mathbb{E}_{\tau,X,Y\mid A=a } [ h(q(\cM_{\theta}^{(1)}(X),\dots,\cM_{\theta}^{(K)}(X),\tau),Y)]  \\ 
         & \hspace{20pt} -  \mathbb{E}_{\tau,X,Y} [ h(q(\cM_{\theta}^{(1)}(X),\dots,\cM_{\theta}^{(K)}(X),\tau),Y)]
         \Large\vert\\
        =~&\max_{a\in[m]}~\left\vert\mathbb{E}_{Z_a,\tau}[w_q(Z_a,\tau)]-\mathbb{E}_{Z,\tau}[w_q(Z,\tau)]\right\vert\\
        \leq~& B\cdot \distdiff\,,
    \end{align*}
    where the aggregated classifier $q(\cM_{\theta}^{(1)}(X),\dots,\cM_{\theta}^{(K)}(X), \tau)$ associated with
    an arbitrary operation $q$ is $(B\cdot \distdiff)$-fair w.r.t.
    $(X,A,Y)$ and $h(\cdot)$. 
    Note that, if $q$ is in the following form:
    \begin{multline*}
        q(\cM_{\theta}^{(1)}(X),\dots,\cM_{\theta}^{(K)}(X), \tau)\coloneqq \\
         \underset{Y \in \mathcal{Y} }{\arg\max}~\#_Y(\cM_{\theta}^{(1)}(X),\dots,\cM_{\theta}^{(K)}(X)) + \tau_Y=\tilde{v}_Y(\bm{T}(X))\,,
    \end{multline*}
    we end up with the voting mechanism $\tilde{v}_Y(\bm{T}(X))$.
    Therefore,
    the voting mechanism $\tilde{v}_Y(\bm{T}(X))$ is $(B\cdot \distdiff)$-fair as well.
    
\end{proof}

\begin{corollary}
Suppose that the random vector $Z$ is independent of $A$, i.e., 
$\{Z_a\}_{a\in[m]}$ and $Z$ are identically distributed.
Equivalently, the random vector $Z=(\cM_{\theta}^{(1)}(X),\dots,\cM_{\theta}^{(K)}(X),Y)$
is independent of $A$.
Then, the voting mechanism $\tilde{v}_Y(\bm{T})$ is perfectly fair w.r.t. $(X,A,Y)$  and $h(\cdot)$.
\end{corollary}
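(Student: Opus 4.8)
The plan is to recognize this corollary as an immediate special case of Theorem \ref{thm:gen_ensemble}, whose conclusion gives that the voting mechanism $\tilde{v}_Y(\bm{T})$ is $\alpha'$-fair with $\alpha' = \distdiff \cdot \bound$. Since perfect fairness means $0$-fairness, it suffices to show that the hypothesis of independence forces $\distdiff = 0$; then $\alpha' = 0 \cdot \bound = 0$, and the result follows at once.

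First I would invoke the standing assumption preceding Theorem \ref{thm:gen_ensemble}, namely that the total variation distance satisfies $\tv{Z}{Z_a} \leq \distdiff$ for each group attribute $a \in [m]$. The smallest valid bound $\distdiff$ is therefore $\max_{a\in[m]} \tv{Z}{Z_a}$. The key step is to observe that the independence hypothesis says precisely that $Z$ and $Z_a$ are identically distributed for every $a$: conditioning on $A=a$ does not change the law of $Z=(\cM_{\theta}^{(1)}(X),\dots,\cM_{\theta}^{(K)}(X),Y)$. Hence their density functions coincide, $g_{Z_a}(\bm{z}) = g_Z(\bm{z})$ for all $\bm{z}$, and by the defining integral of total variation distance, $\tv{Z}{Z_a} = \sup_{S}\left\vert \int_S (g_{Z_a}(\bm{z}) - g_Z(\bm{z}))\,d\bm{z}\right\vert = 0$ for every $a \in [m]$.

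Consequently one may take $\distdiff = 0$ in the hypothesis of Theorem \ref{thm:gen_ensemble}. Applying that theorem yields $\alpha' = \distdiff \cdot \bound = 0$, which is by definition the statement that $\tilde{v}_Y(\bm{T})$ is perfectly fair w.r.t.~$(X,A,Y)$ and $h(\cdot)$, completing the proof.

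There is essentially no hard part here; the only point requiring minor care is the equivalence asserted in the statement between ``$Z$ is independent of $A$'' and ``$\{Z_a\}_{a\in[m]}$ and $Z$ are identically distributed.'' One should note that independence of $Z$ and $A$ is exactly the condition that the conditional law of $Z$ given $A=a$ equals the unconditional law of $Z$ for each $a$ in the support, which is the identical-distribution statement used above. Since the corollary is explicitly flagged as a direct consequence of Theorem \ref{thm:gen_ensemble}, I would keep the write-up to these few lines rather than reproving any total-variation machinery.
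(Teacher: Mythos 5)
Your proposal is correct and follows exactly the paper's own argument: independence of $Z$ and $A$ implies $Z$ and each $Z_a$ are identically distributed, hence $\tv{Z}{Z_a}=0$ for all $a\in[m]$, so one may take $\distdiff=0$ in Theorem \ref{thm:gen_ensemble} and conclude $\alpha'=0\cdot\bound=0$, i.e., perfect fairness. Your added remark spelling out why independence is equivalent to the identical-distribution condition is a minor clarification the paper leaves implicit, but the route is the same.
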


\begin{proof}
    Notice that, for any $a\in[m]$, $Z$ now shares the same distribution with $Z_a$, which indicates that the total
    variation distance between $Z$ and $Z_a$, $\tv{Z}{Z_a}$, is $0$. Therefore,
    the voting mechanism $\tilde{v}_Y(\bm{T}(X))$ is $0$-fair 
    (and equivalently perfectly fair) w.r.t. $(X,A,Y)$  and $h(\cdot)$.
\end{proof}

\section{Missing Proofs in Section \ref{sec:privacy_computation}}

\begin{theorem}
 For any dataset $D_S$,
\(
\displaystyle \max_{D_S \sim D_S'}\| \#_{A}(\bm{T}, D_S)   - \#_{A}(\bm{T}, D_S') \|_2 = \sqrt{2}. 
\)

\end{theorem}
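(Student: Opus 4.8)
The plan is to reduce the entire computation to the effect of a \emph{single} teacher, exploiting the disjoint-partition structure at the heart of the PATE ensemble. Fixing an arbitrary query $X$, I would first write each coordinate of the count vector explicitly as $\#_a(\bm{T}(X)) = \sum_{k=1}^K \mathbbm{1}\{\cM^{(k)}(X) = a\}$, so that every teacher deposits exactly one unit of mass into exactly one coordinate of $\#_A(\bm{T}, D_S)$.

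The central step is the observation that the teachers are trained on the non-overlapping portions $D_1, \dots, D_K$ of the data. Since $D_S \sim D_S'$ differ in a single individual's group attribute, that individual belongs to exactly one partition $D_{i^*}$; hence only the teacher $\cM^{(i^*)}$ can differ across the two ensembles, while every other teacher casts an identical vote on $X$. Consequently the difference $\#_A(\bm{T}, D_S) - \#_A(\bm{T}, D_S')$ is determined entirely by how this one teacher's prediction changes.

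I would then case-split on that lone vote. If $\cM^{(i^*)}$ predicts the same class under both datasets, the two count vectors coincide and the difference vanishes. If instead it moves its vote from a class $a$ to a distinct class $a'$, then coordinate $a$ drops by one, coordinate $a'$ rises by one, and all remaining coordinates are untouched; thus $\| \#_A(\bm{T}, D_S) - \#_A(\bm{T}, D_S') \|_2 = \sqrt{(-1)^2 + (+1)^2} = \sqrt{2}$. This yields the upper bound, and to secure equality I would exhibit a pair of adjacent datasets on which the affected teacher genuinely flips to a different class, attaining $\sqrt{2}$.

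I do not expect a substantive obstacle: this is the classical GNMAX sensitivity calculation of Papernot et al.~\cite{papernot:16}, transplanted to the group-attribute voting scheme. The one point deserving care is stating the single-teacher reduction precisely, since it is exactly the non-overlapping partition that prevents the sensitivity from scaling with the ensemble size $K$. The same argument, with $\cY$ replacing $\cA$, then delivers the companion $\sqrt{2}$ bound for $\#_Y$ used in $\mbox{SF}_T$-PATE.
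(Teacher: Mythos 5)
Your proposal is correct and follows essentially the same route as the paper's own proof: reduce to the single teacher $\cM^{(i)}$ whose training partition contains the modified individual (exploiting the disjoint partition), observe that at most two coordinates of the count vector change by $\pm 1$, and conclude the $\ell_2$ distance is at most $\sqrt{2}$. If anything, you are slightly more careful than the paper, since you explicitly handle the case where the affected teacher's vote does not change and note that attaining equality (the theorem claims $\max = \sqrt{2}$, not merely $\leq$) requires exhibiting adjacent datasets on which the teacher genuinely flips its prediction.
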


\begin{proof}
Since $D_S$ differs with $D_S'$ by at most one single sample and the dataset $D_S$ is partitioned into disjoint
parts, that  sample can only belong to a single part of the partition, say $D_i$. As a consequence,
the teacher classifier associated with $D_i$, which is $\cM^{(i)}(X)$, 
can change its prediction from $a \in \cA$ to $a' \in \cA$.  
Hence, $ \#_a(\bm{T}, D_S)$ decreases by 1 while $ \#_{a'}(\bm{T}, D_S')$ increases by 1  when switching from $D_S$ to $D_S'$.  To sum up, the two  vectors of voting counts $ \#_A(\bm{T}(X), D_S) $ and  $\#_A(\bm{T}(X), D_S') $ differ by at most two entries, which indicates that the $\ell_2$ sensitivity of reporting voting counts is $\sqrt{2}$.
\end{proof}

With  similar arguments, we can show that the sensitivity in reporting voting counts  in $\mbox{SF}_T$-PATE is also $\sqrt{2}$. 

\begin{theorem}
 For any dataset $D_S$, 
\(
\displaystyle
\max_{D_S \sim D_S'}\| \#_{Y}(\bm{T}, D_S)   - \#_{Y}(\bm{T}, D_S') \|_2 = \sqrt{2}. 
\)
\end{theorem}

\begin{proof}
When we change the group attributes of one sample $X \in D$, this can affect up to one fair teacher, say
$\cM^{(i)}$. Like Theorem \ref{thm:sensitivty_sf_s-pate}, in the worst case,
the fair classifier  $\cM^{(i)}$ can change its prediction from $y \in \cY$ to $y' \in \cY$. 
Hence, the vectors of voting counts of $\#_{Y}(\bm{T}, D_S) $ and $\#_{Y}(\bm{T}, D_S') $ differ by at most two entries. Therefore, the $\ell_2$ sensitivity of reporting counts in $\mbox{SF}_T$-PATE is $\sqrt{2}$. 
\end{proof}

% \newpage
% \bibliographystyle{named}
% \bibliography{ijcai22}

\section{Experimental Results: Additional Details}
We provide additional details of experimental settings, datasets,  and empirical results in this section. 

\subsection{Experimental settings}

\smallskip\noindent\textbf{Neural network architectures }
For tabular datasets, the underlying classifiers for all models are  feed-forward networks with two hidden layers and nonlinear ReLU activation function. We use the same network architecture for teacher models in both of our approaches.

For vision tasks on UTK-Face, the base classifier in to-be deployed models (student model in PATE-based methods) will be a Resnet 50  with 23M parameters. To speed up training time, we will use a simple convolutional network which  consists of two convolutional layers followed by three fully connected linear layer for teacher model in both   $\mbox{SF}_S$-PATE and $\mbox{SF}_T$-PATE. Another reason is due to the recent work \cite{cho2019efficacy} which shows student performance might degrade unexpectedly when distilled from oversized teachers. Hence, simpler models are preferred for teacher models.

\smallskip\noindent\textbf{Fair solver settings } 

For a fair comparison, all models used in experiments are implemented based on a fair solver which relies on Dual Lagrangian framework \cite{fioretto2020lagrangian}. We  set the fairness parameter $\alpha = 0$ for maximum fairness,  and train the fair solver in 200 epochs, with learning rate of 1e-3, batch size of 32, and the step-sizes to update multipliers is 1e-2.

\smallskip\noindent\textbf{Hyper-parameters settings }
For both of  our proposed  models,  we set $s$ which is  the number of samples in $\bar{D}$ for four tabular datasets and UTK Face to be 200 and 500 respectively. We set the trade-off parameters  $\lambda = 1e-3$. We optimize the ensemble size $K$ by using the public evaluation set.

For PF-LD, we optimize the clipping bounds in primal and dual step by using the public evaluation set. We set the noise multipliers in primal and dual step to be 5 and 50 respectively \cite{tran2021differentially}.

\smallskip\noindent\textbf{Computing Infrastructure} 
All of our experiments on tabular datasets are performed on a personal desktop machine with 16GB RAM and Core I7.

For experiments on UTKFace and ResNet 50 classifier, we ran experiments on a cluster of  300 GPUs with NVIDIA RTX A6000 graphics cards.

\smallskip\noindent\textbf{Software and Libraries}  
All models and experiments were written in Python 3.7. All neural network classifier models in our paper were implemented in Pytorch 1.5.0. 

The Tensorflow Privacy package was also employed for computing the privacy loss.  We also set privacy parameter $\delta= 1e-4$ in all of our experiments.  

% \smallskip\noindent\textbf{Codes} 
% The code associated with our experimental results  
% y available upon paper acceptance.

\smallskip\noindent\textbf{Datasets}  
Our paper evaluates all models based on the following four UCI datasets: \emph{Bank}, \emph{Income}, \emph{Parkinsons}, \emph{Credit card} and UTK Face dataset. 
A descriptions of each dataset is reported as follows:

\begin{enumerate}
    \item \textbf{Income} (Adult) dataset which has 45K samples of 52 features. The task is to predict if an 
    individual has low or high income, and the group attributes are defined by race: 
    \emph{White} vs \emph{Non-White} \cite{UCIdatasets}.
    % C.L. Blake and C.J. Merz. UCI repository of machine learning databases, 1998

    \item \textbf{Bank} dataset which has 11K samples of 17 features. The  classification task is to predict if a user subscribes 
    a term deposit or not and the group attributes are defined by age: \emph{people whose age is less than vs greater than 60 years old} \cite{UCIdatasets}. 
    
    \item \textbf{Parkinsons} dataset which has 6K samples of 18 features . The classification task is to predict if 
    a patient has total UPDRS score that exceeds the median value, 
    and the group attributes are defined by gender: \emph{female vs male} \cite{article}.  
       
    \item \textbf{Credit Card} dataset which has 30K samples of 21 features.  The task is to predict if 
    a customer defaults a loan or not. The group attributes are defined by gender:
    \emph{female vs male} \cite{creditdataset}. 
    
    \item \textbf{UTKFace} dataset which has more than 20K facial images of 48x48 pixels resolution.  The task is to predict one of five predefined races of a given facial image. The group attributes are defined based on the following 9 age ranges: 0-10, 10-15, 15-20, 20-25, 25-30, 30-40, 40-50, 50-60, 60-120.  \cite{DBLP:conf/bmvc/HwangPKDB20}.
    %[0, 10, 15, 20, 25, 30, 40, 50, 60, 120]
\end{enumerate}

On each dataset we perform standardization to render all input features with zero mean and unit standard deviation. Each dataset was partitioned into three disjoint subsets: train (70\%), evaluation (5\%)  and test set(25\%). We use the evaluation set as an available public dataset  to optimize the hyper-parameters such as clipping bounds in PF-LD or ensemble size $K$ in our proposed frameworks for a fair comparison.

\subsection{Accuracy, Privacy, and Fairness trade-off}

We provide additional results regarding the accuracy, fairness, and privacy tradeoffs of proposed models variants SF$_S$ and SF$_T$-PATE against the baseline models PF-LD and M on Bank, Income and Parkinson dataset here.  . Figure \ref{fig:generalized_dp} illustrates the accuracy (left subplots) and fairness violations $\xi(\theta, \bar{D})$ (right subplots) at varying of the privacy loss $\epsilon$ (x-axis). These figures again demonstrate that both SF-PATE variants achieve better accuracy/fairness tradeoffs for various privacy parameters than baseline models.

\begin{figure*}
\centering
\begin{subfigure}[b]{0.32\textwidth}
\includegraphics[width = 1.0\linewidth]{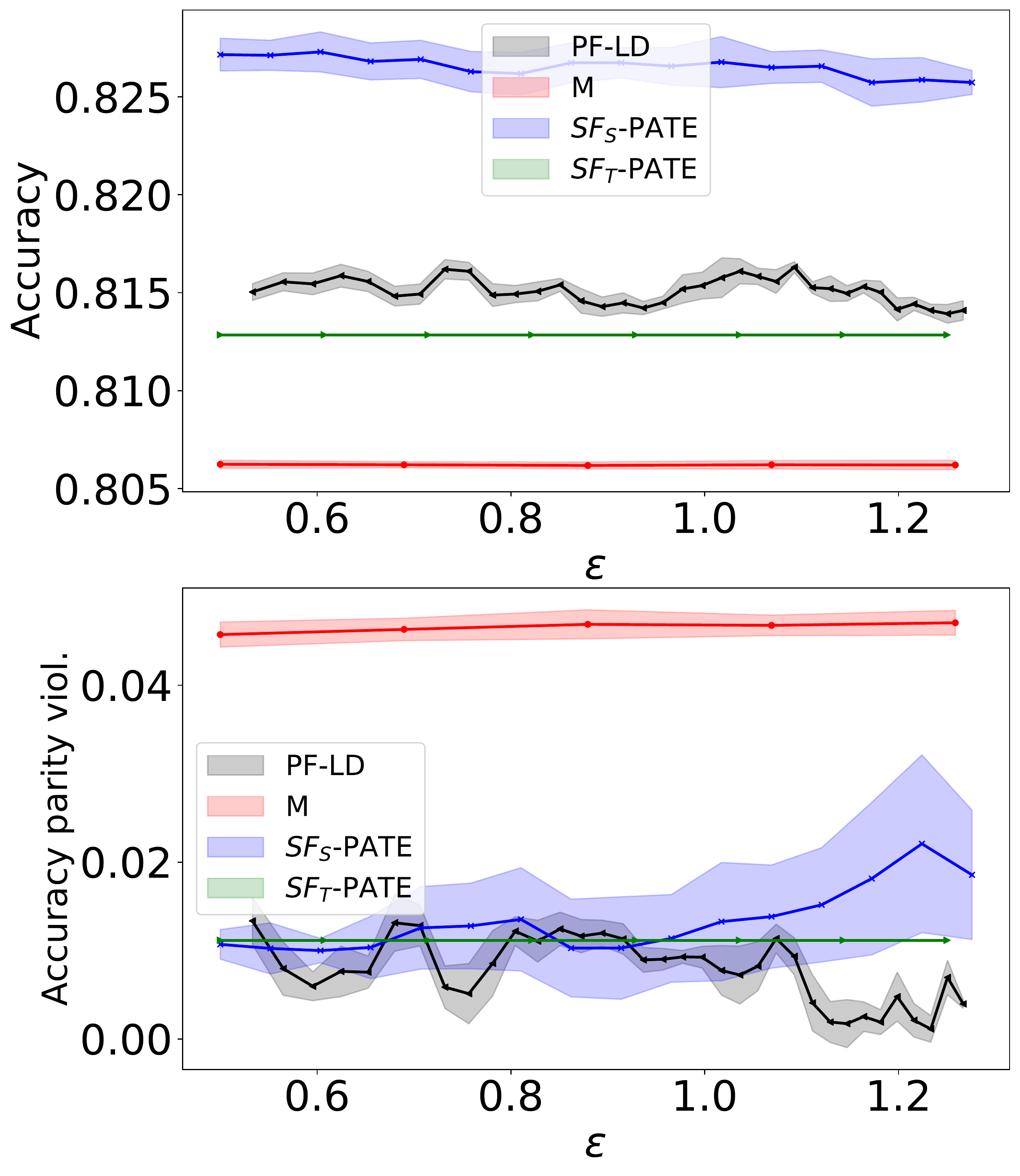}
\caption{Accuracy parity fairness}
\end{subfigure}
\begin{subfigure}[b]{0.32\textwidth}
\includegraphics[width = 1.0\linewidth]{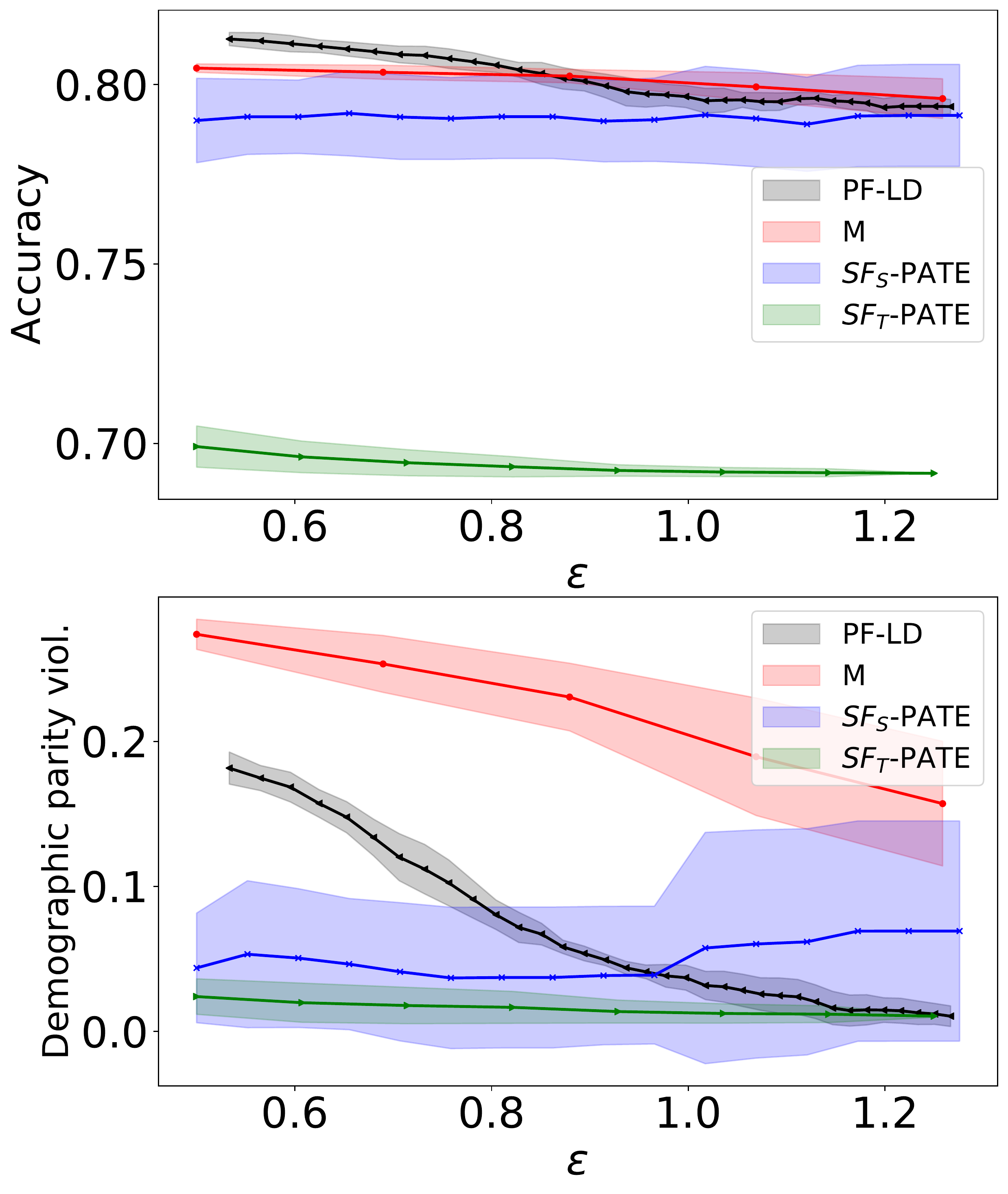}
\caption{Demographic parity fairness}
\end{subfigure}
\begin{subfigure}[b]{0.32\textwidth}
\includegraphics[width = 1.0\linewidth]{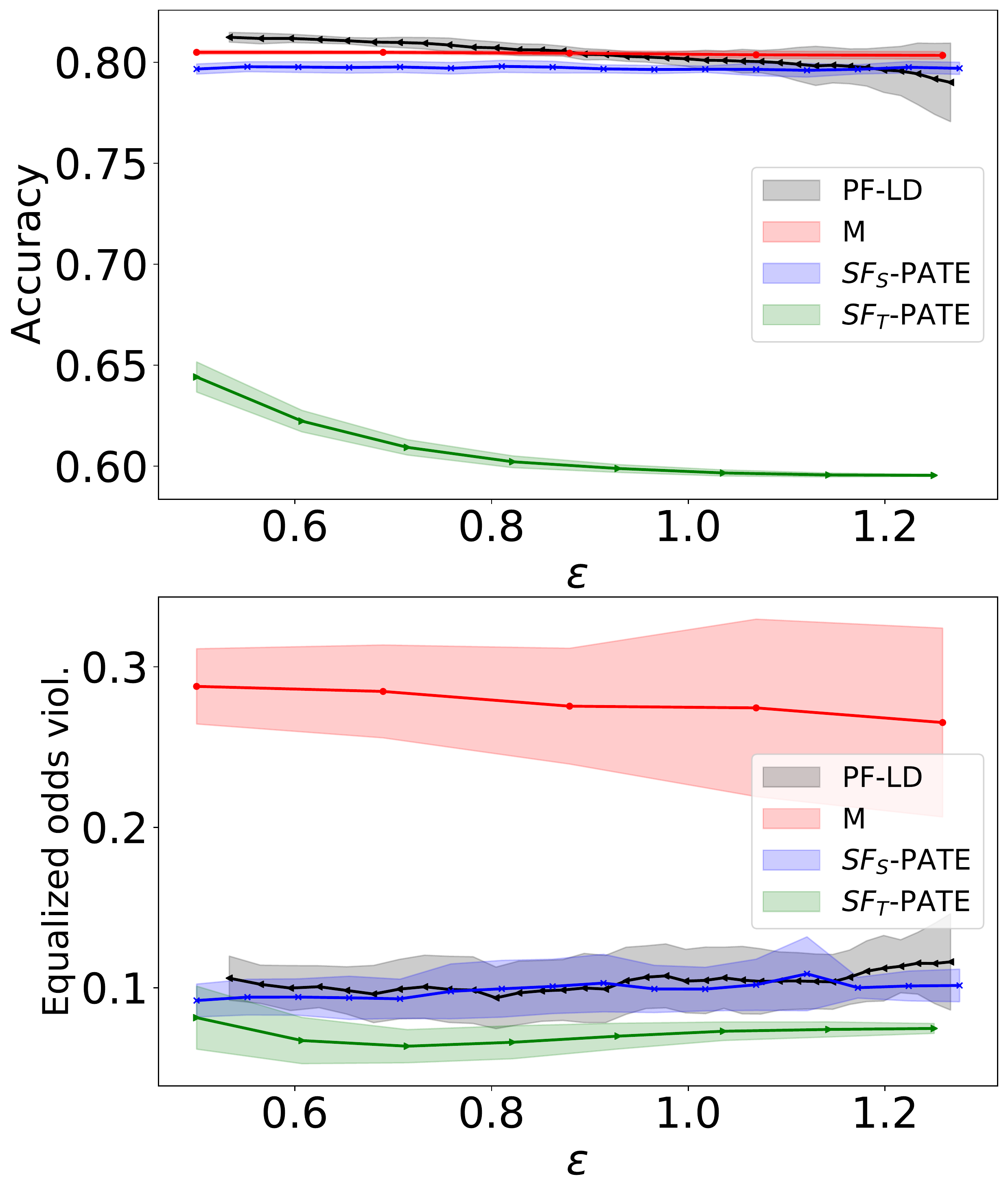}
\caption{Equalized odds fairness}
\end{subfigure}
\caption{Bank data}
\label{fig:bank_compare}
\end{figure*}

\begin{figure*}
\centering
\begin{subfigure}[b]{0.32\textwidth}
\includegraphics[width = 1.0\linewidth]{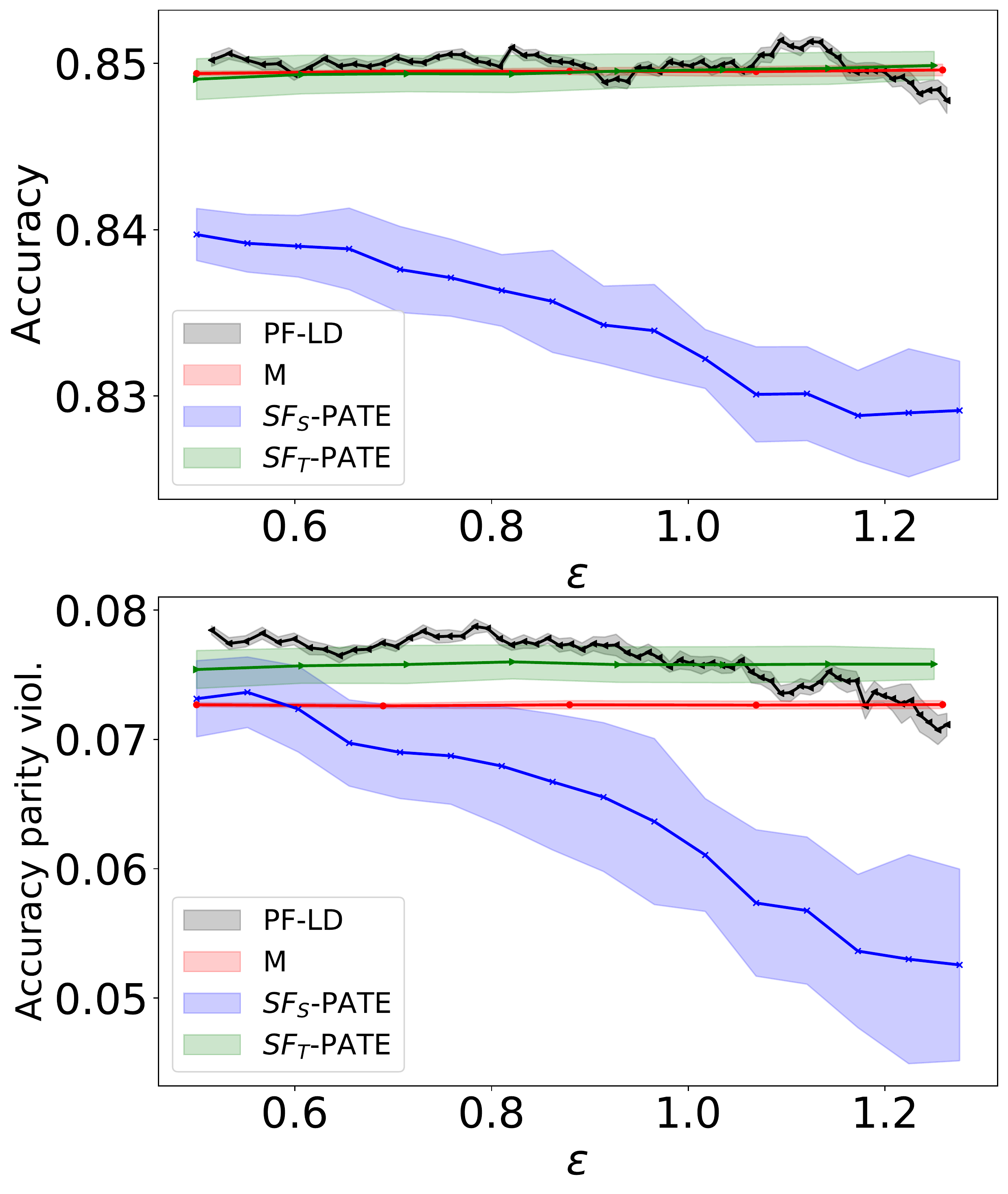}
\caption{Accuracy parity fairness}
\end{subfigure}
\begin{subfigure}[b]{0.32\textwidth}
\includegraphics[width = 1.0\linewidth]{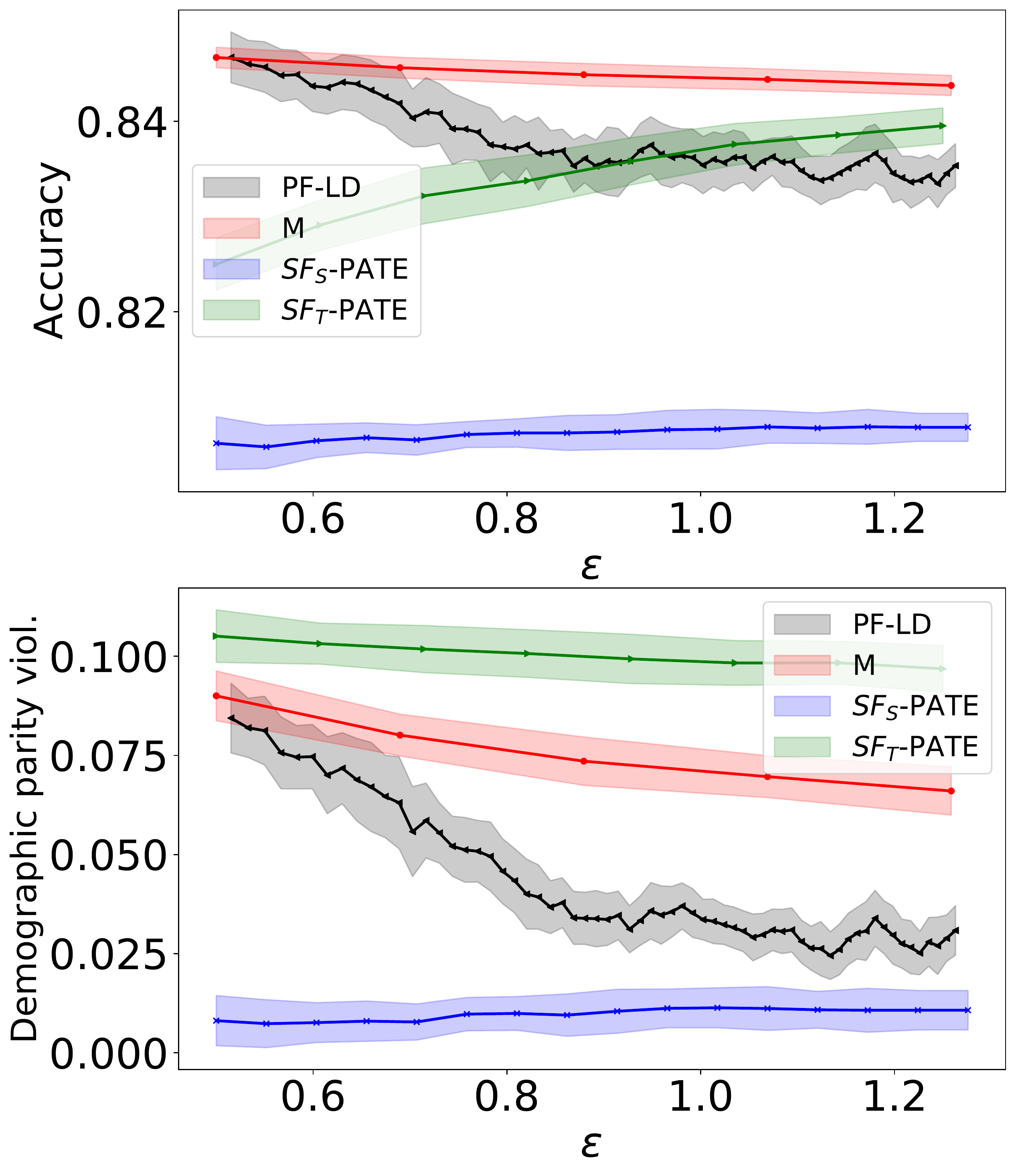}
\caption{Demographic parity fairness}
\end{subfigure}
\begin{subfigure}[b]{0.32\textwidth}
\includegraphics[width = 1.0\linewidth]{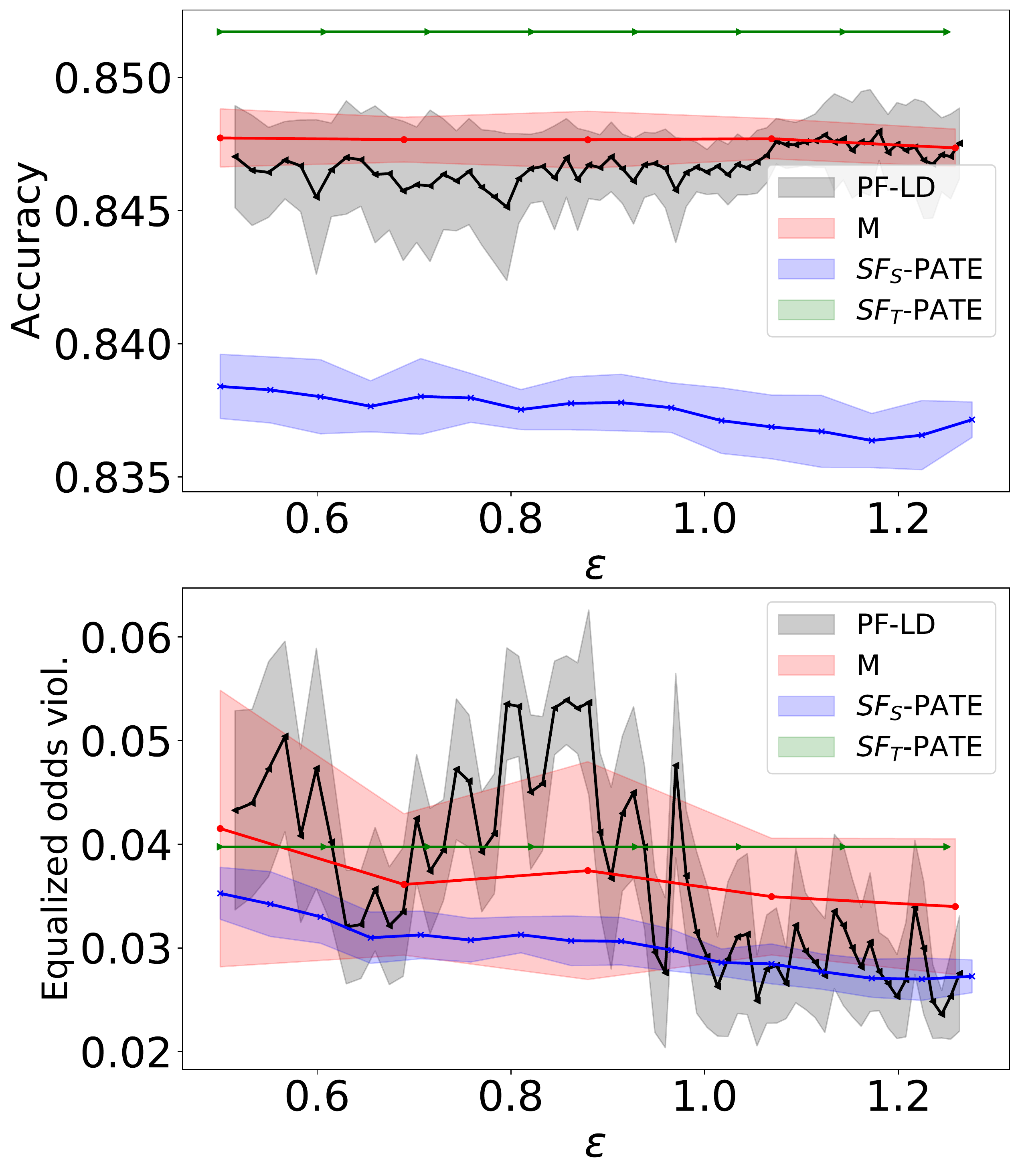}
\caption{Equalized odds fairness}
\end{subfigure}
\caption{Income data}
\label{fig:income_compare}
\end{figure*}

\begin{figure*}
\centering
\begin{subfigure}[b]{0.32\textwidth}
\includegraphics[width = 1.0\linewidth]{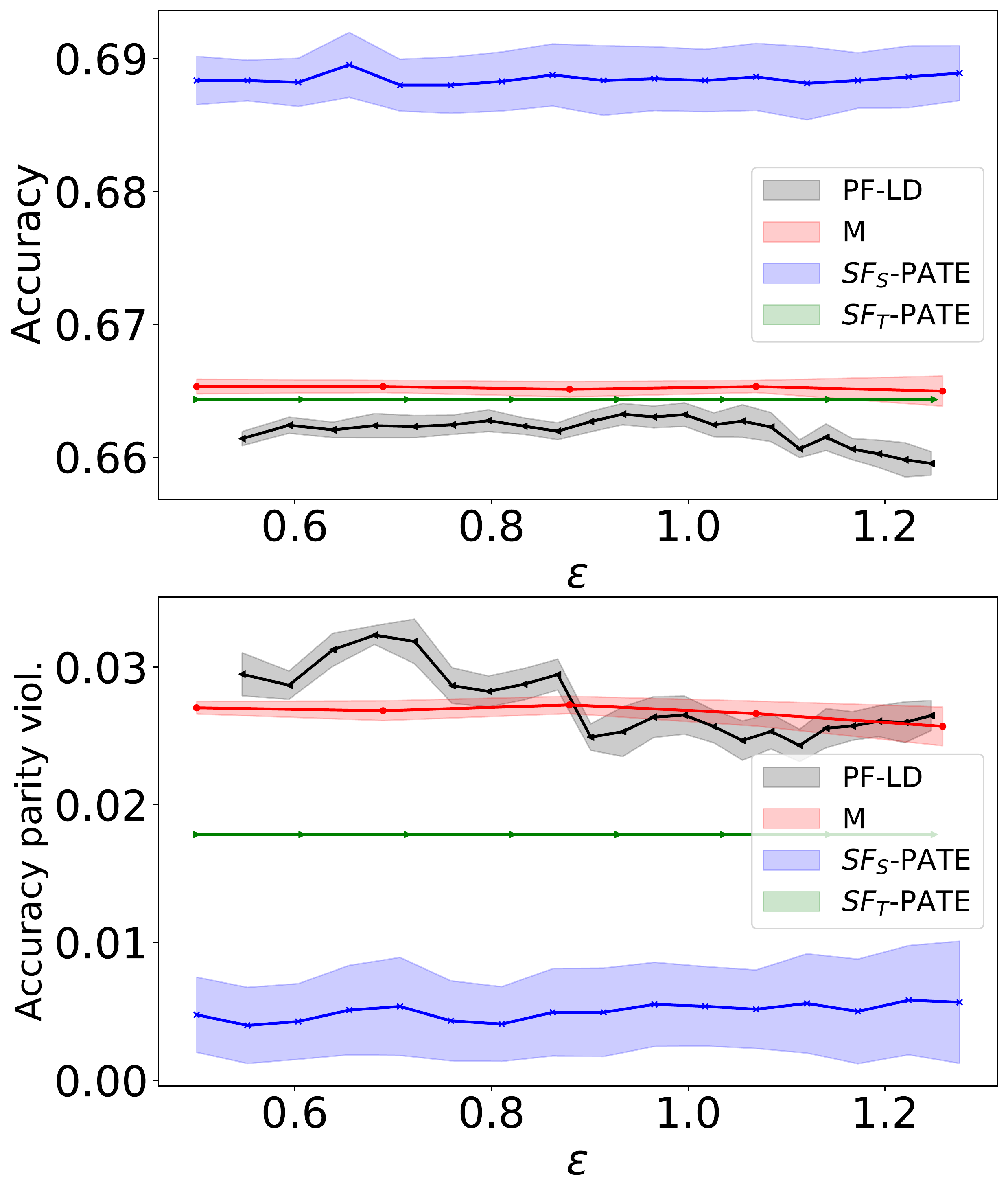}
\caption{Accuracy parity fairness}
\end{subfigure}
\begin{subfigure}[b]{0.32\textwidth}
\includegraphics[width = 1.0\linewidth]{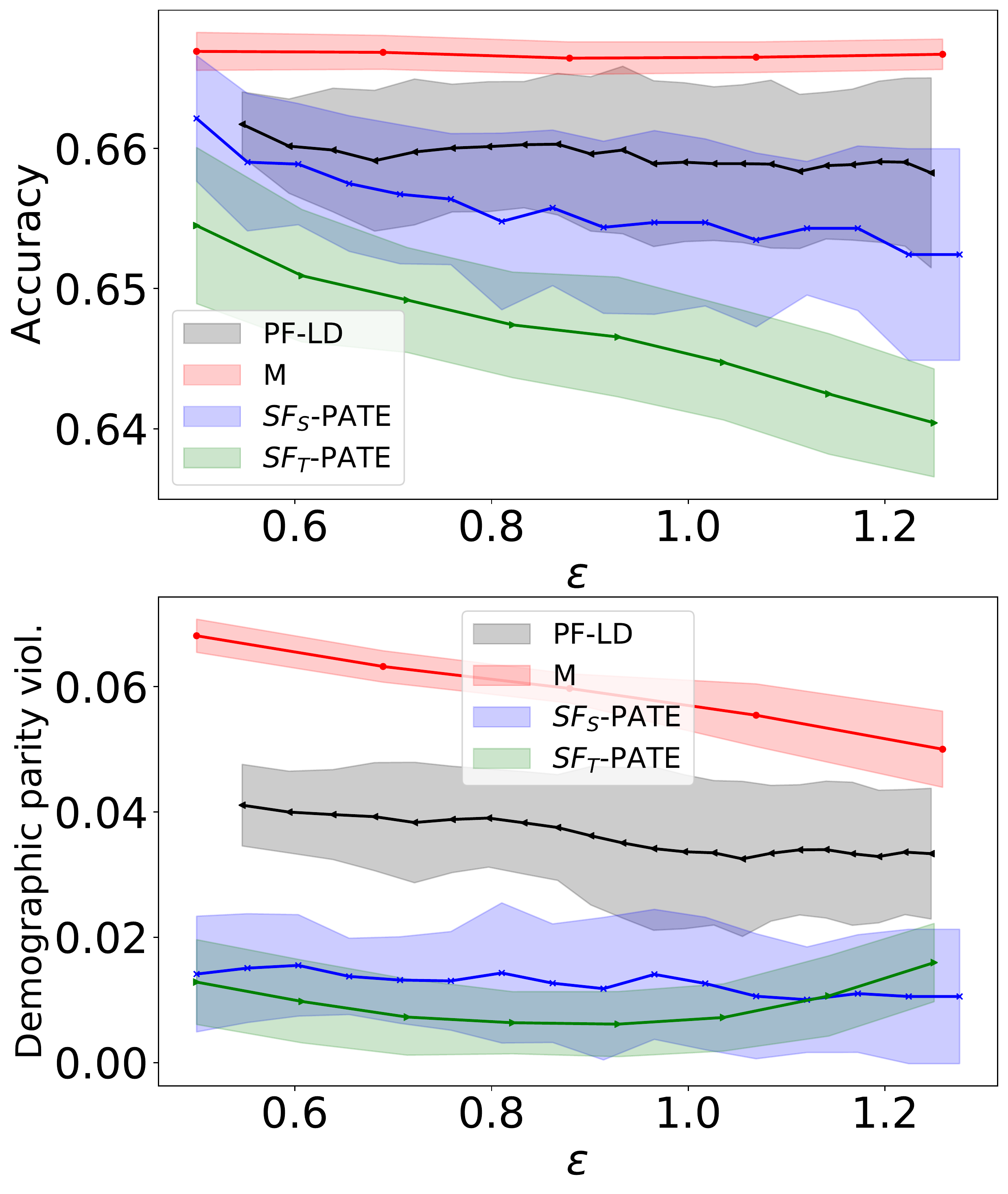}
\caption{Demographic parity fairness}
\end{subfigure}
\begin{subfigure}[b]{0.32\textwidth}
\includegraphics[width = 1.0\linewidth]{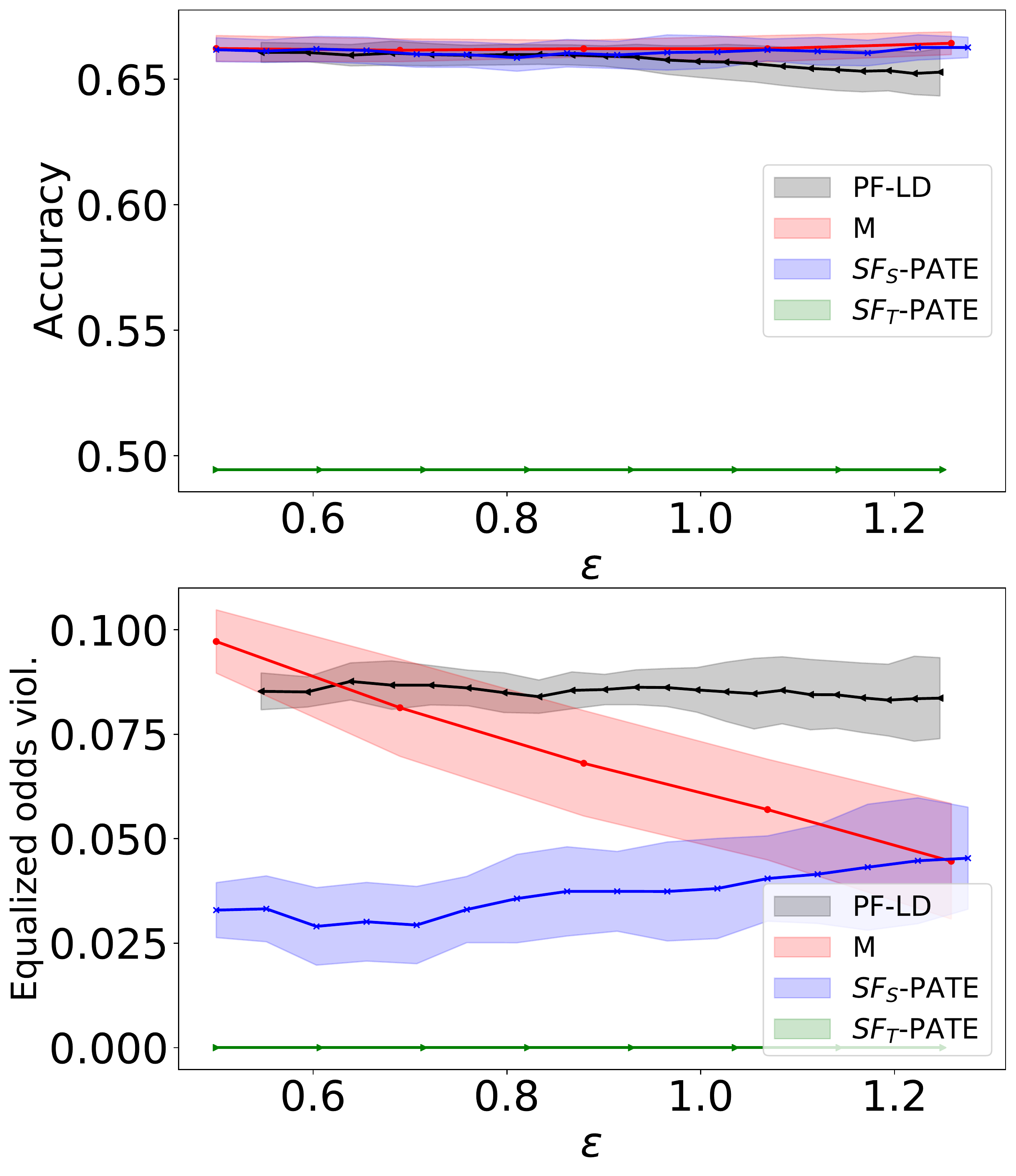}
\caption{Equalized odds fairness}
\end{subfigure}
\caption{Parkinson data}
\label{fig:parkinson_compare}
\end{figure*}

% \nando{Cuong, please fix and place the sentence below:}

% Since for this image dataset, it is not appropriate to define what should be positive labels, we can only run fair models for accuracy parity metric. 
% For $\mbox{SF}_S$-PATE and $\mbox{SF}_T$-PATE we set $K = 200$, $\lambda = 1e-3$ and the student models are trained on $n_U = 500$ randomly selected images. 

\end{document}